\newcommand\norm[1]{\left\lVert#1\right\rVert}
\newcommand{\circledone}{\mathrel{\mathop{\tikz[baseline=(char.base)]{\node[shape=circle,draw,inner sep=0.7pt,font=\tiny] (char) {1};}}}}
\newcommand{\beginsupplement}{%
        \setcounter{table}{0}
        \renewcommand{\thetable}{S\arabic{table}}%
        \setcounter{figure}{0}
        \renewcommand{\thefigure}{S\arabic{figure}}%
}
\def\ceil#1{\lceil #1 \rceil}
\def\1{\bm{1}}
\def\vtheta{{\bm{\theta}}}
\def\vf{{\bm{f}}}
\def\vg{{\bm{g}}}
\def\vh{{\bm{h}}}
\def\vs{{\bm{s}}}
\def\vx{{\bm{x}}}
\def\vz{{\bm{z}}}
\def\vtheta{{\bm{\theta}}}
\def\vvarphi{{\bm{\varphi}}}
\def\mA{{\bm{A}}}
\def\mB{{\bm{B}}}
\def\mD{{\bm{D}}}
\def\mF{{\bm{F}}}
\def\mG{{\bm{G}}}
\def\mI{{\bm{I}}}
\def\mJ{{\bm{J}}}
\def\mW{{\bm{W}}}
\def\mX{{\bm{X}}}
\def\mZ{{\bm{Z}}}
\def\mSigma{{\bm{\Sigma}}}
\DeclareMathAlphabet{\mathsfit}{\encodingdefault}{\sfdefault}{m}{sl}
\SetMathAlphabet{\mathsfit}{bold}{\encodingdefault}{\sfdefault}{bx}{n}
\theoremstyle{plain}
\newtheorem{theorem}{Theorem}
\newtheorem{proposition}[theorem]{Proposition}
\theoremstyle{definition}
\theoremstyle{remark}
\newtheorem{remark}{Remark}
\icmltitlerunning{Generalized Teacher Forcing for Learning Chaotic Dynamics}
\begin{document}

\twocolumn[
\icmltitle{Generalized Teacher Forcing for Learning Chaotic Dynamics}



\icmlsetsymbol{equal}{*}

\begin{icmlauthorlist}
\icmlauthor{Florian Hess}{zi,hd,str}
\icmlauthor{Zahra Monfared}{zi,str}
\icmlauthor{Manuel Brenner}{zi,hd}
\icmlauthor{Daniel Durstewitz}{zi,hd,iwr}
\end{icmlauthorlist}

\icmlaffiliation{zi}{Department of Theoretical Neuroscience, Central Institute of Mental Health, Medical Faculty Mannheim, Heidelberg University, Mannheim, Germany}
\icmlaffiliation{hd}{Faculty of Physics and Astronomy, Heidelberg University, Heidelberg, Germany}
\icmlaffiliation{str}{Cluster of Excellence STRUCTURES, Heidelberg University, Heidelberg, Germany}
\icmlaffiliation{iwr}{Interdisciplinary Center for Scientific Computing, Heidelberg University, Heidelberg, Germany}

\icmlcorrespondingauthor{Florian Hess}{Florian.Hess@zi-mannheim.de}
\icmlcorrespondingauthor{Daniel Durstewitz}{Daniel.Durstewitz@zi-mannheim.de}

\icmlkeywords{Recurrent Neural Networks, Dynamical Systems, Nonlinear Dynamics, Attractors, Chaos, BPTT, Exploding and Vanishing Gradient Problem, Teacher forcing, EEG, ECG, Neural ODE}

\vskip 0.3in
]



\printAffiliationsAndNotice{}  

\begin{abstract}
Chaotic dynamical systems (DS) are ubiquitous in nature and society. Often we are interested in reconstructing such systems from observed time series for prediction or mechanistic insight, where by reconstruction we mean learning geometrical and invariant temporal properties
of the system in question (like attractors). However, training reconstruction algorithms like recurrent neural networks (RNNs) on such systems by gradient-descent based techniques faces severe challenges. This is mainly due to exploding gradients caused by the exponential divergence of trajectories in chaotic systems. Moreover, for (scientific) interpretability we wish to have as low dimensional reconstructions as possible, preferably in a model which is mathematically tractable. Here we report that a surprisingly simple modification of teacher forcing leads to provably strictly all-time bounded gradients in training on chaotic systems, and, when paired with a simple architectural rearrangement of a tractable RNN design, piecewise-linear RNNs (PLRNNs), allows for faithful reconstruction in spaces of at most the dimensionality of the observed system. We show on several DS that with these amendments we can reconstruct DS  better than current SOTA
algorithms, in much lower dimensions. Performance differences were particularly compelling on real world data with which most other methods severely struggled. This work thus led to a simple yet powerful DS reconstruction algorithm which is highly interpretable at the same time.
\end{abstract}

\section{Introduction}
In many scientific and engineering settings we are interested in learning the dynamics of an unknown, or hard to tackle, underlying DS that we have observed through a set of time series measurements. This may, for instance, be temperature measurements to assess climate dynamics, tracking infected cases for epidemiological dynamics, or neural recordings from the brain. Having a faithful model of the underlying dynamics enables rigorous scientific and mathematical analysis, as well as optimal predictions of the system's future temporal evolution. For many complex DS such as the brain or social networks we have only rudimentary and insufficient knowledge about the governing equations. Given the success of deep learning in so many areas and disciplines, there has been a burgeoning interest in recent years in purely data-driven approaches to DS reconstruction to bypass the traditional tedious and protracted scientific model building process \citep{brunton2016discovering, chen2018neural,  koppe2019identifying, vlachas2020backpropagation, schmidt2021identifying, brenner22, lejarza2022data}.

In DS reconstruction one aims to retrieve from data an (approximate) model of the underlying dynamics that encapsulates all its geometrical and invariant temporal properties, such as attractor states and other invariant sets (for some applications, even just capturing the topological structure of the underlying DS may be sufficient; \citet{takens1981detecting, sauer1991embedology}).
Training machine learning systems such as RNNs on time series observations from DS poses a number of severe challenges, however. In fact, DS reconstruction algorithms have so far mostly been probed only on \textit{simulated} DS \citep{brunton2016discovering, vlachas2018data, pathak2018model, champion2019data, otto_linearly_2019,bakarji2022discovering}, such as the Lorenz-63 \citep{lorenz1963deterministic} and Lorenz-96 \citep{lorenz1996predictability} models of atmospheric convection, or Navier-Stokes equations for turbulent flow. While evaluation of algorithms on such ground truth 
benchmarks
is obviously important, the relative sparsity of real-world applications demonstrates there is still a lot of ground to make (cf. \citet{brenner22, Mikhaeil22}). One important challenge here is that most complex DS are inherently chaotic \citep{olsen_oscillations_1988, van_vreeswijk_chaos_1996, durstewitz2007dynamical, duarte_quantifying_2010, faggini_chaotic_2014,  kesmia_control_2020, mangiarotti_chaos_2020, inoue_transient_2021, kamdjeu_kengne_dynamics_2021}. Chaotic DS harbor trajectories diverging exponentially fast, which leads to exploding loss gradients when training with the most common, scalable gradient-based techniques \citep{engelken2020lyapunov, Mikhaeil22}. Recent theoretical and empirical results emphasized that this a \textit{principle} problem which cannot be addressed through specifically designed network architectures
\citep{Mikhaeil22},\footnote{For instance, both classical solutions like LSTM \citep{hochreiter1997long} or GRU \citep{cho2014properties} as well as more recent architectures, like coRNN \citep{rusch2020coupled}, do not offer any way out of this conundrum
\citep{Mikhaeil22}.} but needs to be addressed at the level of training algorithms. At the same time, especially for detailed scientific analysis, we are interested in reconstruction models that are tractable and reproduce the observed data (and the underlying dynamics) in as low dimensions as possible.\footnote{Note that classical delay embedding reconstruction theorems \citep{takens1981detecting, sauer1991embedology} demand that the reconstruction space is larger than two times the box-counting dimension of the underlying attractor, but latent variable models or deeper architectures may have other means to `fill in' missing dimensions.} 

Here we tackle both these challenges. In particular, we prove that a simple amendment of teacher forcing (TF), a classical technique to keep trajectories on track while training, leads to strictly contained loss gradients, for arbitrary time horizons, without diminishing a model's ability to capture chaotic dynamics. While simple in spirit, this solves an outstanding problem in the field
\citep{Mikhaeil22} and leads to training results that surpass a wide range of previous SOTA techniques, often by large margins, especially on real-world data. We also rearrange the tractable PLRNN structure, previously suggested for DS reconstruction \citep{brenner22}, into a `shallow' one-hidden-layer design. While this yields an almost standard multi-layer-perceptron (MLP), 
we observed that -- somewhat surprisingly -- it allows for reconstruction of DS with at most as many latent dynamical variables as those of the underlying system, and makes the resulting architecture particularly well-suited for our specific variant of TF. Meanwhile, we show the resulting model can still be rewritten in standard PLRNN form, preserving its tractable design.

\section{Related Work}
\paragraph{Dynamical Systems Reconstruction} 
DS reconstruction aims at producing a generative model of an unknown DS underlying a set of observed time series variables. By `generative' here we mean that, after successful training, the (not necessarily probabilistic) model should be capable of producing time series with the same temporal and geometrical properties as those produced by the true (but unknown) DS, where `geometrical' refers to the geometry of dynamical objects (like attractors) in state space and `temporal' includes invariant properties like a system's power spectrum. Commonly these models work by approximating the unknown governing equations (or their flow), e.g. through a sufficiently expressive library of basis functions \cite{brunton2016discovering, lejarza2022data}, or by RNNs \citep{vlachas2018data, pathak2018model, koppe2019identifying, vlachas2020backpropagation, schmidt2021identifying, brenner22}, 
which are universal approximators of DS \citep{funahashi1993approximation, kimura1998learning, hanson2020universal}. Algorithms for training RNNs on DS reconstruction problems were based on probabilistic techniques like Expectation-Maximization and nonlinear Kalman filters \citep{voss2004nonlinear, koppe2019identifying, zhao2020variational} or variational inference \citep{hernandez2020nonlinear, kramer2022reconstructing}, but the to date most successful methods (cf. \citet{brenner22}) simply relied on gradient-based procedures like Back-Propagation-Through-Time (BPTT; \citet{rumelhart1986learning}). Continuous-time RNNs like Neural ODEs \citep{chen2018neural} were also widely tested for DS reconstruction \citep{raissi2018deep}, with extensions to systems of partial differential equations. Other recent ideas include Fourier Neural Operators for approximating spatially extended systems in function space \citep{li2021fourier}, RNNs or other DS approximators embedded within deep auto-encoders to extract suitable coordinate transformations or lower-dimensional state space representations \citep{champion2019data, bakarji2022discovering}, and approaches for extrapolating to `unseen' dynamical regimes \citep{patel2022using, pmlr-v162-kirchmeyer22a, ricci2022phase2vec}.
For scientific applications, mathematical tractability and dynamical interpretability\footnote{Note we mean `interpretability' in a DS sense, i.e. such that topological and geometrical properties of the resulting system can easily be analyzed, visualized, and understood.} of trained DS reconstruction models is imperative. To these ends, often locally linear techniques like piecewise-linear RNNs (PLRNNs; \citet{durstewitz2017state,koppe2019identifying})
, Koopman operator theory \citep{lusch2018deep, brunton2022modern}
or co-training of switching linear DS \citep{smith2021reverse}, were designed, but most of these require moving to a higher-dimensional space. We show that this latter issue can be avoided by simply reshaping PLRNNs
into a $1$-hidden-layer structure.

\vspace{-.1cm}
\paragraph{Controlling exploding gradients}
The exploding and vanishing gradient problem has long been recognized as a severe challenge in training RNNs on time series prediction or classification tasks which involve large time spans between relevant pieces of information \citep{bengio1994learning, hochreiter2001gradient}. For DS, a related problem is that these may contain dynamics evolving on widely differing, including very slow, time scales \cite{schmidt2021identifying}. Many approaches tried to address this issue by designing specific RNN architectures \citep{hochreiter1997long, cho2014properties, rusch2020coupled, rusch2021coupled, rusch2022lem} or imposing specific constraints on RNN parameters \citep{arjovsky2016unitary, chang2018antisymmetricrnn,erichson2021lipschitz}. However, all of these either severely limit expressiveness such that chaotic dynamics cannot be learned to begin with (because constraints or design prevent maximum Lyapunov exponents from exceeding $0$), or they still struggle with exploding gradients and hence time series from chaotic systems \cite{Mikhaeil22}. \citet{Mikhaeil22} therefore suggested to address the problem in training by controlling trajectory divergence through `sparse TF', based on the older control-theoretic idea of TF \citep{pineda1988dynamics, pearlmutter1989learning, williams1989learning, jordan1990attractor}. Sparse TF (STF) resets RNN states to control values inferred from the data, thus pulling diverging trajectories back on track and cutting off exploding gradients, at times determined from the underlying system's Lyapunov spectrum. However, this requires knowledge or empirical estimates of the system's Lyapunov exponents, and -- on the other hand -- it does not strictly avoid exploding gradients either, just attempts to strike an optimal balance between gradient divergence and not loosing relevant longer time scales. Here we suggest a simple modification of TF that solves both these problems such that gradients are \textit{strictly} contained yet chaotic dynamics can be learned, without
any knowledge required of the underlying DS' maximum Lyapunov exponent. This is one of the main results of the present work.

\section{Theoretical Analysis \& Methods}
This section will first discuss the basic problem in training RNNs on time series from chaotic systems. We will then outline a solution and its mathematical foundation, before addressing the specific RNN architecture for DS reconstruction championed in this work.
\subsection{Problem setting: Loss gradients and chaotic dynamics}
Our exposition here summarizes the key observations made in \citet{Mikhaeil22}. Most RNNs are parameterized discrete-time recursive maps of the form
\begin{equation}\label{eq:RNN_DS}
    \vz_t = \mF_\vtheta\big(\vz_{t-1}, \vs_t \big),
\end{equation}
with states $\vz_t$, optional external inputs $\vs_t$, and parameter set $\vtheta$. If unconstrained, depending on its set of parameters, an RNN may exhibit any type of limit dynamics, like convergence to fixed points, cycles of any order, quasi-periodic behavior, or chaos (in fact, RNNs are dynamically universal, cf. \citet{hanson2020universal}). If an RNN is used to reconstruct a chaotic system, it needs to be chaotic itself (otherwise the reconstruction would have failed), which entails that its 
maximum Lyapunov exponent needs to be larger than $0$. The Lyapunov exponent quantifies the exponential divergence of initially nearby trajectories. 
Defining the Jacobian of \eqref{eq:RNN_DS} by
\begin{align}\label{eq:jacobian}
    \mJ_t := \frac{\partial\mF_\vtheta\big(\vz_{t-1}, \vs_t \big)}{\partial\vz_{t-1}} = \frac{\partial\vz_t}{\partial\vz_{t-1}},
\end{align}
the maximum Lyapunov exponent of an RNN orbit $\mZ = \{\vz_1, \vz_2, \dots, \vz_T, \dots \}$  is given through the product of Jacobians $\mJ_t$ along the orbit by
\begin{align}\label{eq:lyap}
\lambda_{max} := \lim_{T \rightarrow \infty} \frac{1}{T} 
\log   \norm{ \ \prod_{r=0}^{T-2}  \mJ_{T-r} \ }_2,
\end{align}
where $\norm{\cdot}_2$ denotes the spectral norm.
\citet{Mikhaeil22} showed that the problem of training RNNs on chaotic time series using gradient descent based algorithms is ill-posed, as the condition $\lambda_{max} > 0$ will inevitably lead to diverging loss gradients.

Let us illustrate this with the most common algorithm used for training RNNs, Backpropagation Through Time (BPTT; \citet{rumelhart1986learning, werbos1990backpropagation}). Given a loss function $\mathcal{L} = 
\sum_{t=1}^T \mathcal{L}_t(\vz_t, \ \bar{\vz}_t)$ where $\vz_t$ are RNN-generated and $\bar{\vz}_t$ 
target states,
the BPTT algorithm employs the chain rule along the RNN unrolled in time to compute loss gradients w.r.t. model parameters 
$\theta_i \in \vtheta$,
\begin{align}\label{eq:BPTT}
    \frac{\partial{\mathcal{L}}}{\partial{\theta_i}} = 
    \sum_{t=1}^T \frac{\partial{\mathcal{L}_t}}{\partial{\theta_i}} \hspace{.4cm} \text{with} \hspace{.5cm} \frac{\partial{\mathcal{L}_t}}{\partial{\theta_i}} = \sum_{r=1}^t
    \frac{\partial{\mathcal{L}_t}}{\partial{\vz_t}}
     \frac{\partial{\bm{z}_t}}{\partial{\bm{z}_r}} \frac{\partial^+{\bm{z}_r}}{\partial{\theta_i}},
\end{align}
where $\partial^+$ denotes the immediate derivative. The crucial observation now is that Eq. \eqref{eq:BPTT} contains the same product of Jacobians,
\begin{align}\label{eq:prod_jacobians}
\begin{split}
\frac{\partial{\bm{z}_t}}{\partial{\bm{z}_r}} &= \frac{\partial{\bm{z}_t}}{\partial{\bm{z}_{t-1}}} \frac{\partial{\bm{z}_{t-1}}}{\partial{\bm{z}_{t-2}}} \cdots \frac{\partial{\bm{z}_{r+1}}}{\partial{\bm{z}_{r}}} \\
&= \bm{J}_t\bm{J}_{t-1} \cdots \bm{J}_{r+1} = \prod_{k=0}^{t-r-1}\bm{J}_{t-k}, 
\end{split}
\end{align}
that occurs in the definition of the maximum Lyapunov exponent in Eq. \eqref{eq:lyap}. As \citet{Mikhaeil22} strictly prove, this entails exponentially exploding loss gradients for $T \rightarrow \infty$ when training RNNs on chaotic systems whose behavior they are supposed to reproduce. 
In practice,
unreliable and ill-behaved training sets in even for moderate sequence lengths $T$, and architectures like LSTM designed to control gradient flows or simple gradient clipping techniques are not sufficient to contain the problem \cite{Mikhaeil22}. 
\subsection{Generalized Teacher Forcing}
What is needed in addition is a procedure for forcing diverging trajectories back onto their targets while training for DS reconstruction. This has been recognized long ago, and the classical control technique here is TF 
\citep{pineda1988dynamics, pearlmutter1989learning, williams1989learning, jordan1990attractor}.
TF replaces current RNN states \textit{after} computing the loss by data-inferred values, classically at each time step
, or at times strategically chosen according to the system's maximum Lyapunov exponent in sparse-TF \cite{Mikhaeil22}. 
Motivated by the problem of unstable solutions after RNN training with classical TF, \citet{doya1992bifurcations} suggested to linearly interpolate between RNN-generated states $\vz_t$ and target states $\bar{\vz}_t$:
\begin{align}\label{eq:GTF}
    \tilde{\vz}_t := (1-\alpha)\vz_t + \alpha \bar{\vz}_t,
\end{align}
with $0 \leq \alpha \leq 1$. This simple idea, which we will refer to as \textit{Generalized Teacher Forcing} (GTF), has in fact not been systematically studied so far. It turns out that GTF, with the right choice of $\alpha$, can \textit{fully rectify} the exploding gradients problem in learning chaotic dynamics. 

Now let us unwrap how GTF \eqref{eq:GTF} will impact RNN \eqref{eq:RNN_DS} training. The state replacement, Eq. \eqref{eq:GTF}, is performed prior
to applying RNN map $\mF_\vtheta$ at each training time step, i.e. $\vz_t = \mF_\vtheta(\tilde{\vz}_{t-1})$.
According to the chain rule, this leads to the following factorization of Jacobians $\mJ_t$:
\begin{align}\nonumber
    \bm{J}_t &\, = \, \frac{\partial{\bm{z}_t}}{\partial{\bm{z}_{t-1}}} = \frac{\partial{\bm{z}_t}}{\partial{\tilde{\vz}_{t-1}}}\frac{\partial{\tilde{\vz}_{t-1}}}{\partial{\bm{z}_{t-1}}}
    = \frac{\partial{\mF_\vtheta(\tilde{\vz}_{t-1})}}{\partial{\tilde{\vz}_{t-1}}} \frac{\partial{\tilde{\vz}}_{t-1}}{\partial{\bm{z}_{t-1}}}
    \\\label{eq:GTF_jacobian}
    & \, = \, (1-\alpha) \tilde{\mJ}_t,
\end{align}
where $\tilde{\mJ}_t$ is the Jacobian of $\mF_\vtheta$ evaluated at the forced state $\tilde{\vz}_{t-1}$. Plugging into \eqref{eq:prod_jacobians} gives the following expression for the product of Jacobians under GTF:
\begin{align}\label{eq:jac_chain_GTF}
\begin{split}
    \frac{\partial{\bm{z}_t}}{\partial{\bm{z}_r}} = \prod_{k=0}^{t-r-1}\bm{J}_{t-k} &= \prod_{k=0}^{t-r-1}(1-\alpha)\tilde{\bm{J}}_{t-k} \\ &= (1-\alpha)^{t-r}\prod_{k=0}^{t-r-1}\tilde{\bm{J}}_{t-k}.
\end{split}
\end{align}
For $\alpha = 0$ (no forcing) this simply yields vanilla BPTT \eqref{eq:prod_jacobians}, while for $\alpha = 1$ Eq. \eqref{eq:jac_chain_GTF} evaluates to zero and there will be no gradient propagation. For values in between, GTF controls the Jacobian product norm $\norm{\frac{\partial{\bm{z}_t}}{\partial{\bm{z}_r}}}$ as illustrated in Fig. \ref{fig:jac_norms_gtf}. Fig. \ref{fig:gtf_scheme} summarizes the GTF principle and notation.

\begin{figure}[!htb]
    \centering
     \includegraphics[width=0.99\linewidth]{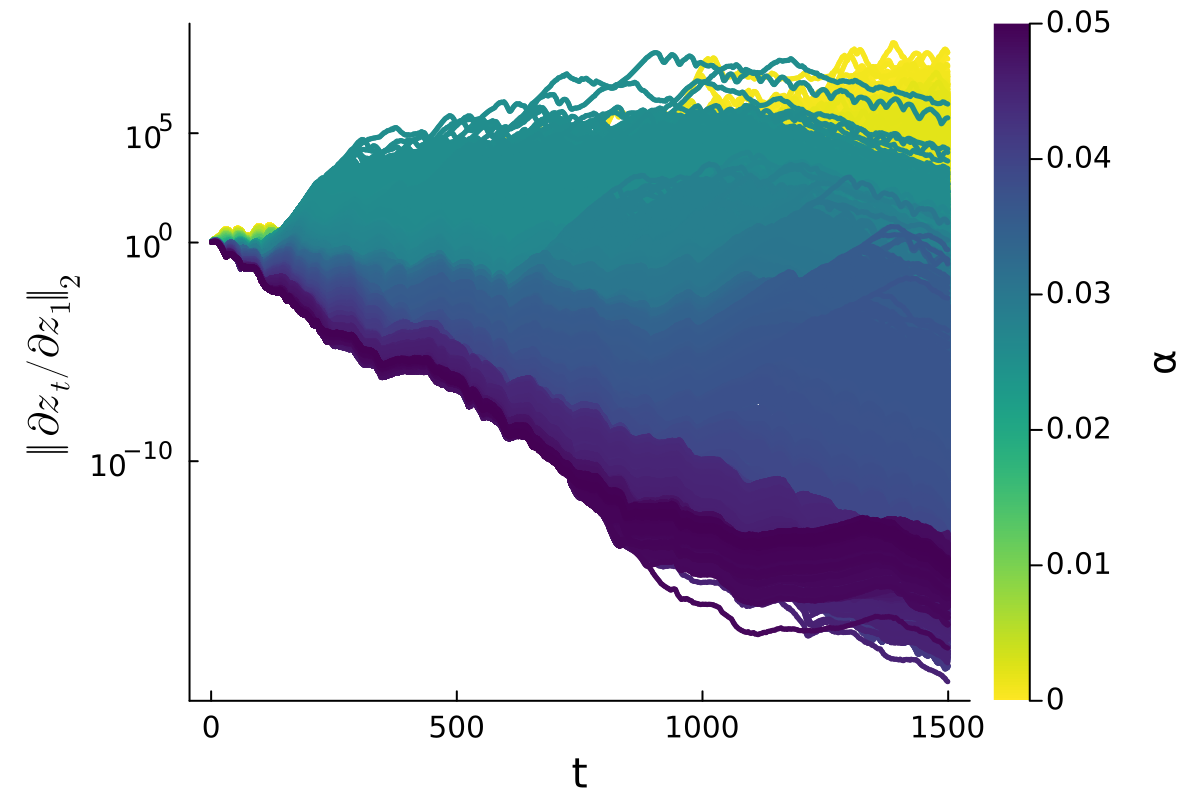}
     \vspace{-.3cm}
    \caption{Norms of Jacobian product series given in Eq. \eqref{eq:jac_chain_GTF} for an RNN trained by BPTT+GTF on the Lorenz-63 system as function of sequence length $t$. The RNN state is initialized based on the ground truth and then propagated for $t$ time steps. 
    The Jacobians diverge if $\alpha$ is chosen too small, but converge if too large.}
    \label{fig:jac_norms_gtf}
\end{figure}
\begin{figure*}[!htp]
    \centering    \includegraphics[width=0.85\linewidth]{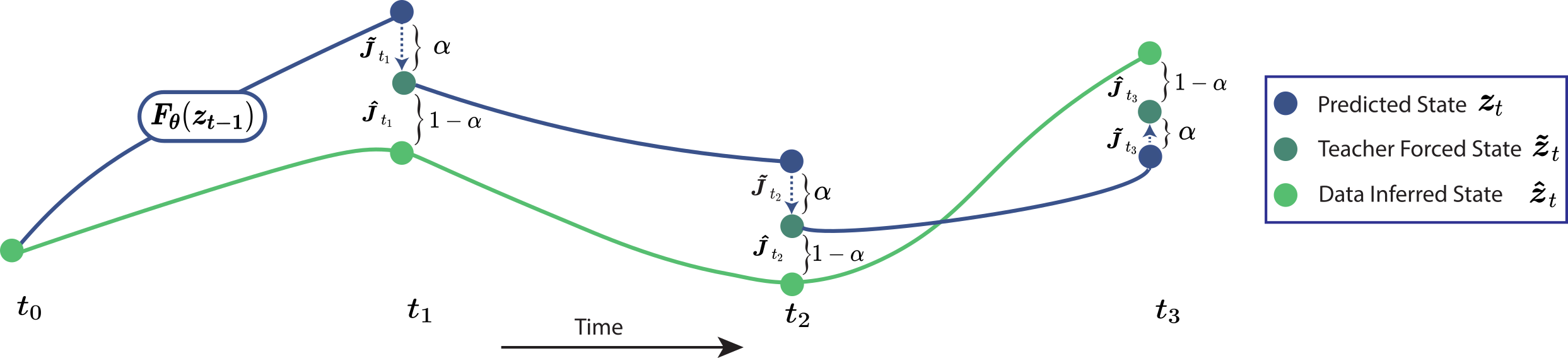}
    \caption{Principle of Generalized Teacher Forcing.
    }
    \label{fig:gtf_scheme}
\end{figure*}
Now assume that for any fixed given set of parameters $\vtheta$, 
$\, \mathcal{J} \, = \, \{ \tilde{\mJ}_{\kappa} \}_{\kappa \in \mathcal{K}}\, $ is the set of all different Jacobians of an RNN (which may be countable or uncountable).
We define 
\begin{align}\nonumber
& \tilde{\sigma}_{\max} =  \sup \mathcal{S}_1 \, := \, \sup \bigg\{ \norm{\tilde{\mJ}_{\kappa}}=
\sigma_{\max}(\tilde{\mJ}_{\kappa})\, : \, \tilde{\mJ}_{\kappa} \in \mathcal{J} \bigg\},
\\[1ex]\label{eq:sup:inf}
& \tilde{\lambda}_{\min} = \inf \mathcal{S}_2 \, := \, \inf \bigg\{ \lambda_{\min}(\tilde{\mJ}_{\kappa}) : \, \tilde{\mJ}_{\kappa} \in \mathcal{J} \bigg\}  \geq 0, 
\end{align}
where $\lambda_{\min}(\tilde{\mJ}_{\kappa}) \, = \, \min \bigg\{ |\lambda_{j}| \, : \, \lambda_j \in eig(\tilde{\mJ}_{\kappa}) \bigg\}$. 

Obviously the nonempty set $\mathcal{S}_2 \subset \mathbb{R}$ is bounded from below by definition, but the set $\mathcal{S}_1 \subset \mathbb{R}$ is not necessarily bounded from above. However, here we will consider the set of extended real numbers $\bar{\mathbb{R}} := \mathbb{R} \cup \{- \infty, + \infty\}$ and define the supremum of a set not bounded from above as $+ \infty$. 
The following proposition states a necessary condition for the existence of chaos in RNNs:
\begin{proposition}\label{prop:chaos}
If the RNN \eqref{eq:RNN_DS} is chaotic, then $\, \tilde{\sigma}_{\max} \, > \, 1$.    
\end{proposition}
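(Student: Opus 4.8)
\emph{Proof proposal.} The plan is to convert the qualitative hypothesis ``the RNN is chaotic'' into the quantitative statement $\lambda_{max} > 0$ for some orbit, and then to bound $\lambda_{max}$ from above by $\log\tilde{\sigma}_{\max}$ using nothing more than submultiplicativity of the spectral norm. Since the RNN \eqref{eq:RNN_DS} is chaotic, by the characterization recalled above it possesses (run autonomously, so that the Jacobians entering \eqref{eq:lyap} are exactly the $\tilde{\mJ}_t$ evaluated along the orbit, i.e. the $\alpha=0$ case of \eqref{eq:GTF_jacobian}) an orbit $\mZ=\{\vz_1,\vz_2,\dots\}$ whose maximal Lyapunov exponent, defined by \eqref{eq:lyap}, satisfies $\lambda_{max} > 0$. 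I would first dispatch the trivial case $\tilde{\sigma}_{\max} = +\infty$, for which the claim holds by the extended-reals convention fixed just above \eqref{eq:sup:inf}; from now on assume $\tilde{\sigma}_{\max} < \infty$.

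Next I would note that every Jacobian $\tilde{\mJ}_{T-r}$ occurring in the product $\prod_{r=0}^{T-2}\mJ_{T-r}$ belongs to the set $\mathcal{J}$ of all Jacobians of the RNN for the fixed parameter set $\vtheta$, hence $\norm{\tilde{\mJ}_{T-r}}_2 = \sigma_{\max}(\tilde{\mJ}_{T-r}) \leq \tilde{\sigma}_{\max}$ by the definition of $\tilde{\sigma}_{\max}$. Since this product has $T-1$ factors, submultiplicativity of the spectral norm gives
\begin{align*}
\norm{\ \prod_{r=0}^{T-2}\mJ_{T-r}\ }_2 \ \leq\ \prod_{r=0}^{T-2}\norm{\tilde{\mJ}_{T-r}}_2 \ \leq\ \tilde{\sigma}_{\max}^{\,T-1}.
\end{align*}
Applying $\tfrac{1}{T}\log(\cdot)$ to both sides and letting $T\to\infty$ in \eqref{eq:lyap} then yields
\begin{align*}
0 \ <\ \lambda_{max} \ =\ \lim_{T\to\infty}\frac{1}{T}\log\norm{\ \prod_{r=0}^{T-2}\mJ_{T-r}\ }_2 \ \leq\ \lim_{T\to\infty}\frac{T-1}{T}\,\log\tilde{\sigma}_{\max} \ =\ \log\tilde{\sigma}_{\max},
\end{align*}
and exponentiating gives $\tilde{\sigma}_{\max} > 1$, as claimed.

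I do not expect a genuine obstacle; the argument is essentially a one-line estimate, and the only points needing care are bookkeeping. First, one must make sure the Jacobians along the chaotic orbit truly lie in $\mathcal{J}$, which is immediate from the definition of $\mathcal{J}$ as \emph{all} Jacobians of the RNN. Second, the $\tilde{\sigma}_{\max}=+\infty$ case must be handled separately, as above. Third, strictly speaking one should justify the existence of the limit defining $\lambda_{max}$; if one prefers to avoid subadditive-ergodic arguments, the same chain of inequalities holds verbatim with $\limsup$ in place of $\lim$, since the right-hand side converges, and that suffices for the conclusion. The substantive input --- and the only place where chaos is used --- is the strict positivity $\lambda_{max} > 0$; everything else is the elementary fact that the spectral norm of a matrix product cannot grow faster than the product of the individual spectral norms, each capped by $\tilde{\sigma}_{\max}$.
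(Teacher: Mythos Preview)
Your proof is correct and follows essentially the same approach as the paper: both start from $\lambda_{max}>0$ for a chaotic orbit and use submultiplicativity of the spectral norm, the paper arguing by contradiction that some individual Jacobian must have norm $>1$, while you package the same estimate as the direct inequality $\lambda_{max}\le\log\tilde{\sigma}_{\max}$. Your version is slightly cleaner and even yields the quantitative bound as a byproduct, but the underlying mechanism is identical.
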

\begin{proof}
See Appx. \ref{app:proof_prop_chaos}.  \end{proof}
The next proposition now sets the stage for a proper choice of the GTF parameter $\alpha$:
\begin{proposition}\label{prop:2}
Consider an RNN given by \eqref{eq:RNN_DS} and let $\, \tilde{\sigma}_{\max} \geq 1$. 
\begin{itemize}
\item[(i)] If $ \, \alpha \, = \, \alpha^{*} \, := \, 1- \frac{1}{\tilde{\sigma}_{\max}}$, then the Jacobian product series $\frac{\partial{\vz_t}}{\partial{\vz_r}}$ will be bounded from above, i.e. its norm will not diverge as $t-r \to \infty$.
    \item[(ii)] Assume that  $\alpha \, = \, \alpha^{*}$ and define $\tilde{\gamma} \, := \, \frac{\tilde{\lambda}_{\min}} {\tilde{\sigma}_{\max}}$ ($0 \leq \tilde{\gamma} \leq 1$). If $\tilde{\gamma} \, = \, 1$, then $\frac{\partial{\vz_t}}{\partial{\vz_r}}$ neither explodes nor vanishes for $t-r \to \infty$. When $\tilde{\gamma} \, \neq \, 1$, the Jacobian $\frac{\partial{\vz_t}}{\partial{\vz_r}}$ will not explode, but may potentially vanish as $t-r$ goes to infinity. Furthermore if $ \, \lim_{t-r \to \infty}\norm{\frac{\partial{\vz_t}}{\partial{\vz_r}}} =0$, then the closer $\tilde{\gamma}$ is to $1$, the slower $\frac{\partial{\vz_t}}{\partial{\vz_r}}$ tends to zero for $t-r \to \infty$.
\end{itemize}
\end{proposition}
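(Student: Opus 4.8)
\emph{Proof plan.} The strategy is to sandwich the GTF Jacobian product \eqref{eq:jac_chain_GTF} between two explicit powers of $t-r$ and then read off both parts. For (i), submultiplicativity of the spectral norm together with the uniform bound $\norm{\tilde\mJ_{t-k}}\le\tilde\sigma_{\max}$ --- immediate from the definition \eqref{eq:sup:inf} --- yields $\norm{\partial\vz_t/\partial\vz_r}\le\big((1-\alpha)\tilde\sigma_{\max}\big)^{t-r}$. Choosing $\alpha=\alpha^{*}=1-1/\tilde\sigma_{\max}$ makes the base exactly $1$, so the norm stays $\le1$ for all $t-r$ and hence cannot diverge; and if $\tilde\sigma_{\max}=+\infty$ then $\alpha^{*}=1$, the product is identically $\vzero$, and the claim is trivial.

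For (ii) I would complement this with a matching \emph{lower} bound. The obstacle is that operator-norm submultiplicativity only produces upper bounds, so per-step information about the smallest eigenvalue modulus cannot be pushed through a matrix product directly; the remedy is to route through the determinant, which is exactly multiplicative. Concretely, for any $N\times N$ matrix $\mM$ with $N:=\dim\vz_t$ one has $|\det\mM|=\prod_i\sigma_i(\mM)\le\norm{\mM}^{N}$, hence $\norm{\mM}\ge|\det\mM|^{1/N}$, while $|\det\tilde\mJ_{t-k}|=\prod_{\lambda\in eig(\tilde\mJ_{t-k})}|\lambda|\ge\lambda_{\min}(\tilde\mJ_{t-k})^{N}\ge\tilde\lambda_{\min}^{N}$. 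Applying both to $\mM=\prod_{k=0}^{t-r-1}\tilde\mJ_{t-k}$ in \eqref{eq:jac_chain_GTF} and using $1-\alpha^{*}=1/\tilde\sigma_{\max}$ gives
\begin{equation*}
\tilde\gamma^{\,t-r}\;=\;\Big(\tfrac{\tilde\lambda_{\min}}{\tilde\sigma_{\max}}\Big)^{t-r}\;\le\;\norm{\frac{\partial\vz_t}{\partial\vz_r}}\;\le\;1 .
\end{equation*}
If $\tilde\gamma=1$ the two bounds coincide, the norm is constant, and the product neither explodes nor vanishes. If $\tilde\gamma<1$, the upper bound still forbids explosion while the lower bound $\tilde\gamma^{t-r}\to0$ no longer obstructs vanishing; and since $\tilde\gamma\mapsto\tilde\gamma^{t-r}$ is increasing on $(0,1)$ --- equivalently its decay rate $-\log\tilde\gamma$ shrinks to $0$ as $\tilde\gamma\to1^-$ --- the closer $\tilde\gamma$ is to $1$, the slower any actual decay to $0$ can be.

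The main difficulty is precisely this lower bound; everything else is submultiplicativity plus the calibration of $\alpha^{*}$. In the write-up I would stress that the lower bound is deliberately loose --- it only certifies non-explosion and the comparative decay rate, not that gradients stay bounded away from $0$ --- which is why (ii) is stated with ``may potentially vanish'' rather than an equality. Two routine points remain to be dispatched: that $0\le\tilde\gamma\le1$, which holds because every eigenvalue modulus of a matrix is at most its spectral norm, so $\tilde\lambda_{\min}\le\sigma_{\max}(\tilde\mJ_{\kappa_0})\le\tilde\sigma_{\max}$ for any fixed $\kappa_0$; and that all estimates use only the pointwise sup/inf in \eqref{eq:sup:inf}, so nothing changes if the Jacobian set $\mathcal{J}$ is uncountable.
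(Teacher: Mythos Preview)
Your proposal is correct and reaches the same sandwich estimate
\[
\big((1-\alpha)\tilde\lambda_{\min}\big)^{t-r}\ \le\ \norm{\tfrac{\partial\vz_t}{\partial\vz_r}}\ \le\ \big((1-\alpha)\tilde\sigma_{\max}\big)^{t-r}
\]
as the paper, from which both (i) and (ii) are read off in the same way. The upper bound argument is identical to the paper's (submultiplicativity plus the definition of $\tilde\sigma_{\max}$).

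For the lower bound, however, you take a genuinely different route. The paper argues via the chain
\[
\prod_k \lambda_{\min}(\tilde\mJ_{t-k})\ \le\ \lambda_{\min}\Big(\prod_k\tilde\mJ_{t-k}\Big)\ \le\ \rho\Big(\prod_k\tilde\mJ_{t-k}\Big)\ \le\ \norm{\prod_k\tilde\mJ_{t-k}},
\]
whereas you go through the determinant: $\norm{\mM}\ge|\det\mM|^{1/N}$ together with multiplicativity of $\det$ and $|\det\tilde\mJ_{t-k}|\ge\lambda_{\min}(\tilde\mJ_{t-k})^{N}$. Your route is arguably the more robust one: the paper's first inequality, $\prod_k\lambda_{\min}(\tilde\mJ_{t-k})\le\lambda_{\min}(\prod_k\tilde\mJ_{t-k})$, is not valid for general matrices (e.g.\ take $A=\begin{psmallmatrix}1&10\\0&1\end{psmallmatrix}$ and $B=A^{\mathsf T}$, each with $\lambda_{\min}=1$, yet $\lambda_{\min}(AB)\approx 0.01$), while your determinant argument bypasses this step entirely and still lands on the same final bound $\norm{\prod_k\tilde\mJ_{t-k}}\ge\prod_k\lambda_{\min}(\tilde\mJ_{t-k})$. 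So your approach buys a cleaner justification of the lower envelope; the paper's approach is shorter to state but leans on an intermediate inequality that does not hold in general.
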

\begin{proof}
See Appx. \ref{app:proof_prop2}.  \end{proof}
\subsection{Reconstruction models}\label{subsec:rec_modles}
We will base our experiments on the specific class of PLRNNs 
\citep{durstewitz2017state, koppe2019identifying}, since they are mathematically tractable in the sense that fixed points, cycles, and other invariant sets can, in principle, be computed exactly and semi-analytically for them \citep{monfared2020transformation, schmidt2021identifying, brenner22}. This facilitates their post-training DS analysis in scientific contexts. In its simplest form, the PLRNN is given by
\begin{equation}\label{eq:con_PLRNN}
    \vz_t = \mA \vz_{t-1} + \mW \phi(\vz_{t-1}) + \vh_0,
\end{equation}
with diagonal matrix $\mA \in \mathbb{R}^{M\times M}$, off-diagonal matrix $\mW \in \mathbb{R}^{M \times M}$, bias $\vh_0 \in \mathbb{R}^{M}$, and rectified-linear-unit nonlinearity $\phi(\cdot) = \textrm{ReLU}(\cdot) = \max(0, \cdot)$.
\citet{brenner22} extended this basic structure by adding a linear spline basis expansion, dubbed the dendritic PLRNN (dendPLRNN), to increase the expressivity of each unit's nonlinearity and thus enable DS reconstructions in lower dimensions,
\vspace{-.2cm}
\begin{equation}\label{eq:dendPLRNN}
    \vz_t = \mA \vz_{t-1} + \mW \sum_{b=1}^B\alpha_b\phi(\vz_{t-1} - \vh_b) + \vh_0,
\end{equation}
with slope-threshold pairs $\{\alpha_b, \vh_b\}_{b=1}^B$, where $B$ is the number of bases. It can be shown that the dendPLRNN can be reformulated as a higher-dimensional conventional PLRNN \cite{brenner22}.

However, in this work we will specifically consider the following ``$1$-hidden-layer'' ReLU-based RNN, which we will refer to as \textit{shallow} PLRNN (shPLRNN):
\begin{align}\label{eq:shPLRNN}
     \vz_t = \mA \vz_{t-1} + \mW_1 \phi (\mW_2 \vz_{t-1} + \vh_2) +\vh_1,
\end{align}
with latent states $\vz_t \in \mathbb{R}^M$, diagonal matrix $\mA \in \mathbb{R}^{M\times M}$, rectangular connectivity matrices $\mW_1 \in \mathbb{R} ^{M\times L}$ and $\mW_2 \in \mathbb{R} ^{L\times M}$, and thresholds $\vh_2 \in \mathbb{R}^{L}$ and $\vh_1 \in \mathbb{R}^{M}$.
With $L > M$, by expanding each unit's activation into a weighted sum of ReLU nonlinearities, this formulation appears similar to the dendPLRNN, and indeed this intuition is confirmed by the following Proposition: 

\begin{proposition}\label{prop:sh}
Any shPLRNN given by \eqref{eq:shPLRNN} can be rewritten in the form of the dendPLRNN \eqref{eq:dendPLRNN}. It follows, in particular, that fixed points and cycles of \eqref{eq:shPLRNN} can be computed in an analogous way as for the dendPLRNN.
Vice versa, any $M$-dimensional dendPLRNN can be reformulated as an $M$-dimensional shPLRNN with hidden layer size $L=M \cdot B$.
\end{proposition}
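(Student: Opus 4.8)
The plan is to establish the two directions separately; the reverse embedding (dendPLRNN $\to$ shPLRNN) is a short explicit construction, whereas the forward one (shPLRNN $\to$ dendPLRNN) is the substantive claim and requires a lift to higher dimension, since the hidden units of an shPLRNN read \emph{general linear projections} $\mW_2\vz$ of the state, while the spline nonlinearities of a dendPLRNN act coordinate-wise.

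For the reverse direction I would take an $M$-dimensional dendPLRNN \eqref{eq:dendPLRNN} with bases $\{\alpha_b,\vh_b\}_{b=1}^B$ and index a hidden layer of size $L=MB$ by pairs $(m,b)$. Set row $(m,b)$ of $\mW_2\in\R^{MB\times M}$ to the coordinate selector $\ve_m\tran$ and $(\vh_2)_{(m,b)} := -(\vh_b)_m$, so that the $(m,b)$-th hidden activation is $\phi\big((\vz_{t-1})_m-(\vh_b)_m\big)$, the $m$-th component of $\phi(\vz_{t-1}-\vh_b)$; then set $(\mW_1)_{m',(m,b)} := \alpha_b\,\mW_{m'm}$ and leave $\mA$ and $\vh_1:=\vh_0$ unchanged. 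Reading off component $m'$ of \eqref{eq:shPLRNN} then reproduces $\mA\vz_{t-1}+\mW\sum_b\alpha_b\phi(\vz_{t-1}-\vh_b)+\vh_0$ term by term; this is the only computation required and is routine.

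For the forward direction I would realize an arbitrary shPLRNN as a subsystem of a higher-dimensional dendPLRNN living on an invariant affine subspace (equivalently, via the dendPLRNN$\leftrightarrow$PLRNN equivalence of \citet{brenner22}, of a higher-dimensional conventional PLRNN). Introduce the hidden pre-activations $\vp_t:=\mW_2\vz_t+\vh_2\in\R^L$ together with the sign-split copy $-\vz_t$, and consider the augmented state $(\vz_t,-\vz_t,\vp_t)\in\R^{2M+L}$. Since $\mW_1\phi(\mW_2\vz_{t-1}+\vh_2)=\mW_1\phi(\vp_{t-1})$, and since the cross-coupling $\mW_2\mA\vz_{t-1}$ appearing in the $\vp$-update can be written as $\mW_2\mA\,\phi(\vz_{t-1})-\mW_2\mA\,\phi(-\vz_{t-1})$ using $\vx=\phi(\vx)-\phi(-\vx)$, a direct check shows the augmented state satisfies a recursion $\vz^{\mathrm{aug}}_t=\mA^{\mathrm{aug}}\vz^{\mathrm{aug}}_{t-1}+\mW^{\mathrm{aug}}\phi(\vz^{\mathrm{aug}}_{t-1})+\vh^{\mathrm{aug}}$ with $\mA^{\mathrm{aug}}=\mathrm{diag}(\mA,\mA,\vzero)$ diagonal, i.e. a dendPLRNN with $B=1$ and zero threshold. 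The set $\gM:=\{(\vz,-\vz,\mW_2\vz+\vh_2):\vz\in\R^M\}$ is invariant under this recursion, and the restriction to $\gM$, identified with $\R^M$ through the first block, is exactly \eqref{eq:shPLRNN}. (If one insists on the off-diagonal convention for $\mW^{\mathrm{aug}}$, the nonzero diagonal of $\mW_2\mW_1$ is moved off-diagonal by one further duplicated block of auxiliary coordinates, which is pure bookkeeping.) The assertion about fixed points and cycles then follows either from this embedding or, more directly, from the observation that the shPLRNN is itself piecewise-linear with polyhedral regions given by the sign pattern of $\mW_2\vz+\vh_2$ and affine on each region, so its fixed points and $k$-cycles are obtained region-by-region by solving linear systems, which is the same procedure as for the dendPLRNN \citep{monfared2020transformation,schmidt2021identifying,brenner22}.

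I expect the forward direction to be the only real obstacle. The point is that coordinate-wise splines cannot reproduce a hidden unit such as $\phi(z_1+z_2)$ as a linearly mixed sum of per-coordinate splines, so an $M$-dimensional realization is impossible and the dimension lift is unavoidable; the actual work is then (i) choosing the auxiliary coordinates so that every cross-coupling can be absorbed into the $\phi$-term while keeping $\mA^{\mathrm{aug}}$ diagonal, and (ii) verifying that the defining relations of $\gM$ are exactly preserved by the augmented map, so that the embedding is an honest conjugacy on $\gM$ rather than an approximation.
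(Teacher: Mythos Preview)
Your reverse embedding (dendPLRNN $\to$ shPLRNN with $L=MB$) is exactly the paper's construction.

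For the forward direction you take a genuinely different and heavier route. The paper does \emph{not} lift to a higher-dimensional dendPLRNN at all. It reads ``rewritten in the form of the dendPLRNN'' as: the shPLRNN admits the same region-wise affine representation as the dendPLRNN, namely Eq.~\eqref{eq:dendPLRNN_rewritten}. Concretely, the paper writes the shPLRNN as $\vz_t=\tilde{\mW}_{\Omega(t-1)}\vz_{t-1}+\mW_1\tilde{\vh}_{\Omega(t-1)}+\vh_1$ with $\tilde{\mW}_{\Omega}=\mA+\mW_1\tilde{\mD}_\Omega\mW_2$ and binary indicator $\tilde{\mD}_\Omega$ encoding the sign pattern of $\mW_2\vz+\vh_2$, notes this is structurally identical to the rewritten dendPLRNN, and then writes down the fixed-point and $n$-cycle formulas for the shPLRNN by direct substitution into the dendPLRNN expressions. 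Your invariant-subspace embedding into a $(2M{+}L)$-dimensional conventional PLRNN is correct and in fact proves something stronger---a literal realization of the shPLRNN as a dendPLRNN subsystem, which your $\phi(z_1{+}z_2)$ observation shows cannot be achieved in $M$ dimensions---but it is not needed for what the proposition is used for downstream. Indeed, the argument you give in your last paragraph (``the shPLRNN is itself piecewise-linear \dots\ so its fixed points and $k$-cycles are obtained region-by-region by solving linear systems'') \emph{is} the paper's entire forward proof.
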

\begin{proof}
See Appx. \ref{app:proof_prop_sh} \end{proof}
Similar to the dendPLRNN, the shPLRNN can be equipped with a clipping mechanism that prevents states from diverging to infinity due to the unbounded ReLU nonlinearity:
\begin{align}\nonumber
     \vz_t 
    = \mA \vz_{t-1} + \mW_1 \big[\phi(\mW_2 \vz_{t-1} + \vh_2) 
    \\\label{eq:bounded_shPLRNN}
   - \phi\left(\mW_2 \vz_{t-1}\right)\big] + \vh_1.
\end{align}
This guarantees bounded orbits provided the largest absolute eigenvalue of $\bm{A}$ is smaller than $1$ (as shown in Appx. \ref{app:proof_boundedness}).

Finally, to map from the PLRNN's $M$-dimensional latent to the $N$-dimensional observation space, a linear observation model (i.e., linear output layer) is used:
\begin{equation}\label{eq:obs_model}
    \hat{\vx}_t = \mG_\vvarphi (\vz_t) = \mB \vz_t.
\end{equation}
\subsection{Model training}\label{subsec:model_training}
We use BPTT combined with GTF to train RNNs on time series $\bm{X} \in \mathbb{R}^{N \times T}$ from chaotic DS, where $T$ is the length of the time series and $N$ is the number of observed variables. Control signals $\hat{\vz}_t$ 
used for TF are inferred from the observations by inversion of the observation model $\mG_\vvarphi$ 
 \citep{Mikhaeil22},
\vspace{-.2cm}
\begin{equation}\label{eq:invTF}
    \hat{\vz}_t := \mG_\vvarphi^{-1}(\bm{x}) = \mB^+ \vx_t,
\end{equation}
where $\mB^+$ denotes the Moore–Penrose (pseudo-) inverse of $\mB$.
\footnote{More generally, one may think of invertible neural networks (INNs; \citet{dinh2017density, ardizzone2018analyzing}) for linking latent states to observations, but here we contend ourselves with a simple linear model.} 
In the simplest case, if the RNN's latent dimension and the dimension of the observations match, i.e. $N=M$, one could also simply fix $\bm{B} = \mathbb{1}_{N\times N}$ and train on the observations directly, taking $\hat{\vz}_t = \vx_t$ as the control signal (see \citet{brenner22}). 

How to choose the GTF parameter $\alpha$ in practice? Building on Proposition \ref{prop:2}, to avoid exploding gradients altogether, one would need to set $\alpha$ according to the maximum singular value of the RNN Jacobians, Eq. \eqref{eq:sup:inf}. However, exactly computing ${\tilde{\sigma}_{\max}}$ is intractable for most RNN architectures. This can easily be seen for the shPLRNN \eqref{eq:shPLRNN}, for which the Jacobian \eqref{eq:jacobian} is given by
\begin{equation}\label{eq:shPLRNN_jac}
    \mJ^{(sh)}_t = \mA + \mW_1 \tilde{\mD}_{\Omega(t-1)}\mW_2,
\end{equation}
where $\tilde{\mD}_{\Omega(t-1)} = \text{diag} \big(\tilde{d}_{1,t-1}, \tilde{d}_{2,t-1}, \cdots, \tilde{d}_{L,t-1} \big)$ is an $L \times L$ diagonal binary indicator matrix with $d_{l,t-1}=1$ if $\sum_{j=1}^{M} w^{(2)}_{lj} \,z_{j,t-1}> -h^{(2)}_l$, for $\mW_2 = \big[w^{(2)}_{ij}\big]$, $\vh_2 = \big[h^{(2)}_i\big]$, and $0$ otherwise. Each possible configuration 
of $1$'s and $0$'s on the diagonal of $\Tilde{\mD}_\Omega$ corresponds to a different linear subregion of the state space, in which the Jacobian \eqref{eq:shPLRNN_jac} is constant and the dynamics of \eqref{eq:shPLRNN} is linear. Hence, to compute ${\tilde{\sigma}_{\max}}$ one would need to evaluate the Jacobians of all $2^L$ linear subregions, which is generally infeasible, especially since ${\tilde{\sigma}_{\max}}$ would need to be re-evaluated after each parameter update during training. The cheapest way around this issue is to choose $\alpha$ according to an upper bound of ${\tilde{\sigma}_{\max}}$:
\vspace{-.2cm}
\begin{align}\label{eq:alpha_upper_bound}
    \alpha^{(n)} = 1 - \frac{1}{\ceil{\tilde{\sigma}_{\max}}^{(n)}},
\end{align}
where $n$ denotes the $n$-th optimization step and $\ceil{\tilde{\sigma}_{\max}}$ is the upper bound given by
\begin{align}\label{eq:sigma_upper_bound}
\begin{split}
    \ceil{\tilde{\sigma}_{\max}} &:= \norm{\mA} + \norm{\mW_1} \norm{\mW_2} \\ &\geq \max_{\Tilde{\mD}_\Omega} \norm{\mA + \mW_1 \Tilde{\mD}_{\Omega}\mW_2} = \tilde{\sigma}_{\max}.
\end{split}
\end{align}
A slightly more expensive alternative heuristic is to approximate ${\tilde{\sigma}_{\max}}$ by computing the Jacobians at states given by the teacher signals of a given training batch $\mX$ (since, ultimately, the RNN is required to generate time series with these same signatures). Letting $\vx_t^{(p)}$ denote the observation vector of the $p$-th sequence in the batch at time $t$, the teacher signals are given by $\hat{\vz}^{(p)}_t = \mG_\vvarphi^{-1}(\vx_t^{(p)})$, for which the Jacobians $\hat{\mJ}^{(p)}_{t}$ (cf. Fig. \ref{fig:gtf_scheme}) can be computed using \eqref{eq:shPLRNN_jac}. We then have an estimate $\hat{\sigma}_{\max}$ as
\vspace{-.2cm}
\begin{align}\label{eq:sigma_max_estimate}
    \hat{\sigma}_{\max} = \max_{t, p} \norm{\hat{\mJ}^{(p)}_{t}},
\end{align}
and can choose $\alpha$ accordingly.

However, we find that in practice the estimates above are too conservative, leading to suboptimal performance. 
Instead, we derive 
an estimate directly based on Eq. \eqref{eq:jac_chain_GTF}.  
First, note that ideal error propagation is achieved when the chain of Jacobians connecting temporally most distal states in a training sequence of length $T$ is close to the identity,
\begin{align}\label{eq:jac_prod_identity}
\begin{split}
    \frac{\partial{\bm{z}_T}}{\partial{\bm{z}_1}} = (1-\alpha)^{T-1}\prod_{k=0}^{T-2}\tilde{\bm{J}}_{T-k} \overset{!}{=} \mathbb{1}.
\end{split}
\end{align} 
Provided that the Jacobian product series is non-singular, 
it follows that
\begin{align}
    (1-\alpha)\mathcal{\mG}(\tilde{\mJ}_{T:2}) &\overset{!}{=} \mathbb{1}, \\ \textrm{where} \hspace{.4cm} \mathcal{\mG}(\tilde{\mJ}_{T:2}) &:= \left(\prod_{k=0}^{T-2}\tilde{\bm{J}}_{T-k}\right)^{\frac{1}{T-1}}.\label{eq:jac_prod_power}
\end{align}
Taking the norm on both sides of Eq. \eqref{eq:jac_prod_power}, rearranging,  and assuming, as above, that we can replace forced Jacobians $\tilde{\mJ}_t^{(p)}$ of the $p$-th sequence in the current batch with Jacobians $\hat{\mJ}_t^{(p)}$ evaluated at data-inferred states, we obtain a collection $\{\alpha^{(p)}\}$ which we condense into a single estimate by taking
\begin{equation}\label{eq:alpha_geomean}
    \alpha= \max_p \alpha^{(p)} = \max_p \left[ 1 - \frac{1}{\norm{\mathcal{\mG}(\hat{\mJ}_{T:2}^{(p)})}} \right] .
\end{equation}
Since computing \eqref{eq:jac_prod_power} requires evaluating Jacobian products which cause exploding gradients in the first place, we
use an approximation which foregoes 
computing those products altogether, see Appx. \ref{appx:geomean_approx} for details. Furthermore, to ensure that replacing Jacobians $\Tilde{\mJ}_t$ at forced states with data inferred Jacobians $\hat{\mJ}_t$ remains valid throughout training, we employ an annealing strategy, which starts with strong forcing ($\alpha = 1$) that decays throughout training while remaining bounded from below by
\eqref{eq:alpha_geomean}.\footnote{More generally, annealing protocols have previously been observed to improve TF-based training in RNNs \citep{bengio2015scheduled, vlachas2023learning}.} The full training protocol 
is detailed in Appx. \ref{appx:annealing}, and will be referred to as \textit{adaptive} GTF (aGTF). 
As a reference, in our experiments below we will also employ fine-tuning $\alpha$ via grid search. Note that GTF is only used for \textit{training} the model, not during actual testing.

\section{Results}
We evaluate GTF using the shPLRNN 
on  
simulated DS 
and real-world data sets, and compare its performance to a variety of other DS reconstruction algorithms. We will first introduce the data sets and evaluation measures used.
\subsection{DS data sets}\label{subsec:datasets}
\paragraph{Lorenz-63, Lorenz-96, and multiscale Lorenz-96} Both the 3d Lorenz-63 \cite{lorenz1963deterministic} and the higher-dimensional, spatially extended Lorenz-96 \cite{lorenz1996predictability} ODE systems were conceived as simple models of atmospheric convection with chaotic behavior in some regime (see Appx. \ref{appx:datasets} for more details). Here we include them mainly because they often served as DS benchmarks in the past. But by now almost all DS reconstruction algorithms achieve satisfactory performance on them, so that the focus here will be on the much more challenging empirical data sets.
As a somewhat more challenging benchmark, we also consider a (partially observed) multiscale variant of the Lorenz-96 system \citep{thornes2017use}.

\vspace{-.1cm}
\paragraph{Electrocardiogram (ECG) data} 
ECG recordings of a human subject were taken from the open-access PPG-DaLiA dataset \citep{Reiss_2019}. The ECG is a scalar physiological (heart muscle potential) time series, which we delay-embed \cite{sauer1991embedology} into $5d$ using the PECUZAL algorithm \citep{kramer2021unified}. The time series shows signs of chaotic behavior, as indicated by its empirically determined maximum Lyapunov exponent (cf. Appx \ref{appx:ecg}).

\vspace{-.1cm}
\paragraph{Electroencephalogram (EEG) data} 
As another challenging 
real-world data set we use open-access EEG recordings from a human subject under task-free, quiescent conditions with eyes open \citep{schalk_bci2000_2004}. The $64d$ dataset consists of $\SI{60}{\second}$ of brain activity measured through $64$ electrodes placed across the scalp. This dataset has previously been shown to bear signatures of chaotic activity, like a fractal dimensionality and positive maximum Lyapunov exponent ($\lambda_{max} \approx 0.017$; \citet{Mikhaeil22}).

We emphasize that -- unlike 
common simulated benchmarks -- empirical data often come with a number of additional challenges, like only a 
fraction of all dynamical variables observed, from potentially very high-dimensional generating systems, through only partly known and often intricate observation functions, with possibly high levels of dynamical and observation noise and unknown external perturbations, 
multiple temporal and spatial scales (brain recordings in particular), and potentially non-stationary behavior.

\subsection{Evaluation measures}\label{subsec:evaluation_measures}
To assess DS reconstruction quality, besides short-term prediction, we need to ensure that \textit{invariant} properties of the underlying system, like an attractor's geometrical structure in state space and its temporal signatures, have been adequately captured. In our choice of measures we follow \citet{koppe2019identifying, brenner22, Mikhaeil22}. For implementation details we refer to Appx. \ref{appx:eval_measures}.

\vspace{-.1cm}
\paragraph{Geometrical Measure}
To assess the agreement in true and reconstructed attractor geometries, we compute a Kullback-Leibler (KL) divergence based on the natural measure induced by system trajectories in state space on the invariant set ($D_{\textrm{stsp}}$). Specifically, $D_{\textrm{stsp}}$ 
is defined in the (potentially extended; \citet{sauer1991embedology}) observation space between  
estimated probability distributions 
$p(\vx)$, generated by trajectories of the true system, and $q(\vx)$, generated by orbits of the reconstructed system, see Appx. \ref{appx:dstsp} for details.

\vspace{-.1cm}
\paragraph{Temporal Measure}
As in \citet{Mikhaeil22}, we check for invariant temporal agreement of ground truth and generated orbits by computing the Hellinger distance ($H$) between the power spectra of all dynamical variables (see Appx. \ref{appx:hellinger_distance} for details). 
The Hellinger distance is a proper metric with $0\leq H \leq 1$, where $0$ represents perfect agreement between its arguments. To produce a single number, the Hellinger distance is averaged across all dynamical variables of the observed system, referred to here as $D_H$. 

\vspace{-.1cm}
\paragraph{Prediction error (PE)}
We further employ a short-term prediction error to assess each model's capability to perform forecasts on the chaotic time series. Note that, in general, accurate prediction of chaotic systems is an ill-posed problem as even tiny model approximation errors, differences in initial conditions, or noise sources are exponentially amplified, leading to rapid divergence from ground truth orbits. Hence, measures based on prediction accuracy are only of limited validity or unsuitable for assessing reconstruction quality, and only work for limited time horizons determined by the system's Lyapunov spectrum \citep{Mikhaeil22}. To assess short-term predictability, here we compute a mean-squared $n$-step prediction error 
between $n$-step forward propagated initial states from the test set and their corresponding ground truth values, see Appx. \ref{appx:prediction_error}.
\subsection{Experimental evaluation}\label{subsec:experimental_evaluation}
We compared DS reconstructions on the benchmarks defined above for the shPLRNN trained by GTF 
to a variety of other SOTA reconstruction methods, chosen to represent four major classes of model architectures and training strategies in use. 
These include DS reconstruction techniques based on 1) gated RNN models, specifically LSTMs trained using truncated BPTT (TBPTT) 
\citep{vlachas2018data}, 2) Reservoir Computing (RC) / Echo State Machines \citep{pathak2018model}, 3) library methods employing symbolic regression, namely Sparse Identification of Nonlinear Dynamical Systems (SINDy) \citep{brunton2016discovering}, and 4) ODE-based formulations like Neural ODEs (N-ODE) \citep{chen2018neural} and Long-Expressive-Memory (LEM) \citep{rusch2022lem}\footnote{LEMs, although not specifically designed for DS reconstruction, are universal approximators that have been exemplified on DS problems and are a current SOTA for addressing the EVGP.} (we also tested other N-ODE variants like Latent-ODE and ODE-RNN \citep{rubanova_latent_2019}, with similar results, see Appx. \ref{appx:sota_settings}). 
Moreover, we compare our method to the similar approach of \citet{brenner22} using dendPLRNNs trained with BPTT and a specific sparse TF protocol, dubbed identity TF (id-TF). For all comparison methods we determined optimal hyper-parameters through grid search (see Appx. \ref{appx:sota_settings}), while trying to keep the total number of trainable parameters about the same (see Table \ref{tab:SOTA_comparison}).\footnote{This was, however, not fully possible. For instance, RC required many more trainable parameters than other methods to yield any sensible reconstruction results.} For our shPLRNN we report in Table \ref{tab:SOTA_comparison} results with both fixed GTF parameter $\alpha$ 
determined by grid search (see 
Fig. \ref{fig:alpha_gs}) as well as 
for aGTF, obtaining similar performance. Of course, GTF -- 
like TF more generally -- is only employed for model training, not during testing where systems evolve autonomously.%
\begin{table*}[!ht]
\definecolor{Gray}{gray}{0.925}
\vspace{-.2cm}
\caption{SOTA comparisons. Reported values are median $\pm$ median absolute deviation over $20$ independent training runs. `dim' refers to the model's state space dimensionality (number of dynamical variables). $\lvert \bm{\theta} \rvert$ denotes the total number of \textit{trainable} parameters. }
\centering
\scalebox{0.99}{
\begin{tabular}{l l c c c c c}
        \toprule
        Dataset	&	Method	&	$D_{\textrm{stsp}}$ $\downarrow$ & $D_H$ $\downarrow$  & $\textrm{PE}(20)$ $\downarrow$	&  dim &
        $\lvert \bm{\theta} \rvert$ \\
        \midrule
        \multirow{8}{4em}{ECG (5d)}
        &    shPLRNN + GTF  & $\textbf{4.3}\pm\textbf{0.6}$ & $\textbf{0.34}\pm\textbf{0.02}$ & $(\textbf{2.4}\pm\textbf{0.1}) \cdot \textbf{10}^{\textbf{-3}}$ & $5$ & $2785$ \\
        &    shPLRNN + aGTF  & $\textbf{4.5} \pm \textbf{0.4}$ & $\textbf{0.34} \pm \textbf{0.02}$ & $(\textbf{2.4} \pm \textbf{0.2}) \cdot {\textbf{10}}^{{\textbf{-3}}}$ & $5$ & $2785$ \\
        &    shPLRNN + STF      & $7.1 \pm 1.8$ & $0.38 \pm 0.03$ & $(5 \pm 2) \cdot 10^{-3}$ & $5$ & $2785$ \\
        &    dendPLRNN + id-TF     & $5.8\pm0.6$ & $0.37\pm0.06$ & $(4.0\pm0.4) \cdot 10^{-3}$ & $35$ &  $3245$ \\
        &	 {RC}	            & $5.3\pm1.7$ & $0.39\pm0.05$ &  $(4\pm1) \cdot 10^{-3}$ & $1000$ & $5000$ \\
        &	 {LSTM-TBPTT}	            	& $15.2\pm0.5$ & $0.73\pm0.02$ & $(2.5\pm0.5) \cdot 10^{-2}$  & $70$ & $5920$  \\
        &	 {SINDy}	        & diverging &	diverging & diverging & $5$ & $3960$  \\
        &	 {N-ODE}	   & $12.2\pm0.7$ & $0.7\pm0.03$ &  $(4.1\pm0.1) \cdot 10^{-1}$ & $5$ & $4955$  \\
        &	 {LEM}	       & $16.3\pm0.2$ & $0.56\pm0.04$ &  $(7.4\pm0.1) \cdot 10^{-1}$ & $62$ & $4872$ \\

        \midrule
        \multirow{8}{4em}{EEG (64d)}
        &    shPLRNN + GTF        & $\textbf{2.1}\pm\textbf{0.2}$ & $\textbf{0.11}\pm\textbf{0.01}$ & $(5.5\pm0.1) \cdot 10^{-1}$ & $16$ & $17952$  \\
        &    shPLRNN + aGTF        & $\textbf{2.4} \pm \textbf{0.2}$ & $\textbf{0.13} \pm \textbf{0.01}$ & $(5.4\pm0.6) \cdot 10^{-1}$ & $16$ & $17952$  \\
        &    shPLRNN + STF    & $14 \pm 7$ & $0.50 \pm 0.16$ & $(\textbf{2.5}\pm\textbf{0.3}) \cdot \textbf{10}^{\textbf{-1}}$  & $16$ & $17952$  \\
        &    dendPLRNN + id-TF     & $3\pm1$ & $0.13\pm0.04$ & $(3.4\pm0.1) \cdot 10^{-1}$  & $105$ & $18099$  \\
        &	 {RC}	            & $14\pm7$ & $0.54\pm0.15$ & $(5.9\pm0.3) \cdot 10^{-1}$ & $448$ & $28672$ \\
        &	 {LSTM-TBPTT}	            & $30\pm21$ & $0.2 \pm 0.1$ & $(9.2\pm 2.3) \cdot 10^{-1}$ & $160$ & $51584$  \\
        &	 {SINDy}	        & diverging &	diverging & diverging & $64$ & $133120$  \\
        &	 {N-ODE}	        &  $20\pm0.5$  &	$0.47\pm0.01$ & $(5.5\pm0.2) \cdot 10^{-1}$ & $64$ & $17995$  \\
        &	 {LEM}	        	 & $10.2\pm1.5$ & $0.38\pm0.06$ & $(8.2\pm0.6) \cdot 10^{-1}$ & $76$ & $18304$ \\
        \bottomrule
\end{tabular}
}
\label{tab:SOTA_comparison}
\end{table*}

As Table \ref{tab:SOTA_comparison_benchmark} in Appx. \ref{appx:sota_settings} reveals, shPLRNN+GTF outperforms, or performs about on par with, the other methods on the simulated DS benchmarks Lorenz-63, Lorenz-96, and multiscale Lorenz-96 (see also Figs. \ref{fig:reduced-mslor96-longterm}, \ref{fig:red_mslor96_forecast_traces}, \ref{fig:reduced-mslor96-shortterm}). However, as noted above, 
by now almost any recent DS reconstruction method performs reasonably well on these, essentially leading to a `ceiling effect'.  
More importantly, shPLRNN+GTF has a clear edge over all other methods on the empirical EEG and ECG data, in some comparisons by almost an order of magnitude (Table \ref{tab:SOTA_comparison}).\footnote{As observed previously \cite{brenner22}, SINDy severely struggles if the ``right'' functional terms are not in its library, as is likely for any empirical situation. Unlike for the Lorenz-63/96, we only observed diverging solutions for SINDy on the EEG and ECG data.} 
This is significant, as DS reconstruction methods have so far mostly been tested on simulated DS models only, like the Lorenz equations, but rarely on empirical data. 
As  
noted above, real-world data usually contain multiple additional complexities not present in most simulated benchmarks. 
The reconstruction algorithm advanced here is apparently better able to deal with these various complicating factors 
met in empirical data.
\begin{figure}[!ht]
    \centering
\includegraphics[width=0.99\linewidth]
{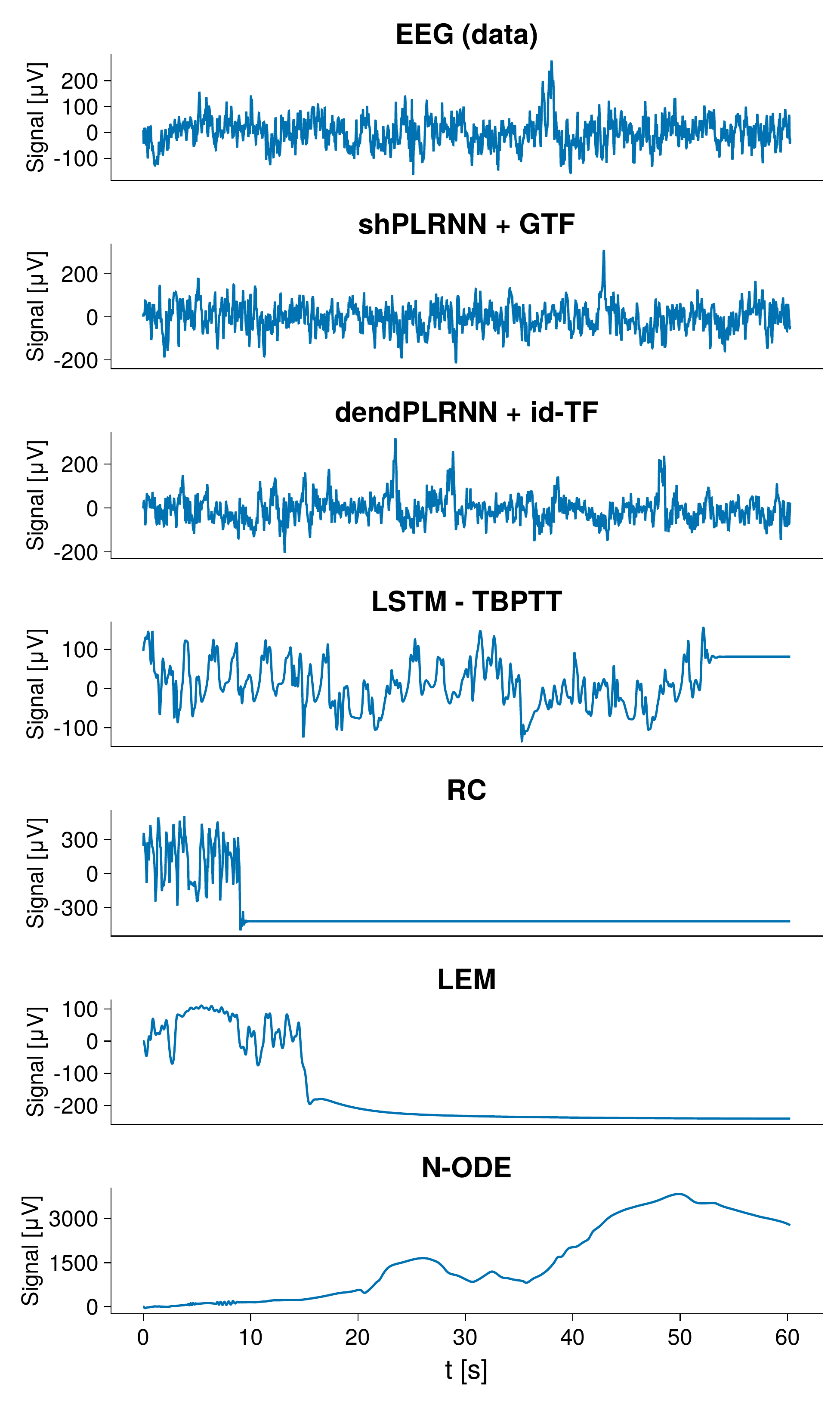}
\vspace{-.4cm}
    \caption{Example time traces of EEG reconstructions provided by  
    the methods employed in Table \ref{tab:SOTA_comparison}. For each method we took the best model out of $20$. For most runs ($\approx 90\%$), the shPLRNN trained with GTF and the dendPLRNN \cite{brenner22} provide similarly good results, while all other methods have highly variable outcomes and overall poor reconstructions of the long-term behavior.}
    \label{fig:eeg_reconstructions}
\end{figure}

This is particularly evident in Fig. \ref{fig:eeg_reconstructions} which provides examples of reconstructed 
time series on the EEG data for all of the compared methods, illustrating the trained models' \textit{long-term} behavior (see Fig. \ref{fig:eeg_reconstructions_heatmaps} for all EEG channels, and Fig. \ref{fig:ecg_reconstructions} for ECG data).  
For creating these graphs, each method was initialized  with a value inferred from the data, and then freely evolved forward 
for a long time (unconstrained by the data) from that one initial condition. As the underlying system is likely chaotic (according to an estimate of its maximal Lyapunov exponent and fractal dimensionality;  
\citet{Mikhaeil22}), in each case reliable forecasts can only be obtained for a couple of time steps ahead.  
Importantly, however, the  
temporal structure of the real data is conserved in the simulated data for the shPLRNN+GTF, and a bit less so for dendPLRNN, but for none of the other methods which often just converge to an equilibrium point after some time, or even diverge. Thus, other methods were not able to reconstruct the systems' attractors
(although they may still produce reasonable short-term predictions; 
cf. Tab. \ref{tab:SOTA_comparison} \& Fig. \ref{fig:eeg_forecast}). 
At the same time, shPLRNN+GTF requires only $16$ latent states to reconstruct the EEG, compared to $105$ for dendPLRNN (and $64$ for some of the others methods). 

Finally, replacing GTF by the previously proposed STF \cite{Mikhaeil22} in shPLRNN training  
led to 
worse performance on the reconstruction measures (Table \ref{tab:SOTA_comparison}, shPLRNN+STF), suggesting 
an important role for GTF.\footnote{As in \citet{Mikhaeil22}, the forcing interval was simply 
set according to the predictability time (ECG: $\tau_{pred} \approx 221$, EEG: $\tau_{pred} \approx 41$, see Appx. \ref{appx:ecg}) and not by systematic grid search.}

\section{Conclusions}
In this paper we develop and test the idea of GTF for learning chaotic DS. GTF addresses the central challenge posed in \citet{Mikhaeil22}, namely that exploding gradients cannot be avoided by architectural or parameter constraints in RNNs trained on chaotic systems, and hence should be targeted at the level of the training procedure itself. Although the basic idea of GTF has been introduced a long time ago \citep{doya1992bifurcations}, to our knowledge it never has been systematically empirically tested or theoretically examined. Here, for the first time, we show this training strategy can be amended such that exploding gradients can be completely avoided along theoretically infinite-length trajectories, resolving the training issue for chaotic systems. Another mechanism by which GTF may enhance training is by smoothing the loss landscape (Fig. \ref{fig:loss_landscape_smoothing}), as observed for other control and annealing methods \citep{abarbanel2013predicting}. 

Especially when paired with a simple modification of the PLRNN, the shPLRNN, GTF results in a powerful DS reconstruction algorithm which outperforms
a diversity of other 
models and techniques 
on the empirical datasets by sometimes large margins. At the same time,
it returns a dynamically interpretable DS model for which fixed points and cycles can be determined semi-analytically (cf. \ref{subsec:rec_modles}), and which achieves 
the lowest-dimensional reconstructions of all methods tested, requiring at most as many dimensions 
as the
observed system, potentially even less than empirically observed (Tab. \ref{tab:SOTA_comparison}). 
Although (or perhaps because) conceptually our advancements are simple, this sets a new standard in the field, especially for more relevant real-world data with which many algorithms profoundly struggle with. In fact, up to now most of the DS reconstruction literature has focused on simulated DS benchmarks only. 
While studying ground truth systems with known properties is clearly important, ultimately, of course, these algorithms are to be taken to the real world, where additional and partly unforeseeable issues may wait (some of them pointed out in sect. \ref{subsec:datasets}).
Chaotic dynamics are ubiquitous in natural and many human-made systems \citep{durstewitz2007dynamical, duarte_quantifying_2010, faggini_chaotic_2014, mangiarotti_chaos_2020,  kamdjeu_kengne_dynamics_2021}. In fact they are essentially the rule in any type of complex multi-component system with some heterogeneity in its elements or connections \cite{van_vreeswijk_chaos_1996}, including the brain \cite{durstewitz_dynamical_2007},  
climate systems \cite{bury_deep_2021}, or social and economical networks \cite{xu_chaotic_2011}. By strictly controlling the gradients in training, without imposing any other constraints,
GTF may prove valuable more generally for any time series prediction, regression or classification tasks.
While in principle applicable to any other RNN architecture, however, we point out that it is also the specific design of the shPLRNN which may make it particularly well suited for straightforward implementation of GTF. For instance, we were not able to obtain similar performance boosts through GTF for the dendPLRNN (at least in a naive implementation) which lacks the shPLRNN's 1:1 relation between observations and latent states (nor did we achieve similar performance with $\tanh$ rather than ReLU activation; see Appx. \ref{appx:ablation}). Thus, despite its general design, how to best utilize GTF in other model architectures, as well as how it compares to STF, will still require further research. 

All code is available at \url{https://github.com/DurstewitzLab/GTF-shPLRNN}.
\section*{Acknowledgements}
This work was funded by the German Research Foundation (DFG) within Germany’s Excellence Strategy EXC 2181/1 – 390900948 (STRUCTURES), by DFG grants Du354/10-1 \& Du354/15-1 to DD, and by the European Union Horizon-2020 consortium SC1-DTH-13-2020 (IMMERSE).


\bibliography{main}
\bibliographystyle{icml2023}


\beginsupplement
\newpage
\onecolumn
\section{Appendix}\label{app}
\subsection{Theorems: Proofs}\label{app:proofs}
\subsubsection{Proof of proposition \ref{prop:chaos}}\label{app:proof_prop_chaos}
\begin{proof}
Let 
$\, \mathcal{J} \, = \, \{ \tilde{\mJ}_{\kappa} \}_{\kappa \in \mathcal{K}}\, $ be the set of all Jacobians of the RNN \eqref{eq:RNN_DS}, and $\mathcal{O}_{\vz_1}=\{ \vz_T \}_{T=1}^{\infty}\,$ be a chaotic orbit of the system. Then, the largest Lyapunov exponent of $\mathcal{O}_{\vz_1}$ is positive, i.e.
\begin{align}\label{LE}
 \lambda\, = \, \lim_{T\rightarrow\infty} \frac{1}{T} \, \log \norm{ \tilde{\mJ}_{T}\, \tilde{\mJ}_{T-1}\, \cdots \,  \tilde{\mJ}_{2}} \, > \, 0,    
\end{align}
which implies
\begin{align}\label{yaein}
 \lim_{T\rightarrow\infty}  \norm{ \tilde{\mJ}_{T}\, \tilde{\mJ}_{T-1}\, \cdots \,  \tilde{\mJ}_{2}} \, = \, \lim_{T \to \infty} \norm{ D\mF_\vtheta^T(\vz_1)}\, = \,  \infty. 
\end{align}
Accordingly, 
\begin{align}\label{eq:>1}
\exists \, \hat{n}
\in \mathbb{N} \hspace{.5cm} s.t. \hspace{.5cm} \forall m \geq \hat{n} 
\hspace{.8cm} \norm{ D\mF_\vtheta^m(\vz_1)} \, > \, 1.
\end{align}
Therefore $\norm{\tilde{\mJ}_s} \, > \,1 \,$ for some $s \in \{2, 3, \cdots, m \} \,$ $ (m \geq \hat{n})$. Otherwise
\begin{align}
 \norm{ D\mF_\vtheta^m(\vz_1)} \, = \, \norm{ \tilde{\mJ}_{m}\, \tilde{\mJ}_{m-1}\, \cdots \,  \tilde{\mJ}_{2}}\,\leq \, \norm{\tilde{\mJ}_{m}} \norm{ \tilde{\mJ}_{m-1}}  \cdots \norm{\tilde{\mJ}_{2}} \, \leq \, 1,
\end{align}
which is in contradiction to eq. \eqref{eq:>1}. Since $\tilde{\mJ}_s \in \mathcal{J}$, so 
$ \, \tilde{\sigma}_{\max} =  \sup  \bigg\{ \norm{\tilde{\mJ}_{k}}=
\sigma_{\max}(\tilde{\mJ}_{k})\, : \, \tilde{\mJ}_{k} \in \mathcal{J} \bigg\} >1$.
\end{proof}
%
\subsubsection{Proof of proposition \ref{prop:2}}\label{app:proof_prop2}
\begin{proof}
$(i)\,$ According to \eqref{eq:jac_chain_GTF} we have 
\begin{align}
\norm{\frac{\partial{\vz_t}}{\partial{\vz_r}}}_2 \, = \, \norm{(1-\alpha)^{t-r} \prod_{k=0}^{t-r-1}\tilde{\mJ}_{t-k}}_2 \, \leq \, (1-\alpha)^{t-r} \prod_{k=0}^{t-r-1} \norm{\tilde{\mJ}_{t-k}}_2 \, \leq \, \big[(1-\alpha)\tilde{\sigma}_{\max} \big]^{t-r},    
\end{align}
and 
\begin{align}\nonumber
\big[(1-\alpha)\tilde{\lambda}_{\min} \big]^{t-r} &\, \leq \, (1-\alpha)^{t-r} \prod_{k=0}^{t-r-1}  \lambda_{\min}(\tilde{\mJ}_{t-k}) \leq  (1-\alpha)^{t-r} \, \lambda_{\min}(\prod_{k=0}^{t-r-1}\tilde{\mJ}_{t-k})
\\[1ex]\label{}
& \, \leq \, (1-\alpha)^{t-r} \, \rho( \prod_{k=0}^{t-r-1}\tilde{\mJ}_{t-k}) \leq  \norm{(1-\alpha)^{t-r} \prod_{k=0}^{t-r-1}\tilde{\mJ}_{t-k}}_2 = \norm{\frac{\partial{\vz_t}}{\partial{\vz_r}}}_2. 
\end{align}
Therefore, 
\begin{align}\label{eq-sig-lamb}
\big[(1-\alpha)\tilde{\lambda}_{\min} \big]^{t-r} \, \leq \, \norm{\frac{\partial{\vz_t}}{\partial{\vz_r}}}_2 \, \leq \, \big[(1-\alpha)\tilde{\sigma}_{\max} \big]^{t-r}.    
\end{align}
Inserting $ \, \alpha = \alpha^{*} \, = \, 1- \frac{1}{\tilde{\sigma}_{\max}}$ (for $\, \tilde{\sigma}_{\max} \geq 1$) into the r.h.s. of \eqref{eq-sig-lamb} gives 
\begin{align}
\lim_{t \to \infty} \norm{\frac{\partial{\vz_t}}{\partial{\vz_r}}}_2 \, \leq \, \lim_{t \to \infty} \big[(\frac{1}{\tilde{\sigma}_{\max}})\tilde{\sigma}_{\max} \big]^{t-r} \, = \, 1,  
\end{align}
and so $\frac{\partial{\vz_t}}{\partial{\vz_r}}$ will not diverge for $t \to \infty$. 
\\[1ex]
$(ii) \, $ If $\tilde{\gamma} \, = \, 1$, then $ \tilde{\sigma}_{\max} = \tilde{\lambda}_{\min} \geq 1$. Hence, substituting $ \, \alpha^{*} \, = \, 1- \frac{1}{\tilde{\sigma}_{\max}} \, = \, 1- \frac{1}{\tilde{\lambda}_{\min}}$ in \eqref{eq-sig-lamb} yields
\begin{align}
\lim_{t \to \infty} \norm{\frac{\partial{\vz_t}}{\partial{\vz_r}}}_2 =1.
\end{align}
For $\tilde{\gamma} \, \neq \, 1$, inserting 
$\alpha^{*} \, = \, 1- \frac{1}{\tilde{\sigma}_{\max}} $ in \eqref{eq-sig-lamb} results in
\begin{align}\label{eq-sig-lamb-2}
0= \lim_{t \to \infty} \big( \tilde{\gamma}\big)^{t-r} \, \leq \, \lim_{t \to \infty} \norm{\frac{\partial{\vz_t}}{\partial{\vz_r}}}_2 \, \leq \, 1,    
\end{align}
and so $\, \lim_{t \to \infty} \norm{\frac{\partial{\vz_t}}{\partial{\vz_r}}}_2 $ may go to zero for $t \to \infty$. Moreover, obviously, the closer $\tilde{\gamma}$ is to $1$, the slower $\frac{\partial{\vz_t}}{\partial{\vz_r}}$ may vanish as $t \to \infty$.
\end{proof}
~\\
\begin{remark}\label{rem:Lip:RNN}
 According to Rademacher’s theorem, for Lipschitz-continuous RNNs $F_{\boldsymbol\theta}$ is differentiable almost everywhere and has a bounded derivative. Therefore, for such RNNs, the nonempty set $ \mathcal{S}_1$ (defined in \eqref{eq:sup:inf}) is bounded from above and so always has a supremum in $\mathbb{R}$.
\end{remark}
~\\
\begin{remark}
Assume that 
\begin{align}
\tilde{\sigma}_{\min} \, = \, \inf \bigg\{ \sigma_{\min}(\tilde{\mJ}_k) \, : \, \tilde{\mJ}_k \in \mathcal{J} \bigg\} \geq 0.    
\end{align}
Since $\tilde{\lambda}_{\min} \, \geq \, \tilde{\sigma}_{\min} \geq 0, \,$ from \eqref{eq-sig-lamb} we have
\begin{align}\label{}
\big[(1-\alpha)\tilde{\sigma}_{\min} \big]^{t-r} \, \leq \,\big[(1-\alpha)\tilde{\lambda}_{\min} \big]^{t-r} \, \leq \, \norm{\frac{\partial{\vz_t}}{\partial{\vz_r}}}_2 \, \leq \, \big[(1-\alpha)\tilde{\sigma}_{\max} \big]^{t-r}.    
\end{align}
\end{remark}
\subsubsection{Proof of proposition \ref{prop:sh}}\label{app:proof_prop_sh}
\begin{proof}
First, the dendPLRNN \eqref{eq:dendPLRNN} can be rewritten in the following form:
\begin{align}\label{eq:dendPLRNN_rewritten}
 \vz_t \, = \,  \mW^{B}_{\Omega(t-1)} \, \vz_{t-1} +  \mW \, \vh^{B}_{\Omega(t-1)} +\vh_0,
\end{align}
in which 
\begin{align}\nonumber
 \mD^{B}_{\Omega(t-1)}&:=  \sum_{b=1}^{B} \alpha_{b} \, \mD^{(b)}_{\Omega(t-1)} , \hspace{.2cm}
\\\nonumber
\vh^{B}_{\Omega(t-1)}&:=\sum_{b=1}^{B} \alpha_{b} 
\, \mD^{(b)}_{\Omega(t-1)} (-\vh_{b}), \hspace{.2cm}
\\\label{defs}
\mW^{B}_{\Omega(t-1)} 
&:= \mA  + \mW \, \mD^{B}_{\Omega(t-1)},
\end{align}
and $\mD^{(b)}_{\Omega(t-1)}= \text{diag} \big(d^{(b)}_{1,t-1}, d^{(b)}_{2,t-1}, \cdots, d^{(b)}_{M,t-1} \big)$ are diagonal binary indicator matrices with $d^{(b)}_{m,t-1}=1$ if $z_{m,t-1}> h_{m,b}$ and $0$ otherwise. Similarly, the
shPLRNN \eqref{eq:shPLRNN} can be brought into the form
\begin{align}\nonumber
    \vz_t & = \Big(\mA + \mW_1 \, \tilde{\mD}_{\Omega(t-1)} \mW_2 \Big) \vz_{t-1} + \mW_1  \tilde{\mD}_{\Omega(t-1)} \vh_2 + \vh_1 
    %
    \\ \label{eq:shPLRNN_re}
    & =:  \tilde{\mW}_{\Omega(t-1)} \vz_{t-1} +  \mW_1 \tilde{\vh}_{\Omega(t-1)} +\vh_1,
\end{align}
where $\tilde{\mD}_{\Omega(t-1)} = \text{diag} \big(\tilde{d}_{1,t-1}, \tilde{d}_{2,t-1}, \cdots, \tilde{d}_{L,t-1} \big)$ denotes an $L \times L$ diagonal binary indicator matrix with $d_{l,t-1}=1$ if $\sum_{j=1}^{M} w^{(2)}_{lj} \,z_{j,t-1}> -h^{(2)}_l$ and $0$ otherwise, where we used the notation $\mW_2 = \big[w^{(2)}_{ij}\big]$ and $\vh_2 = \big[h^{(2)}_i\big]$. In other words, it can be rewritten as the dendPLRNN \eqref{eq:dendPLRNN_rewritten}. Consequently, fixed points of \eqref{eq:shPLRNN} can be computed analogously to dendPLRNNs as 
\begin{align}\label{s-zm-fp1}
  \vz^{*1} = \Big(\mI-  \tilde{\mW}_{\Omega(t^{*1})} \Big)^{-1} \Big[\mW_1 \, \tilde{\vh}_{\Omega(t^{*1})} +\vh_1\Big] ,
\end{align}
where $\vz^{*1}=\vz_{t^{*1}}=\vz_{t^{*1}-1}$, and $\det(\mI-  \tilde{\mW}_{\Omega(t^{*1})}) = P_{ \tilde{\mW}_{\Omega(t^{*1})}}(1) \neq 0$, i.e. $ \tilde{\mW}_{\Omega(t^{*1})}$ has no eigenvalue equal to $1$ (otherwise we are dealing 
with a 
bifurcation or with a continuous set of marginally stable points). The same holds for cycles of \eqref{eq:shPLRNN}: Letting $t+n-1=: t^{*n}$, the periodic point $\vz^{*n}$ of an $n$-cycle is
\begingroup
\allowdisplaybreaks
\begin{align}\nonumber
 &\vz^{*n} 
%
%
\, = \,  \bigg(\mI- \prod_{i=1}^{n} \tilde{\mW}_{\Omega(t^{*n}-i)} \bigg)^{-1}
%
%
\bigg(\sum_{j=2}^{n} \Big[\prod_{i=1}^{n-j+1} \tilde{\mW}_{\Omega(t^{*n}-i)} \mW_1 \, \tilde{\vh}_{\Omega(t^{*n}-n +j-2)} \Big]\,
\\[1ex]\label{cycles}
&  \hspace{.9cm}
+\, \mW_1 \, \tilde{\vh}_{\Omega(t^{*n}-1)} 
 \, + \, \Big(\sum_{j=2}^{n} \prod_{i=1}^{n-j+1} \tilde{\mW}_{\Omega(t^{*n}-i)} + \mI \Big) \vh_1 \bigg),
\end{align}
\endgroup
where $\vz^{*n}=\vz_{t^{*n}}=\vz_{t^{*n}-n}$, if
$(\mI- \prod_{i=1}^{n} \tilde{\mW}_{\Omega(t^{*n}-i)})$ 
is invertible, i.e. 
$$\det \bigg(\mI-\prod_{i=1}^{n} \tilde{\mW}_{\Omega(t^{*n}-i)} \bigg)= P_{\prod_{i=1}^{n} \tilde{\mW}_{\Omega(t^{*n}-i)}}(1) \neq 0.$$
~\\
Now consider an $M$-dimensional dendPLRNN given by \eqref{eq:dendPLRNN} with $B$ bases
and $L = M \cdot B$. Furthermore, set $\widetilde{\bm{A}} := \bm{A}$, $\widetilde{\bm{h}}_1 := \bm{h}_0$, and
\begin{align}
    \widetilde{\bm{W}}_1 &:= \begin{bmatrix}
    \bm{W}\alpha_1 & \bm{W}\alpha_2 & \dots & \bm{W}\alpha_B
    \end{bmatrix} \in \mathbb{R}^{M \times L}, \\
    \widetilde{\bm{W}}_2 &:= \begin{bmatrix} 
    \mathbb{1}_{M \times M} \\
    \mathbb{1}_{M \times M} \\
    \vdots \\
    \mathbb{1}_{M \times M}
    \end{bmatrix} \in \mathbb{R}^{L \times M}, \hspace{.9cm}
    \widetilde{\bm{h}}_2 := - \begin{bmatrix} 
    \bm{h}_1 \\
    \bm{h}_2 \\
    \vdots \\
    \bm{h}_B
    \end{bmatrix} \in \mathbb{R}^{L}.
\end{align}
Then it follows that eq. \eqref{eq:dendPLRNN} can be written as an $M$-dimensional shPLRNN with hidden layer size $L$, 
\begin{align}
  \vz_t = \widetilde{\bm{A}}\bm{z}_{t-1} + \widetilde{\bm{W}}_1\phi\left(\widetilde{\bm{W}}_2\bm{z}_{t-1} + \widetilde{\bm{h}}_2\right) + \widetilde{\bm{h}}_1,
\end{align}
which completes the proof.
\end{proof}

\subsubsection{Proof of bounded orbits of \eqref{eq:bounded_shPLRNN}}\label{app:proof_boundedness}
\begin{proposition}
If $\, \rho(\mA) \, = \, \norm{\mA} < 1$, then every orbit of the clipped shPLRNN
\eqref{eq:bounded_shPLRNN} will be bounded.
\end{proposition}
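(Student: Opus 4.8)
The plan is to exploit the clipping construction to write the update \eqref{eq:bounded_shPLRNN} as a strictly contractive linear map plus a remainder that is bounded \emph{uniformly} in the state, and then apply the elementary geometric bound for affine recursions. First I would set $q := \norm{\mA}$; since $\mA$ is diagonal we have $\norm{\mA} = \rho(\mA)$, so the hypothesis gives $q < 1$, i.e. a genuine norm contraction for the linear part.

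The only substantive observation is a state-independent bound on the clipped nonlinearity. Because $\phi = \mathrm{ReLU}$ is $1$-Lipschitz, $|\phi(u+h)-\phi(u)| \le |h|$ for all real $u,h$; applying this coordinate-wise (with $u$ the $l$-th coordinate of $\mW_2\vz_{t-1}$ and $h$ the $l$-th coordinate of $\vh_2$) shows that every coordinate of $\phi(\mW_2\vz_{t-1}+\vh_2) - \phi(\mW_2\vz_{t-1})$ is bounded in modulus by the corresponding coordinate of $\vh_2$, hence $\norm{\phi(\mW_2\vz_{t-1}+\vh_2) - \phi(\mW_2\vz_{t-1})} \le \norm{\vh_2}$ regardless of $\vz_{t-1}$. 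Submultiplicativity of the operator norm together with the triangle inequality then gives the one-step estimate
\begin{align*}
\norm{\vz_t} \;\le\; q\,\norm{\vz_{t-1}} + C, \qquad C := \norm{\mW_1}\,\norm{\vh_2} + \norm{\vh_1}.
\end{align*}

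Finally I would iterate: induction on $t$ yields $\norm{\vz_t} \le q^{t}\norm{\vz_0} + C\sum_{k=0}^{t-1}q^{k} \le \norm{\vz_0} + C/(1-q)$ for every $t$, so each orbit remains in the ball of radius $\norm{\vz_0} + C/(1-q)$ and is therefore bounded; moreover $\limsup_{t\to\infty}\norm{\vz_t} \le C/(1-q)$, so any ball of radius exceeding $C/(1-q)$ is absorbing. There is no real obstacle here: the two points worth a line of care are that $\rho(\mA)=\norm{\mA}$ holds because $\mA$ is diagonal (so the eigenvalue hypothesis genuinely produces a norm contraction), and that the \emph{difference} of ReLUs in \eqref{eq:bounded_shPLRNN} is precisely what makes the remainder bounded — the term $\phi(\mW_2\vz_{t-1}+\vh_2)$ on its own grows linearly in $\vz_{t-1}$, so the clipping is essential, not cosmetic.
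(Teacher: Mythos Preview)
Your proof is correct and essentially identical to the paper's: both bound the clipped nonlinearity coordinatewise by $|h^{(2)}_l|$ (you via the $1$-Lipschitz property of $\phi$, yielding the slightly sharper constant $\norm{\vh_2}$ in place of the paper's $\sqrt{L}\max_l|h^{(2)}_l|$), then combine with the contraction $\norm{\mA}<1$ and a geometric-series argument. The only cosmetic difference is that the paper explicitly unrolls the recursion to $\vz_T = \mA^{T-1}\vz_1 + \sum_{j}\mA^j\mW_1\psi(\vz_{T-1-j}) + \sum_j\mA^j\vh_1$ before taking norms, whereas you iterate the one-step inequality $\norm{\vz_t}\le q\,\norm{\vz_{t-1}}+C$ by induction.
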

\begin{proof}
Consider the clipped shPLRNN of the form 
\begin{align}\nonumber
     \vz_t  &\, = \,  \mA \vz_{t-1} + \mW_1 \big[\phi(\mW_2 \vz_{t-1} + \vh_2)  
   - \phi\left(\mW_2 \vz_{t-1}\right)\big] + \vh_1
   \\[1ex]\label{eq:psi}
   &\, := \, \mA \, \vz_{t-1} \, + \, \mW_1 \, \psi(\vz_{t-1}) \, +\, \vh_1.
\end{align}
Obviously, for $\, \vh_2 = \big[h^{(2)}_l\big]$ and every $l \in \{1,2,\cdots, L \}$
\begin{align}
 \psi_l(\vz_{t-1})  \, = \,  \max\big(0, \sum_{j=1}^{M} w^{(2)}_{lj} \,z_{j,t-1}+h^{(2)}_l \big)-\max \big(0, \sum_{j=1}^{M} w^{(2)}_{lj} \,z_{j,t-1} \big) 
\, \leq \, \big|h^{(2)}_l \big|.
\end{align}
This implies
\begin{align}\label{eq:hmax}
\norm{\psi(\vz_{t-1}) }\, = \,\sqrt{\sum_{l=1}^{L} \big(\psi_l(z_{t-1})\big)^2} \, \leq \, \sqrt{L} \, h_{\max}  \, := \, \bar{h}_{\max}, 
\end{align}
where $\, h_{\max} \,=\, \max\limits_{1 \leq l \leq L} \big\{\big|h^{(2)}_l \big| \big\}\,$.  \\

Let $\{ \vz_1,\vz_2, \cdots,\vz_t, \cdots \}$ be an orbit of \eqref{eq:psi}. For $T \in \mathbb{N}$, recursively computing 
$\vz_2,\vz_3, \cdots,\vz_T $ yields
\begin{align}\nonumber
&\vz_2\, = \, \mA\, \vz_1 \, + \, \mW_1  \, \psi(\vz_1) \, + \,\vh_1
\\[1ex]\nonumber
&\vz_3\, = \, \mA^2\, \vz_1 \, + \, \mA\,  \mW_1  \,\psi(\vz_1) + \mW_1 \,\psi(\vz_2)  \, + \,\big[\mA+\mI \big] \vh_1
\\\nonumber
\vdots
\\\label{}
& \vz_{T}\, = \, \mA^{T-1}\, \vz_1 \, + \, \sum_{j=0}^{T-2}\mA^j \,  \mW_1 \,\psi(\vz_{T-1-j}) +\sum_{j=0}^{T-2} \mA^j \, \vh_1. 
\end{align}
Hence, due to \eqref{eq:hmax}, one concludes that 
\begin{align}
\norm{\vz_{T}}\, \leq \, \norm{\mA}^{T-1}\, \norm{\vz_1} \, + \, \bar{h}_{\max}\, \norm{\mW_1} \sum_{j=0}^{T-2} \norm{\mA}^j  +\sum_{j=0}^{T-2} \norm{\mA}^j \, \norm{\vh_1}. 
\end{align}
If $\, \norm{\mA} < 1$, then $\displaystyle{\lim_{T \to \infty}\norm{\mA}^{T-1}} \, = \, 0 \,$, and so
\begin{align}\label{}
 \displaystyle{\lim_{T \to \infty}\norm{ \vz_{T}}} \, \leq \,  \, \bar{h}_{\max} \, \norm{\mW_1} \sum_{j=0}^{\infty} \norm{\mA}^j  +\sum_{j=0}^{\infty} \norm{\mA}^j \, \norm{\vh_1} \, = \, \frac{\bar{h}_{\max}\, \norm{\mW_1}+ \norm{\vh_1}}{1-\norm{\mA}} \, < \, \infty,
\end{align}
 which completes the proof.
\end{proof}
\subsection{Training Protocol}
\paragraph{General procedure}\label{appx:general_training} 
Given a times series $\bm{x}_{1:T}$, we train the model using BPTT with GTF in the following manner: Per epoch, we sample sub-sequences of length $\tilde{T}$ from the observed orbit, $\bm{\tilde{x}}^{(p)}_{1:\Tilde{T}} := \bm{x}_{t_p:t_p+\tilde{T}}$, where $t_p \in [1, T-\Tilde{T}]$ is drawn at random. We arrange multiple sequences in a batch $\left\{\bm{\tilde{x}}^{(p)}_{1:\Tilde{T}}\right\}_{p=1}^S$, where $S$ denotes the batch size. For a single parameter update, we first estimate forcing signals for the entire batch 
using Eq. \eqref{eq:invTF} applied to each time step in each sequence, i.e. $\hat{\bm{z}}^{(p)}_t = \bm{G}^{-1}_{\bm{\varphi}}(\tilde{\bm{x}}^{(p)}_t)$. We then take the estimated teacher signal for the first time step of each batch, $\hat{\bm{z}}^{(p)}_1$, as the initial condition for the RNN, which we propagate forward in time according to $\bm{z}_t = \bm{F}_{\bm{\theta}}(\bm{\tilde{z}}_{t-1})$ (see sect. \ref{subsec:rec_modles}) to produce predictions $\bm{z}^{(p)}_{2:\Tilde{T}}$. These are mapped back into observation space via $\bm{\hat{\tilde{x}}}^{(p)}_t = \bm{G}_{\bm{\varphi}}(\bm{z}^{(p)}_t)$. The MSE between ground truth and predicted orbits is then minimized according to
\begin{align}
    \mathcal{L}_\textrm{MSE}\left(\left\{\bm{\tilde{x}}^{(p)}_{2:\tilde{T}}\right\}, \ \left\{\bm{\hat{\tilde{x}}}^{(p)}_{2:\tilde{T}}\right\}\right) = \frac{1}{S(\Tilde{T}-1)}\sum_{p=1}^S \sum_{t=2}^{\Tilde{T}} \norm{\ \bm{\tilde{x}}^{(p)}_t - \bm{\hat{\tilde{x}}}^{(p)}_t \ }^2_2 .
\end{align}

In all experiments, we use the RAdam \citep{liu2019radam} optimizer with a learning rate starting at $10^{-3}$ which is exponentially reduced to reach $10^{-6}$ at the end of training.
For all datasets, we trained for $5000$ epochs, where one epoch is defined as processing of $50$ batches of size $S=16$. This comes down to a total of $250,000$ parameter updates in each training run. For the Lorenz-63 and Lorenz-96 and the empirical ECG data, we used a sequence length of $\Tilde{T} = 200$. For the EEG data, we used only $\Tilde{T} = 50$.

\paragraph{aGTF annealing protocol}\label{appx:annealing}
 Note that the Jacobians $\tilde{\mJ}_t$ in Eq. \eqref{eq:jac_prod_power} implicitly depend on $\alpha$. Hence, we replace the Jacobians of the forced model by data-inferred Jacobians $\hat{\mJ}_t$ evaluated at estimated teacher signals $\hat{\vz}_t$
 in latent space.
 This approximation generally holds when the model is either strongly forced or when the model dynamics is already close to that of the observed teacher system. To approach this scenario, we employ an $\alpha$-annealing protocol which starts with strong forcing at the beginning of training when the model is still far off from the observed system, and then smoothly decreases forcing throughout training depending on the model's Jacobians as the observed system is captured increasingly better. This procedure is formally described in Algorithm \ref{alg:adaptive_gtf}. Given a batch of teacher signals, we first compute Jacobians at each time step for each sequence in the batch and the norm of Eq. \eqref{eq:jac_prod_power}. We then set $\alpha=0$ if the maximum norm for the current batch is smaller than $1$, and else compute $\alpha$ according to \eqref{eq:alpha_geomean}. Starting with strong forcing at the beginning of training, i.e. $\alpha_0 = 1$, $\alpha_n$ is set up to decay exponentially over the course of training
 until it hits a lower bound given by the norm of
 \eqref{eq:jac_prod_power} of the converged model. This is achieved by using an exponential moving average
 for $\alpha_n$ throughout training, which is controlled by the hyperparameter $\gamma$:
 \begin{equation}
     \alpha_n = 
     \begin{cases}
         (1-\gamma)\alpha + \gamma \alpha_{n-1}, & \text{if} \ \alpha < \alpha_{n-1} \\
         \alpha, & \text{otherwise},
     \end{cases}
 \end{equation}
 where the index-free $\alpha$ is given by Eq. \eqref{eq:alpha_geomean}. Note that we replace $\alpha_n$ with the estimated $\alpha$ of the current training batch if it exceeds the current value $\alpha_{n-1}$. This is to avoid exploding gradients caused by sudden bifurcations during RNN training.
 
\begin{algorithm}[H]
\caption{Adaptive GTF}
\label{alg:adaptive_gtf}
\begin{algorithmic}[1]
\REQUIRE $\alpha_{n-1}$ estimate at previous optimization step, update interval $k$, decay rate $\gamma$, batch of forcing signals $\{\hat{\vz}^{(p)}_{1:T}\}_{p=1}^S$, RNN $\mF_{\theta_n}$
\IF{$n \mod{k} == 0$} 
    \STATE $\hat{\mJ}^{(p)}_{t} = Jacobian(\mF_{\theta_n}, \hat{\vz}^{(p)}_t)$  \qquad \qquad \qquad \qquad \COMMENT{$\triangleright$ Compute Jacobians manually or via automatic differentiation}
    \STATE $\kappa = \max\limits_{p} \norm{\mathcal{\mG}(\hat{\mJ}^{(p)}_{T:2})}$ \qquad \qquad \qquad \qquad \qquad \qquad \qquad \qquad \quad \ \COMMENT{$\triangleright$ Approximate $\norm{\mathcal{G}}$ using one of \eqref{eq:geomean_explog}, \eqref{eq:arithmetic_approx}, \eqref{eq:geomean_upper_bound}}
    \STATE $\alpha = \max{\left(0, 1 - \frac{1}{\kappa}\right)}$  \qquad \qquad \qquad \qquad \qquad \qquad \qquad \qquad \qquad \qquad \quad \ \ \COMMENT{$\triangleright$ Set $\alpha=0$ when Jacobians converge} 
    \IF{$\alpha > \alpha_{n-1}$}
        \STATE $\alpha_n = \alpha$
    \ELSE
        \STATE $\alpha_n = (1-\gamma)\alpha + \gamma \alpha_{n-1}$
    \ENDIF
\ELSE
    \STATE $\alpha_n = \alpha_{n-1}$ 
\ENDIF
\end{algorithmic}
\end{algorithm}
To reduce the resulting training slow-down caused mainly by computation of Jacobians for each and every batch, one could update $\alpha_n$ only every $k$-th optimization step. On the other hand, too sparse $\alpha$-updates may lead to rather lazy reactions to sudden bifurcations in RNN dynamics. In practice, we found $k = 5$ to work well, not significantly harming convergence in training. For all experiments on aGTF we fixed $k=5$, $\alpha_0=1$, and $\gamma=0.999$ (i.e., these parameters do not 
need a grid search but could simply be set to the reasonable ad-hoc values used here).

\paragraph{Approximation of $\mathcal{\mG}$}\label{appx:geomean_approx}
To determine the optimal forcing $\alpha$ 
\eqref{eq:alpha_geomean}, we need an efficient and numerically stable estimate of Eq. \eqref{eq:jac_prod_power}. To this end, we approximate the principal logarithm of the Jacobian product by a sum of principal logarithms:
\begin{equation}\label{eq:geomean_explog}
    \mathcal{\mG}(\hat{\mJ}_{T:2}) = \left(\prod_{k=0}^{T-2}\hat{\bm{J}}_{T-k}\right)^{\frac{1}{T-1}} \overset{\circledone}{=} \exp{\left(\frac{1}{T-1}\log{\left(\prod_{k=0}^{T-2}\hat{\bm{J}}_{T-k}\right)}\right)} \approx  \exp{\left(\frac{1}{T-1}\sum_{t=2}^T\log{\hat{\bm{J}}_{t}}\right)}. 
\end{equation}
Note that equality $\circledone$ strictly holds if the Jacobian product is \textit{non-singular} \citep{higham2011}. 
The approximation made here is exact if Jacobians $\hat{\mJ}_{T:2}$ commute under multiplication.
While this assumption does not hold in general, we find that at least for the shPLRNN trained on the systems and data introduced in this work, Jacobians approximately commute (Fig. \ref{fig:jac_commuting}).
Furthermore, in practice we find that in most cases the arithmetic mean of Jacobians
\begin{equation}\label{eq:arithmetic_approx}
    \mathcal{G}\left(\hat{\mJ}_{T:2}\right) \approx \frac{1}{T-1}\sum_{t=2}^{T} \hat{\mJ}_t
\end{equation}  
can be used as a plug-in replacement for the approximation above, as it produces similar $\alpha$ estimates, as shown in Fig. \ref{fig:alpha_estimates}. This replacement is mainly motivated by the runtime improvement for the arithmetic mean over the matrix exponentiation and logarithms involved in computing \eqref{eq:geomean_explog}.

\begin{figure}[!ht]
    \centering
	\includegraphics[width=0.6\linewidth]{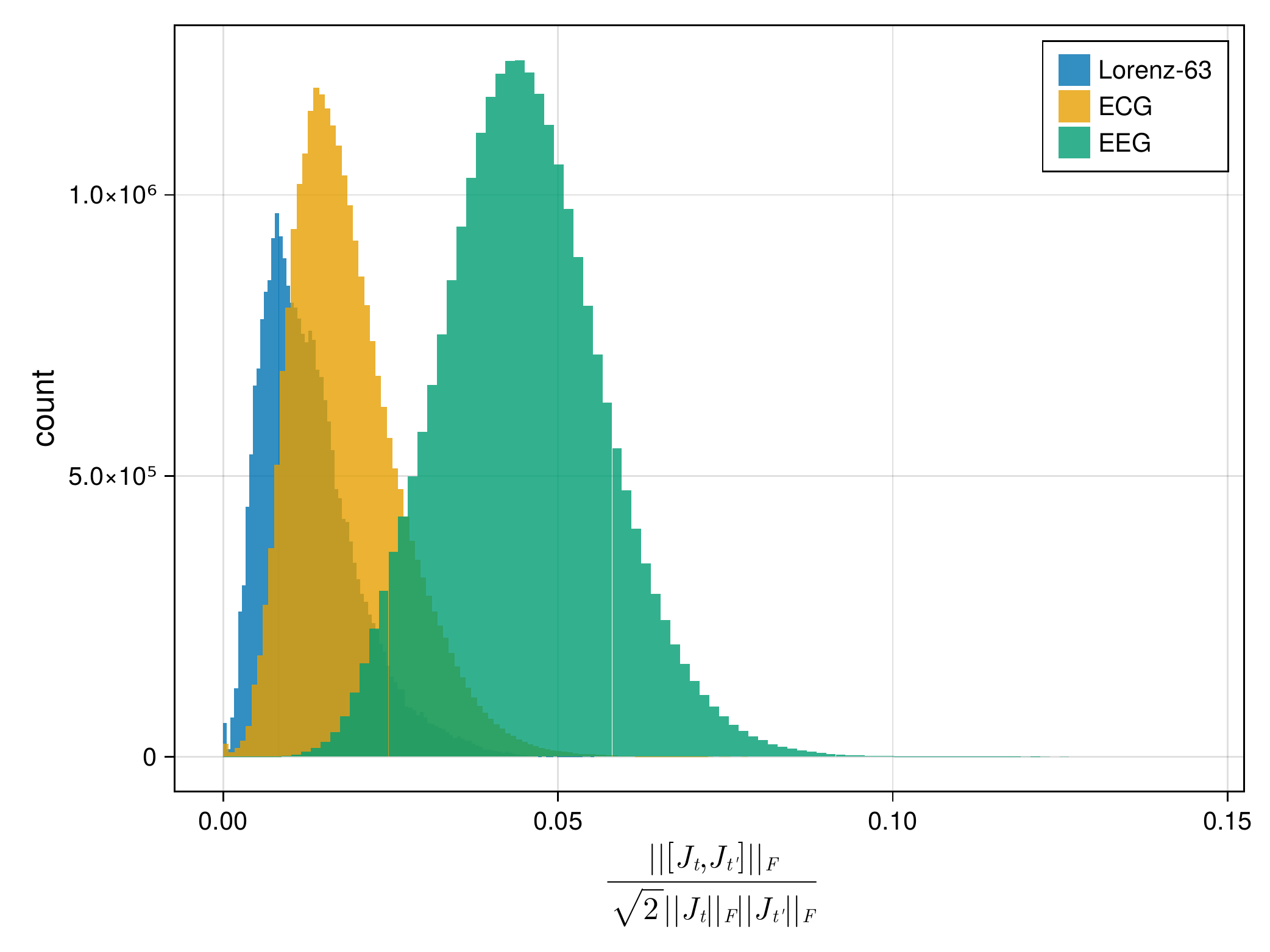}
	\caption{Histograms of relative Jacobian commutator norms for converged shPLRNNs \eqref{eq:bounded_shPLRNN} trained on different datasets. We measure the commutativity of Jacobians by first drawing an orbit of length $T=5000$ and computing the ratio between the Frobenius norm of the commutator of all possible permutations of pair-wise Jacobians in the orbit, and the respective tight upper bound on the commutator norm \citep{bottcher2008frobenius}. For all of the systems shown (Lorenz-63, ECG and EEG), most mass is concentrated below $5\%$ of the upper bound, which is a reference point for maximally non-commuting Jacobians.}
    \label{fig:jac_commuting}
\end{figure}
\begin{figure}
    \centering
    \begin{subfigure}[b]{.49\textwidth}
        \includegraphics[width=\textwidth]{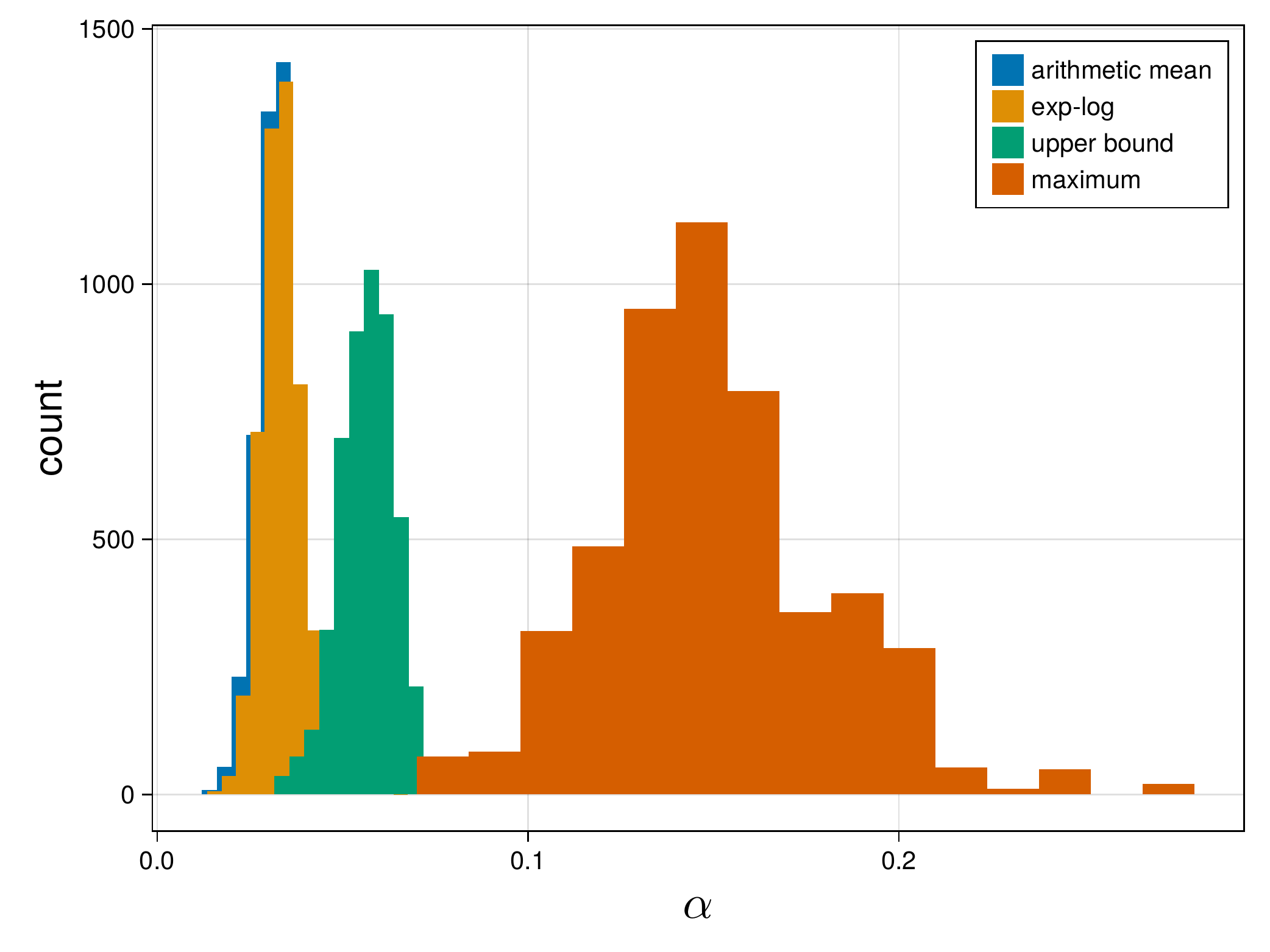}
    \end{subfigure}
    \begin{subfigure}[b]{.49\textwidth}
        \includegraphics[width=\textwidth]{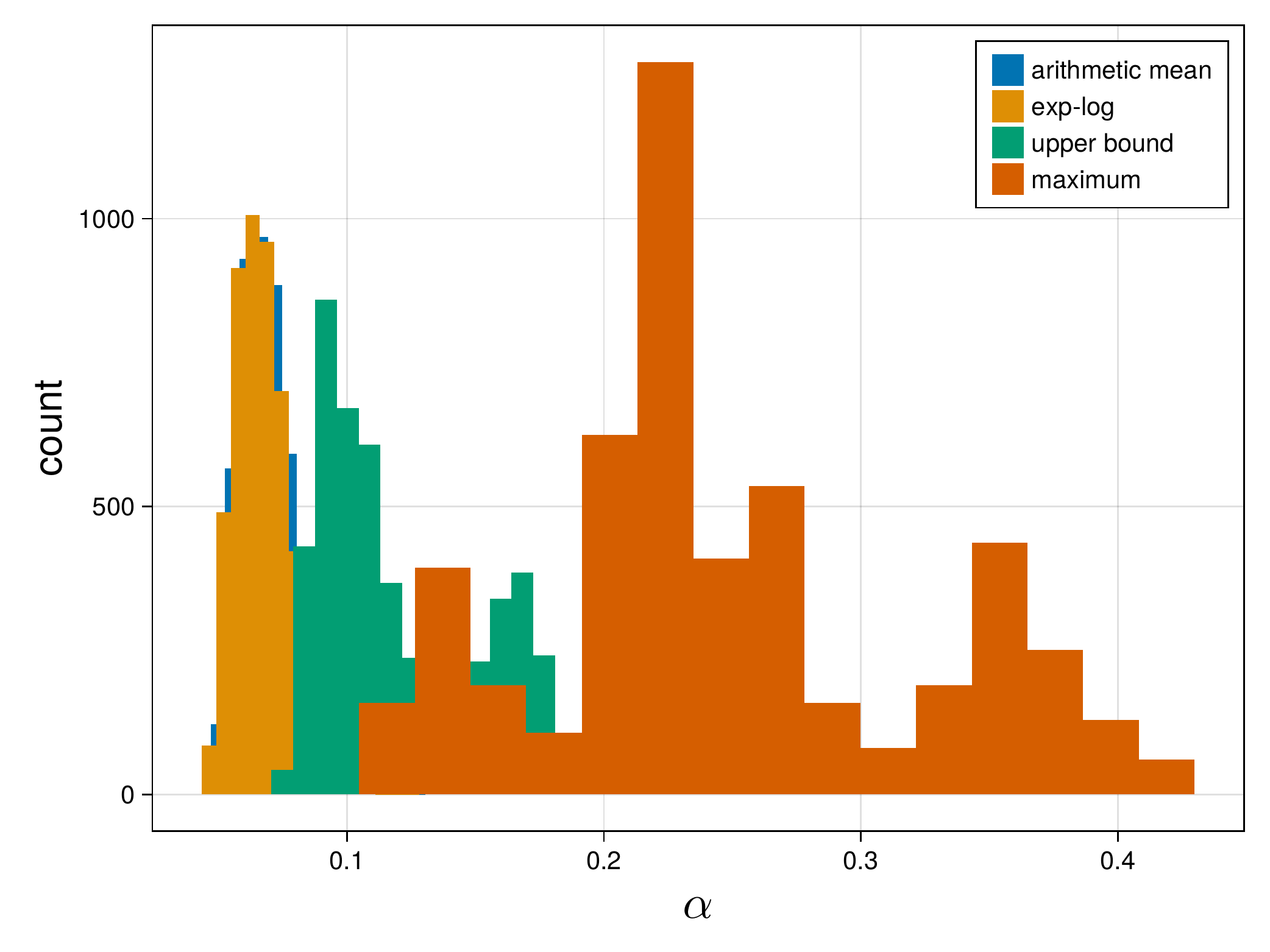}
    \end{subfigure}
    \caption{Histograms of different $\alpha$ estimates for a shPLRNN trained on the Lorenz-63 system (left) and on the ECG data (right) after convergence to the desired dynamics. 
    Legend labels refer to the different approximations: `arithmetic mean' \eqref{eq:arithmetic_approx}, `exp-log' \eqref{eq:geomean_explog}, `upper bound' \eqref{eq:geomean_upper_bound} and `maximum' \eqref{eq:sigma_max_estimate}. To produce this plot, we sampled $5000$ sequences of length $T \in \left[100, 200\right]$ and computed $\alpha$ according to Eq. \eqref{eq:alpha_geomean}. Estimates for the `arithmetic mean' and `exp-log' approximations highly correlate for both datasets (Pearson's $r \approx 0.996$ across all data points in both datasets). For comparison, estimates of $\alpha$ computed through the Jacobian norm upper bound of the shPLRNN given by Eq. \eqref{eq:sigma_upper_bound} are $\alpha \approx 0.87$ for the ECG and $\alpha \approx 0.79$ for the Lorenz-63.}
    \label{fig:alpha_estimates}
\end{figure}

We can also
approximate $\norm{\mathcal{\mG}(\hat{\mJ}_{T:2})}$ by using the upper bound
\begin{align}\label{eq:geomean_upper_bound}
\begin{split}
    \norm{\mathcal{\mG}(\hat{\mJ}_{T:2})} = \norm{\left(\prod_{k=0}^{T-2}\hat{\bm{J}}_{T-k}\right)^{\frac{1}{T-1}}} &\leq \left(\prod_{k=0}^{T-2}\norm{\hat{\bm{J}}_{T-k}}\right)^{\frac{1}{T-1}} \\ &= \exp\left(\frac{1}{T-1}\sum_{t=2}^T \log\norm{\hat{\bm{J}}_{T-k}}\right) \\
    &= \exp\left(\frac{1}{T-1}\sum_{t=2}^T \log \sigma_{max}\left(\hat{\bm{J}}_{T-k} \right)\right).
\end{split}
\end{align}
While technically accurate, the upper bound is biased towards higher values for $\alpha$ than generally required in practice, similar to the estimates in Eq. \eqref{eq:sigma_upper_bound} and \eqref{eq:sigma_max_estimate} (cf. Fig. \ref{fig:alpha_estimates}).

\paragraph{Line search for optimal $\alpha$}\label{appx:line_search}
For comparison to the performance of our aGTF procedure outlined above, we also determined optimal settings for $\alpha$ through a line search over different values $\alpha \in \left[0, 1\right]$ in steps of $0.05$. Figure \ref{fig:alpha_gs} shows $D_{\textrm{stsp}}$ and $D_H$ values for shPLRNNs \eqref{eq:bounded_shPLRNN} on different datasets against $\alpha$. As some of the graphs indicate, performance is often not overly sensitive to the precise choice of $\alpha$, and generally on par with the automatic adjustment of $\alpha$ in aGTF, remediating the need for grid search. 

\paragraph{Latent model regularization} \label{appx:model_reg}
Performance of the annealing protocol can be improved even further
with the manifold attractor regularization suggested in \citet{schmidt2021identifying}, which 
adds the following term with regularization factor $\lambda_{reg}$ to the loss: 
\begin{equation}\label{eq:l2_reg}
    \mathcal{L}_{\textrm{reg}} = \lambda_\textrm{reg} \left( \norm{\mathbb{1} - \mA}_F^2 + \norm{\mW_1}_F^2 + \norm{\mW_2}_F^2 + \norm{\vh_1}_2^2 + \norm{\vh_2}_2^2 \right)
\end{equation}
This has two benefits: 1) It pushes the shPLRNN's Jacobian towards the identity, thereby improving error propagation \cite{schmidt2021identifying} and reducing the amount of necessary forcing by already taming exploding/ vanishing gradients, and 2) it improves accuracy of the approximation of $\mathcal{\mG}$ (see Eq. \eqref{eq:geomean_explog}).

\paragraph{Observation model regularization}
When using a trainable linear observation model \eqref{eq:obs_model}, we need to invert the model to obtain teacher signals in latent space \eqref{eq:invTF}. However, we found that very occasionally $\mB$ becomes ill-conditioned, i.e. close to singular.
To avoid this,
one can regularize the condition number of $\mB$:
\begin{equation}\label{eq:cond_num_reg}
    \mathcal{L}_{cn} = \lambda_{cn} \left(1 - \frac{\sigma_{max}(\mB)}{\sigma_{min}(\mB) + \epsilon} \right)^2,
\end{equation}
where $\sigma_{max}(\mB)$ and $\sigma_{min}(\mB)$ are the largest and smallest singular values of $\mB$, respectively, $\lambda_{cn}$ is a hyperparameter and $\epsilon$ ($=10^{-8}$) a small number added for numerical stability. This regularization pushes the condition number towards $1$, thus
ensuring invertibility. 

\begin{figure}
    \centering
    \begin{subfigure}[b]{.49\textwidth}
        \includegraphics[width=\textwidth]{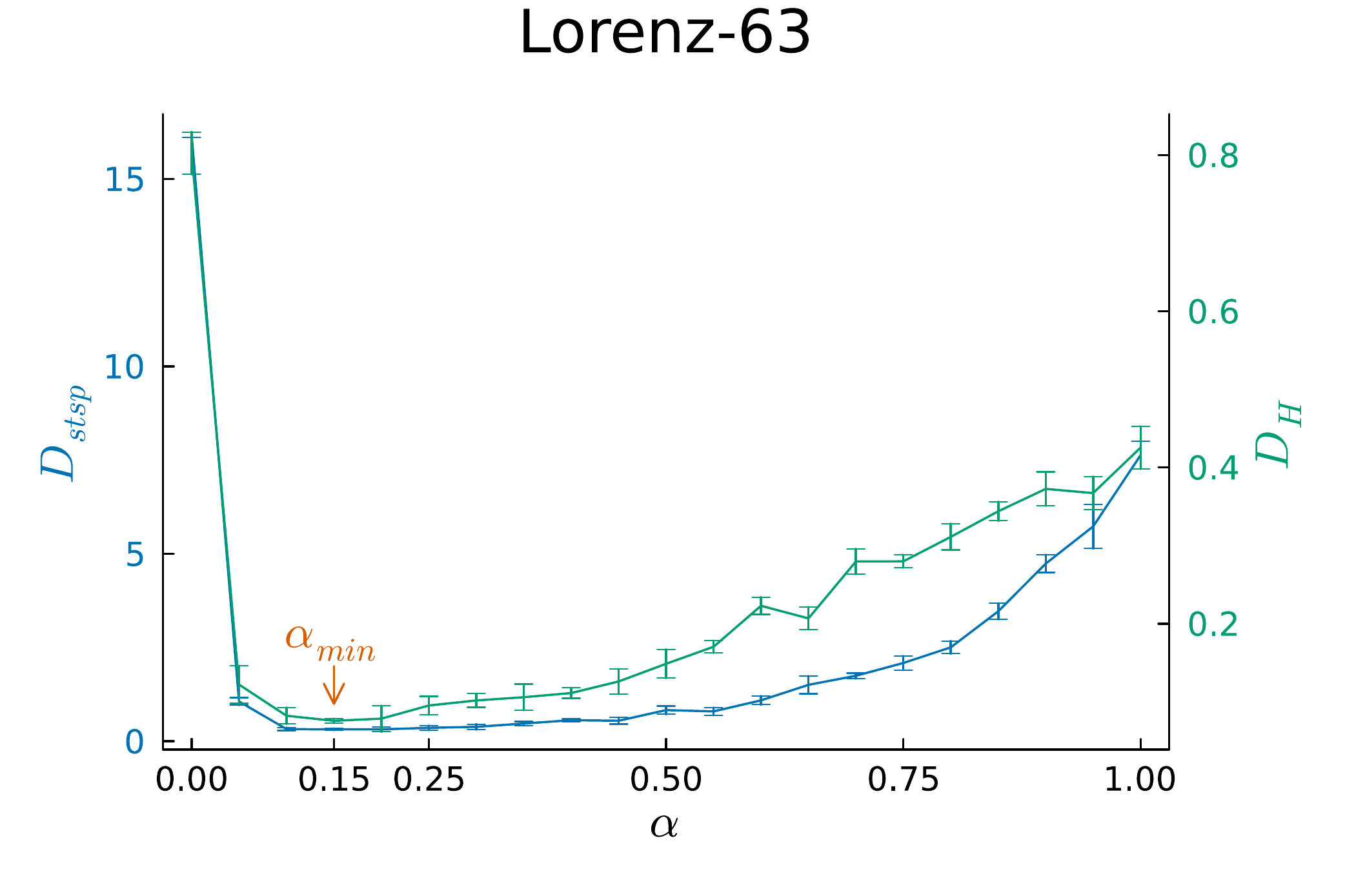}
    \end{subfigure}
    \begin{subfigure}[b]{.49\textwidth}
        \includegraphics[width=\textwidth]{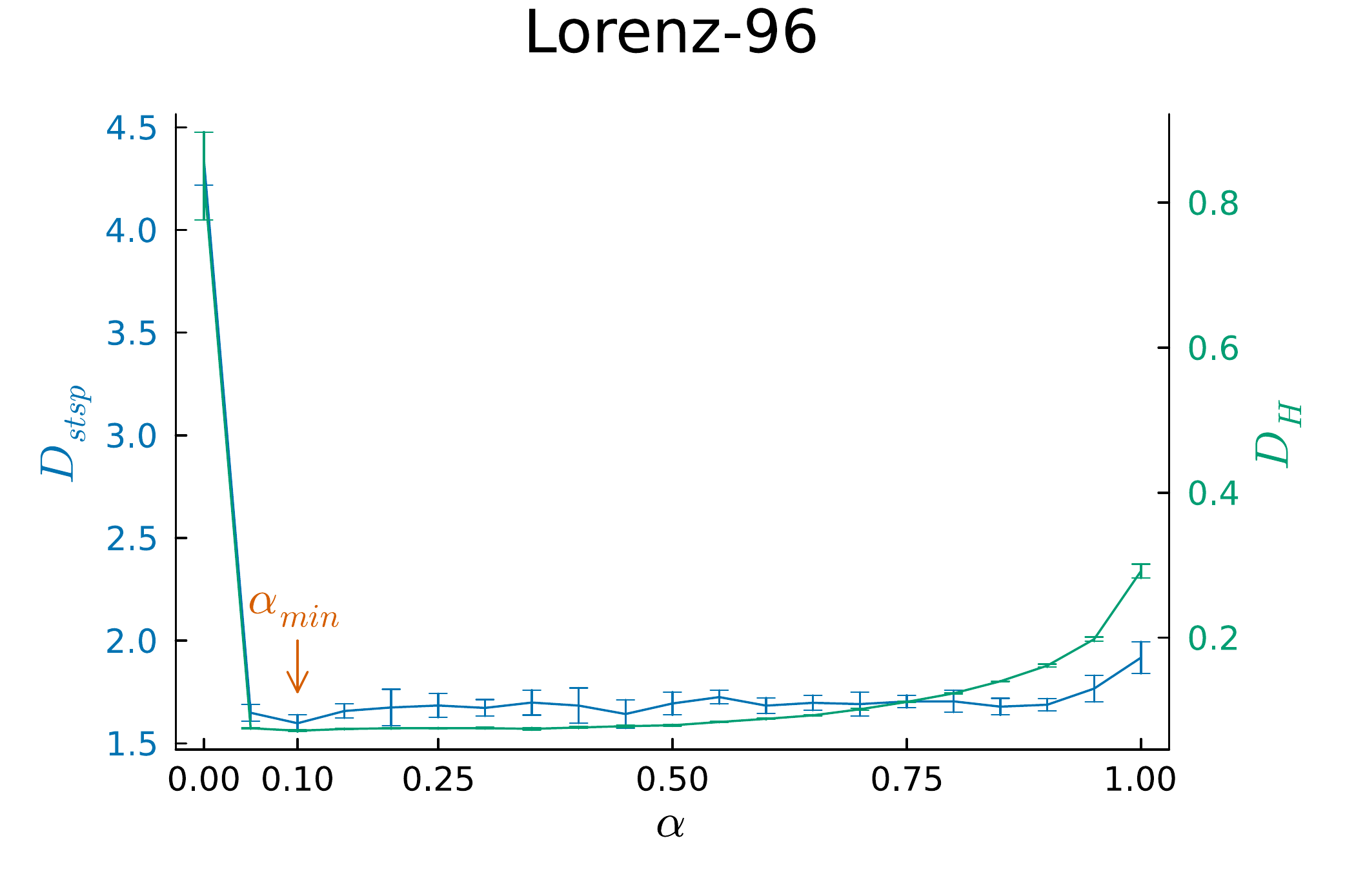}
    \end{subfigure}
    \begin{subfigure}[b]{.49\textwidth}
        \includegraphics[width=\textwidth]{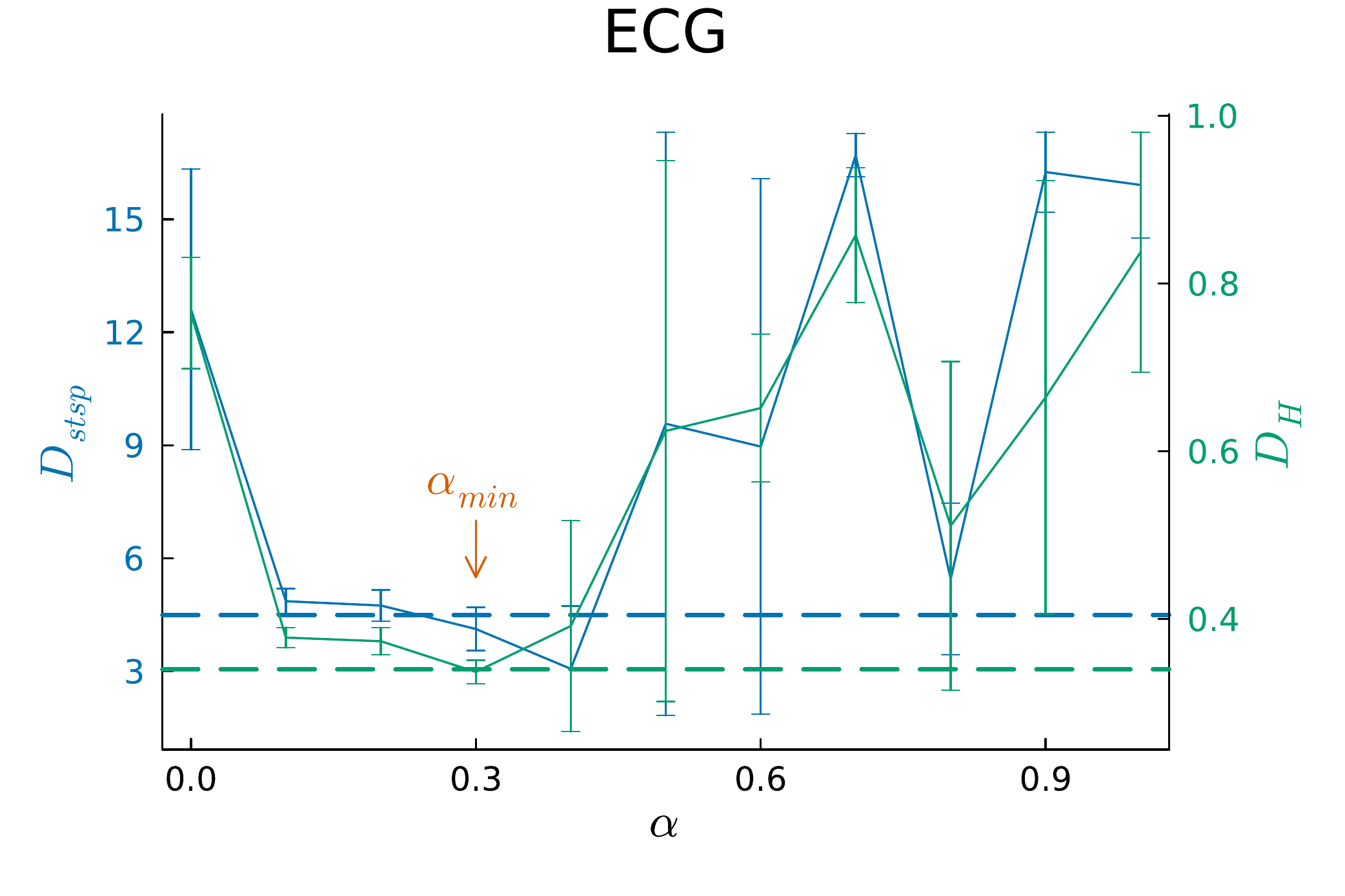}
    \end{subfigure}
    \begin{subfigure}[b]{.49\textwidth}
        \includegraphics[width=\textwidth]{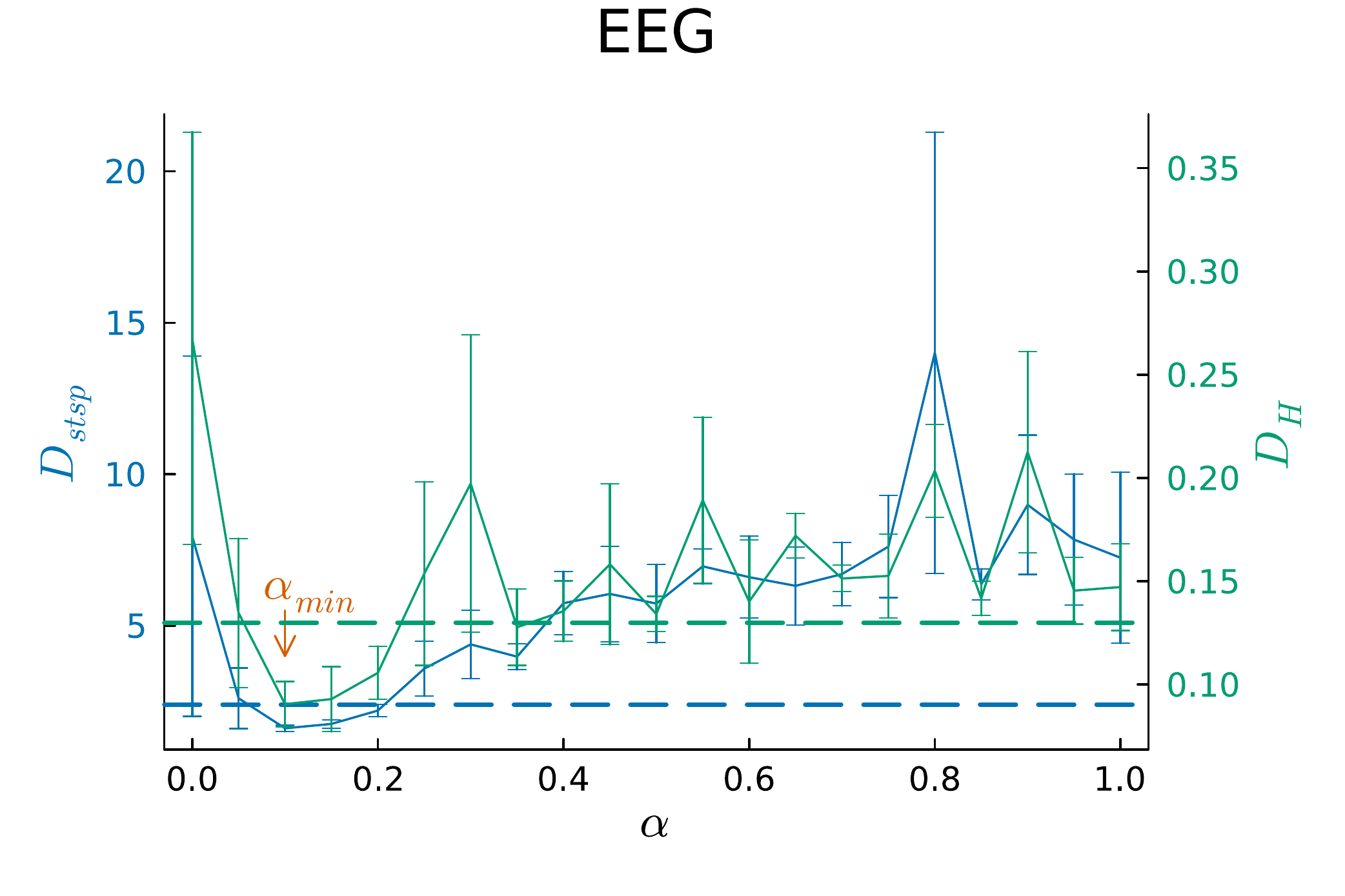}
    \end{subfigure}
    \caption{Line search over different $\alpha$ values. The minima lie at $\alpha_{min} \approx 0.15$ for the Lorenz-63, and $\alpha_{min} \approx 0.1$ for the Lorenz-96 and EEG data, respectively, while for the ECG data we find $\alpha_{min} \approx 0.3$. 
    There appears to be a somewhat larger $\alpha$-range over which performance is similarly good, and thus some lenience regarding the precise adjustment of this value. 
    Dashed blue ($D_\textrm{stsp}$) and green ($D_H$) lines indicate performance levels of aGTF on ECG and EEG data, respectively.}
    \label{fig:alpha_gs}
\end{figure}

\subsection{Benchmark Systems and Real-World Data}\label{appx:datasets}
\paragraph{Benchmark: Lorenz-63}\label{pg:supp:lor63}
Introduced in \citet{lorenz1963deterministic} as one of the first ODE systems for which chaotic behavior was demonstrated, the 3d Lorenz-63 system has become the most common benchmark in the DS reconstruction literature. Designed as a simple model of atmospheric convection, the system exhibits different routes to chaos like period doubling and homoclinic bifurcations and, due to a fundamental symmetry, exhibits the famous butterfly-wing shape (see Fig. \ref{fig:lor63_reconstruction}, \citet{perko2001}).
The system is described by the following set of ODEs:
\begin{align*}
    \dot{x} &= \sigma (y- x) \\
    \dot{y} &= x(\rho-z)-y \\
    \dot{z} &= xy - \beta z , \\
\end{align*}
where we place the system into the chaotic dynamical regime with $\sigma=10$, $\beta=\frac{8}{3}$ and $\rho=28$. For each of the training and test sets we simulate a trajectory of length $T = 10^5$, each starting at a different random initial condition $\bm{u}_0 = (x_0, y_0, z_0)^T \sim \mathcal{N}(\bm{0}, \mathbb{1}_{3\times3})$, using the \href{https://juliadynamics.github.io/DynamicalSystems.jl/latest/}{DynamicalSystems.jl} \citep{Datseris2018} Julia library. The ODE system is integrated using a Runge-Kutta scheme with adaptive step size and a read-out interval of $\Delta t = 0.01$.  Furthermore, we contaminate the training set with Gaussian observation noise, using a noise level of $5\%$ of the data standard deviation. Finally, we standardize each dimension of both training and test set to zero mean and unit variance.

\begin{figure}[!htp]
    \centering
\includegraphics[width=0.90\linewidth]{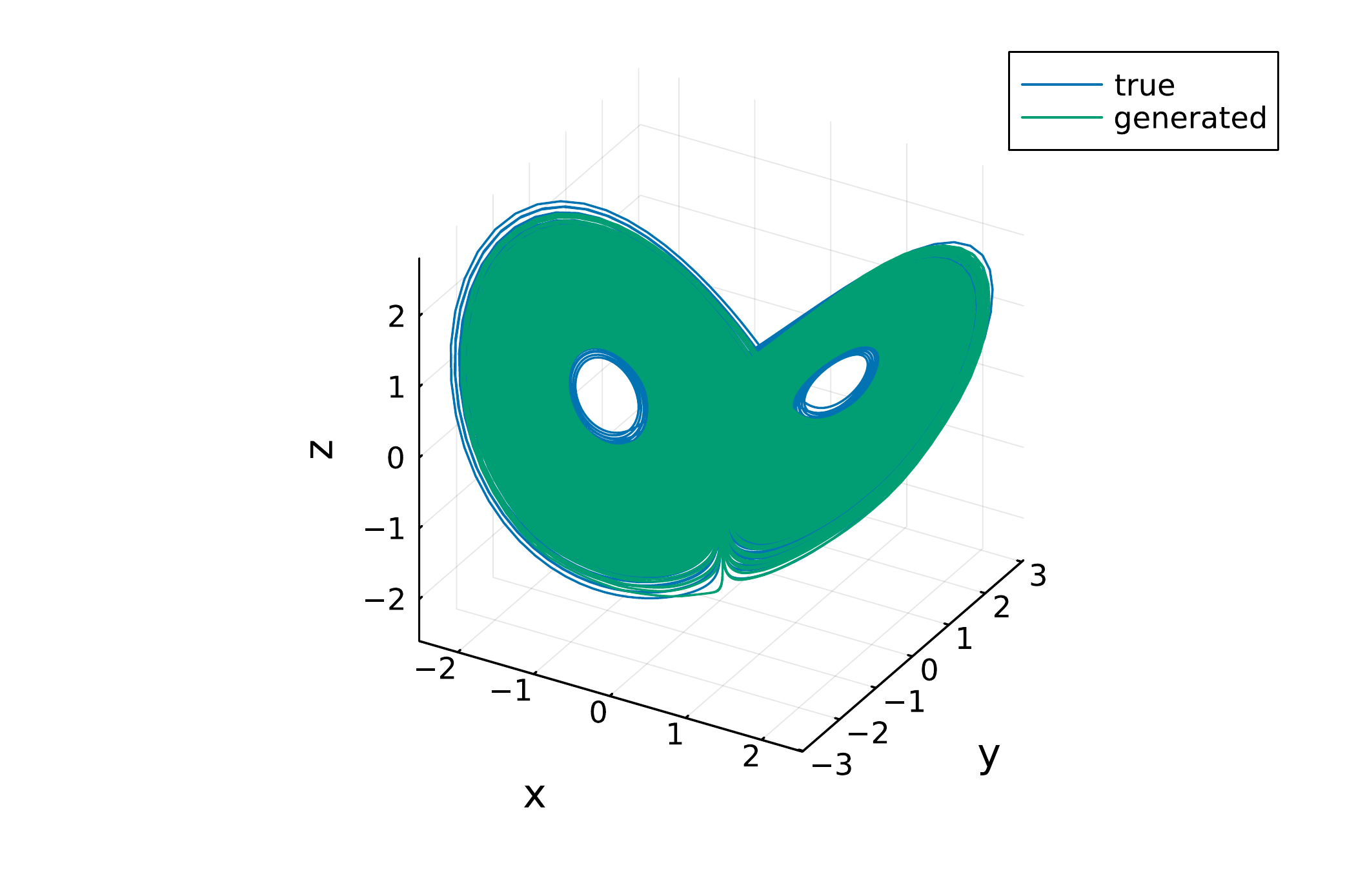}
 \vspace{-.6cm}   
    \caption{Example reconstruction of the Lorenz-63 system by shPLRNN+GTF. Note that the model freely generates the dynamics given the first time point of the test set.}
    \label{fig:lor63_reconstruction}
\end{figure}

\paragraph{Benchmark: Lorenz-96}\label{pg:supp:lor96}
A higher-dimensional, spatially extended system for atmospheric convection with local neighborhood interactions was suggested in \citet{lorenz1996predictability}. The system can be formulated in arbitrarily high dimensions: 
\begin{align*}
    \dot{x}_k = (x_{k+1} - x_{k-2})x_{k-1} - x_k + F, \qquad k=1 \dots N
\end{align*}
with dynamical variables $\{x_k\}$, constant forcing term $F$ and dimension $N$. For our experiments we choose $N=20$ and $F=16$, such that the system is situated within the chaotic regime. We create a training and test set using the same protocol as for the Lorenz-63 model, i.e. we draw two trajectories, one for training and one for testing. The training trajectory is contaminated with $5\%$ Gaussian noise, and both resulting datasets are standardized to have zero mean and unit variance on each dimension. See Fig. \ref{fig:lor96_reconstruction} for an excerpt of the system dynamics in form of a heatmap, together with a reconstruction using the method employed in this work.

\begin{figure}[!htp]
    \centering
\includegraphics[width=0.99\linewidth]{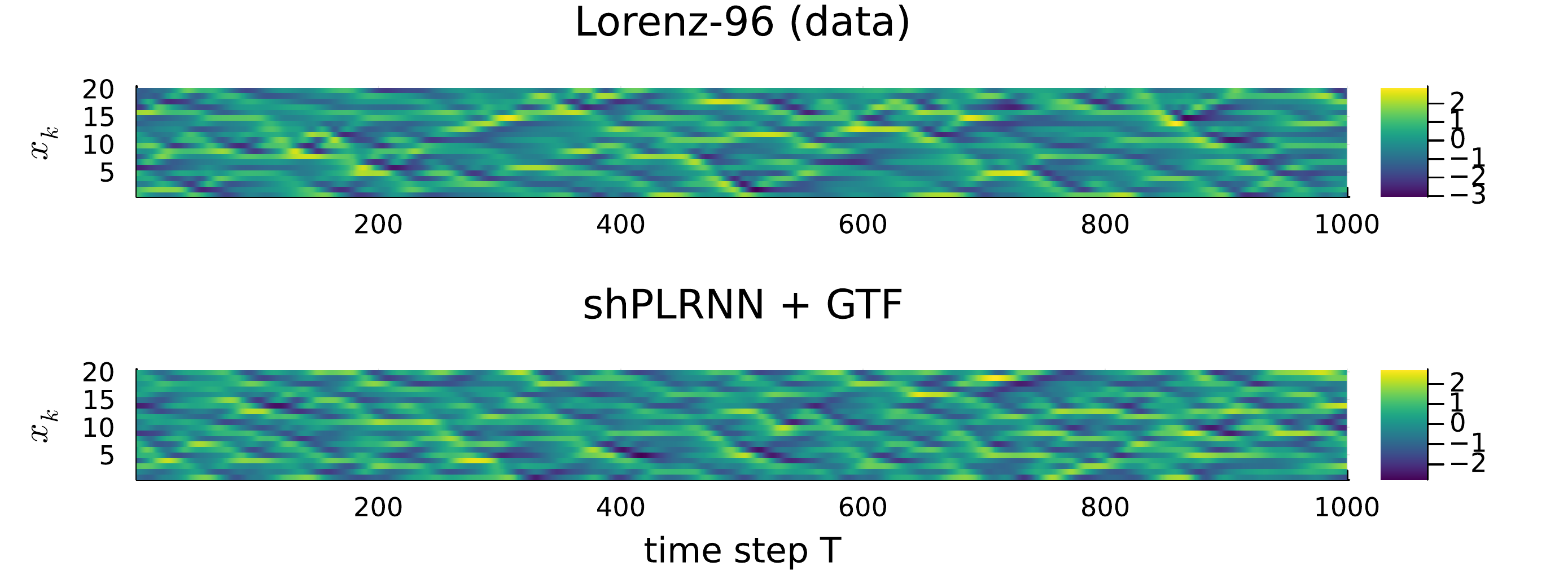}
    \caption{Heatmaps of an excerpt of the Lorenz-96 test set (see \ref{pg:supp:lor96}) and an example reconstruction using the shPLRNN+GTF.}
    \label{fig:lor96_reconstruction}
\end{figure}

\paragraph{Benchmark: Multiscale Lorenz-96}\label{pg:supp:ms_lor96}
Building on the original Lorenz-96 model, \citet{thornes2017use} introduced an extended version which 
models atmospheric weather phenomena evolving on multiple temporal and spatial scales  
through  
nested sets of dynamical variables. The set of ODEs is given by
\begin{align}
\begin{split}
\frac{{dX_k}}{{dt}} &= X_{k-1}(X_{k+1} - X_{k-2}) - X_k + F - \frac{{hc}}{{b}} \sum_{j=1}^{J} Y_{j,k}, \\
\frac{{dY_{j,k}}}{{dt}} &= -cbY_{j+1,k}(Y_{j+2,k} - Y_{j-1,k}) - cY_{j,k} + \frac{{hc}}{{b}}X_k - \frac{he}{d}\sum_{i=1}^{I} Z_{i,j,k}, \\
\frac{{dZ_{i,j,k}}}{{dt}} &= ed Z_{i-1,j,k}(Z_{i+1,j,k} - Z_{i-2,j,k}) - g_Ze Z_{i,j,k} + \frac{he}{d}Y_{j,k}.   \\
\end{split}
\end{align}
These equations describe a system with $K$ slow large-scale variables $X$, each of which coupled to $J$ faster and smaller-scale variables $Y$, which in turn are coupled to $I$ very fast small-scale variables $Z$. Parameters $h$, $b$, $c$, $e$ and $d$ determine the coupling strength between different scales, $F$ is the external forcing strength and $g_Z$ is a damping parameter.
It has previously been used to assess the capability of different machine learning models to predict future DS states \citep{chattopadhyay2020data}, some of which -- like RCs and LSTMs -- also included in this work. \citet{chattopadhyay2020data}
assess forecasting abilities on a subset of the dynamical variables,
using $I=J=K=8$, which results in a $584$-dimensional system, with parameters $b=c=e=d=g_Z=10$, $h=1$, and forcing $F=20$. This places the system into a highly chaotic regime. 
However, the authors assumed that only the $K=8$ slow, large-scale variables $X$ are observed, i.e. all methods were trained on partial observations from the $584$-dimensional system.
To enable direct comparison, we trained N-ODE, dendPLRNN and the shPLRNN on the dataset provided within the authors' codebase. We find that all tested methods (N-ODE, shPLRNN+GTF, dendPLRNN+id-TF) produce both accurate short-term forecasts of the partially observed system,\footnote{See \citet{chattopadhyay2020data} for performance of RCs and LSTMs.} as well as statistically indistinguishable reconstructions of the long-term behavior 
(all within a $10\%$ median absolute deviation and $34\%$ SEM margin; Kruskal-Wallis test $p=0.84$; see Tab. \ref{tab:reduced-mslor96} and Figs. \ref{fig:reduced-mslor96-longterm}, \ref{fig:red_mslor96_forecast_traces}, \ref{fig:reduced-mslor96-shortterm}).

\begin{figure}[!ht]
    \centering
	\includegraphics[width=0.8\linewidth]{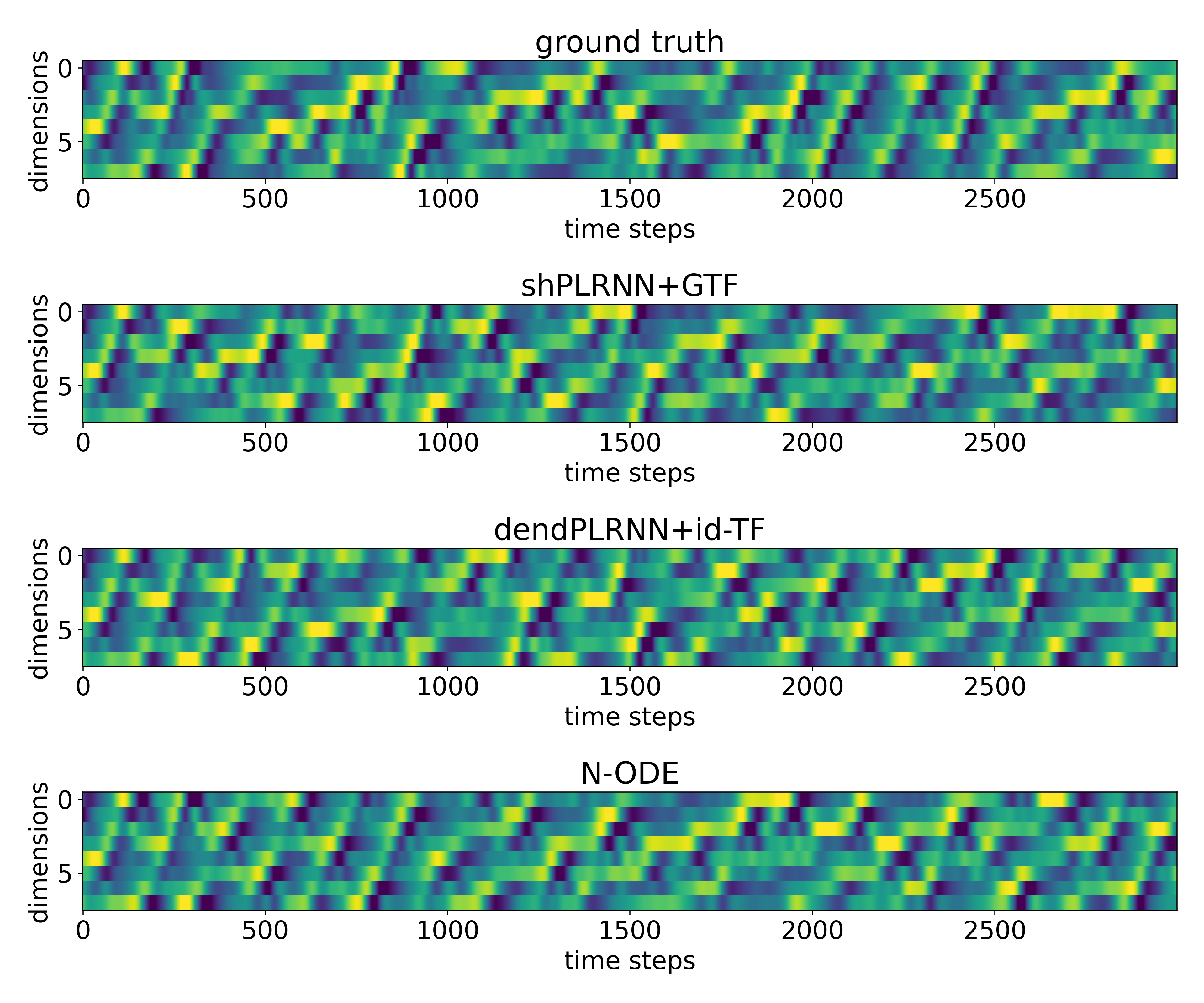}
	\caption{Long term spatio-temporal behavior for the three best performing methods trained on the partially observed multiscale Lorenz-96 model from \citet{chattopadhyay2020data}.}
	\label{fig:reduced-mslor96-longterm}
\end{figure}
\begin{figure}[!h]
    \centering
	\includegraphics[width=0.99\linewidth]{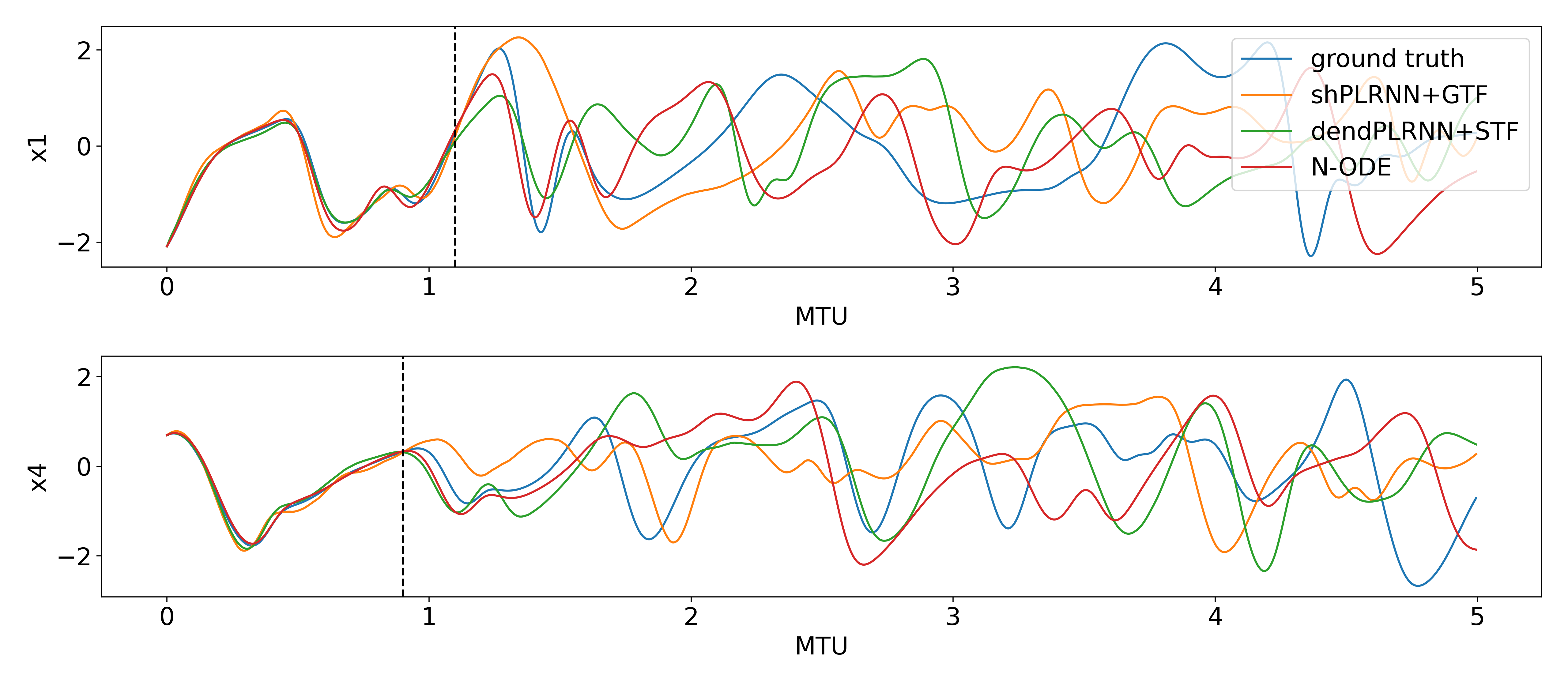}
	\caption{For direct comparison with Fig. 5 in \citet{chattopadhyay2020data}: Short-term forecasts on the partially observed multiscale Lorenz-96 system for the three best performing methods, starting from the first time step of the test set. Time is in model time units (MTU), where $1$ MTU$=200 \Delta t \approx 4.5/\lambda_{max}$. 
 The vertical line indicates the approximate prediction horizon, which for chaotic systems is limited by the system's maximum Lyapunov exponent. }
	\label{fig:red_mslor96_forecast_traces}
\end{figure}
\begin{figure}
    \centering
	\includegraphics[width=0.8\linewidth]{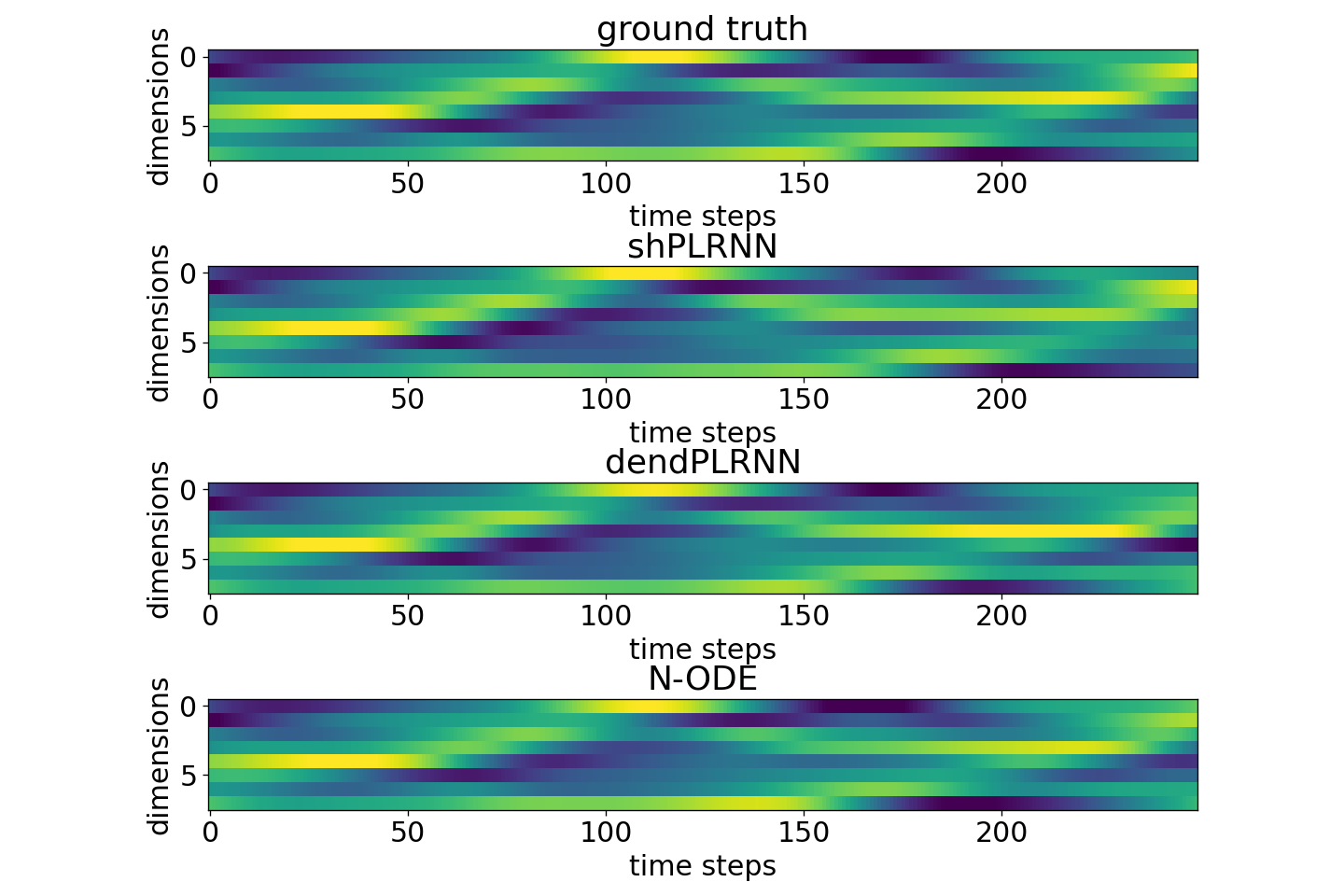}
	\caption{Short term spatio-temporal forecasts for the three best performing methods trained on the partially observed multiscale Lorenz-96 model from \citet{chattopadhyay2020data}.}
	\label{fig:reduced-mslor96-shortterm}
\end{figure}
\begin{table*}[!h]
\centering
\definecolor{Gray}{gray}{0.925}
\caption{Results for the partially observed multiscale Lorenz-96 system. Reported values are median $\pm$ median absolute deviation (MAD) over $20$ independent training runs. `dim' refers to the model's state space dimensionality (number of dynamical variables). $\lvert \bm{\theta} \rvert$ denotes the total number of \textit{trainable} parameters.}
\scalebox{0.90}{
\begin{tabular}{l l c c c c c}
        \toprule
        Dataset	&	Method	&	$D_{\textrm{stsp}}$ $\downarrow$ & $D_H$ $\downarrow$  & $\textrm{PE}(20)$ $\downarrow$	&  dim & $\lvert \bm{\theta} \rvert$ \\
        \midrule
        \multirow{3}{4em}{multiscale Lorenz-96}
        &    shPLRNN + GTF & $({6.1}\pm{1.1}) \cdot 10^{-2}$ & $({7.3}\pm{0.4}) \cdot 10^{-2}$ & $({5.7}\pm{0.6}) \cdot {10}^{{-3}}$ & $8$ & $1780$ \\
        &    dendPLRNN + id-TF      & $({6.3}\pm{1.4})\cdot 10^{-2}$ & $({6.9}\pm{0.4})\cdot 10^{-2}$ & $({3.4}\pm{1.6}) \cdot {10}^{{-3}}$ & $25$ &  $1845$ \\
        &    Neural-ODE      & $({6.2}\pm{0.3})$ $\cdot 10^{-2}$ & $({7.8}\pm{0.3})$ $\cdot 10^{-2}$ & $(4.6 \pm 0.3) \cdot 10^{-3}$ & $8$ & $1708$ \\
    
        \bottomrule
\end{tabular}
}
\label{tab:reduced-mslor96}
\end{table*}

\paragraph{Empirical Dataset: ECG}\label{appx:ecg}
The ECG data used here consists of a single time series with $T=419,973$ time points. Given a sampling frequency of $\si{700 \hertz}$, this corresponds to $\si{600 \second}$ of recording time. We first preprocessed the ECG data by smoothing the time series using a Gaussian filter ($\sigma = 6$, $l = 8\sigma + 1 = 49$), followed by standardization of the time series. We then delay-embed the signal using the PECUZAL algorithm \citep{kramer2021unified} implemented in the \href{https://juliadynamics.github.io/DynamicalSystems.jl/latest/}{DynamicalSystems.jl} \citep{Datseris2018} Julia library. The algorithm uses the $L$-statistic and non-uniform delays to find an optimal delay embedding (for details, see \citet{kramer2021unified} or the \href{https://juliadynamics.github.io/DelayEmbeddings.jl/stable/unified/#Unified-optimal-embedding}{documentation of the algorithm}). We set $\Delta L=0.05$, and use a Theiler window based on the first minimum of the mutual information, leading to an embedding dimension of $m=5$. For our experiments, we use the first $T=100,000$ samples ($\approx  \si{143 \second}$) as the training set and the next $T=100,000$ samples as the test set. 
The maximum Lyapunov exponent
is estimated as $\lambda_{max} = \left(2.19 \pm 0.05\right) \frac{1}{\si{\second}}$ by fitting a line to the average log-distance $log(d(t))$ between trajectories evolving from neighboring states for different embedding dimensions $m$ 
(Fig. \ref{fig:ecg_lyapunov}; cf. \citet{kantz1994robust, skokos2016chaos}), and agrees well with the literature \citep{govindan1998evidence}.
\begin{figure}
    \centering
    \includegraphics[width=0.6\textwidth]{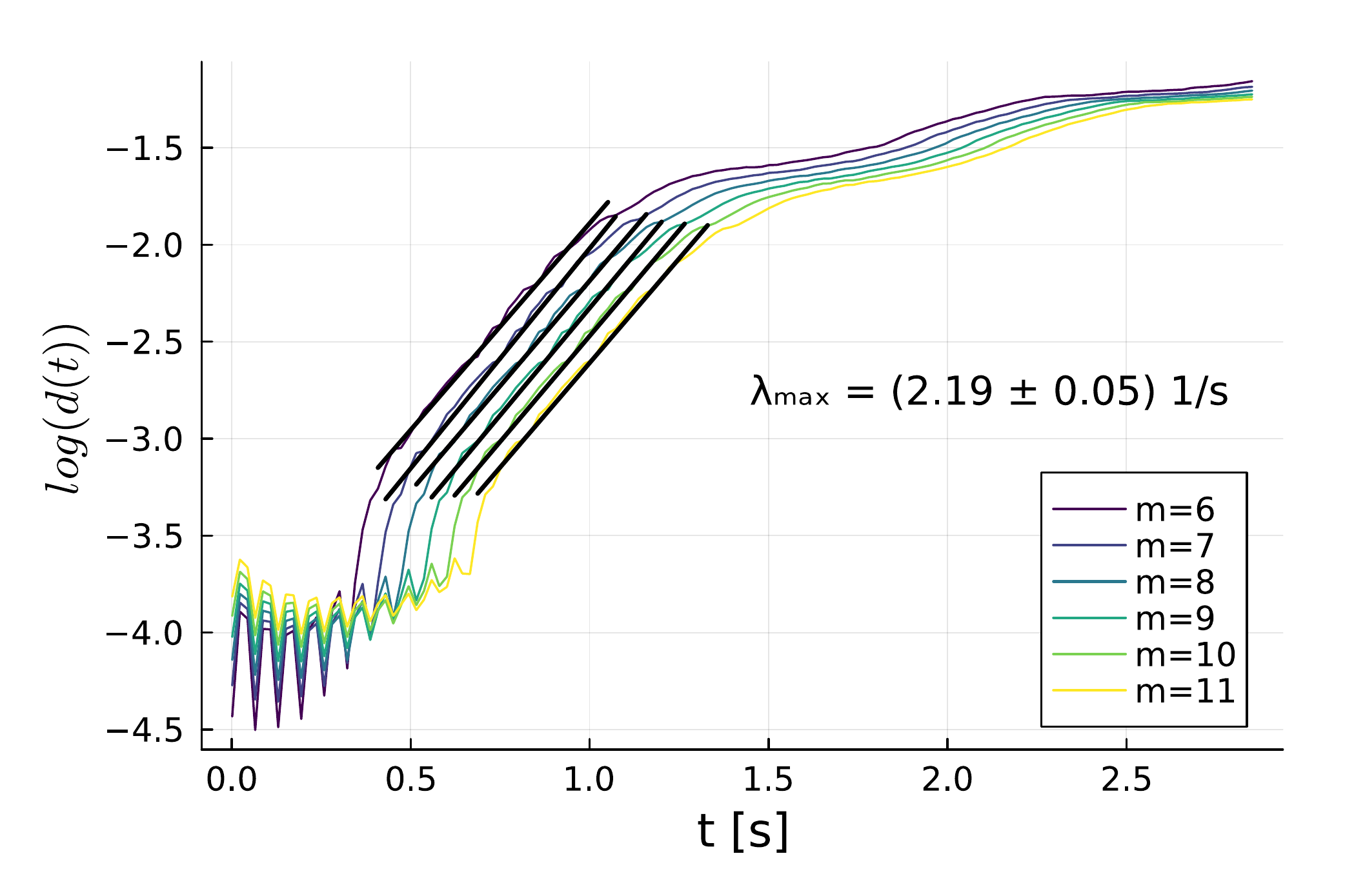}
    \caption{Estimation of the maximum Lyapunov exponent of the ECG time series for different embedding dimensions $m$ and fixed delay $\tau=59$. The delay was determined as the first minimum of the mutual information of the original time series.}
    \label{fig:ecg_lyapunov}
\end{figure}

\paragraph{Empirical Dataset: EEG}\label{pg:supp:EEG} 

The Electroencephalogram (EEG) dataset used here is the same as in \citet{brenner22} and \citet{Mikhaeil22} (the latter authors using a different preprocessing). The dataset consists of recordings from $64$ electrodes placed across the scalp of a human subject sitting still in a chair with eyes open (first subject performing task $1$ labeled ``S001R01''). The EEG 
sampling frequency was $160 \si{\hertz}$. These data are part of a larger study conducted by \citet{schalk_bci2000_2004}, openly available at \href{https://physionet.org/content/eegmmidb/1.0.0/}{PhysioNet} \citep{goldberger2000physiobank}. Here the data was preprocessed as in \citet{brenner22}, standardizing and smoothing each signal using a Hann filter with window length $15$ time bins. 
For computing the invariant (long-term) statistics $D_{\textrm{stsp}}$ and $D_H$ on the EEG data, tested models were (as for all other comparisons) simulated freely starting from just a data-inferred initial condition, but EEG reference values were taken from the training data since EEG time series were much shorter than ECG series (less than $10,000$ time steps for EEG compared to more than $400,000$ for ECG). 
Robust and reliable estimation of long-term properties like power spectra and attractor geometries would thus have been difficult for only a short (left-out) fraction of that time series. Apart from the fact that this was of course handled equally for all methods tested, however, note that this should not affect our measures $D_{\textrm{stsp}}$ and $D_H$ 
much, since (by definition) these long-term properties would not be expected to change on shorter time scales. 
To also make predictions more challenging under these conditions (without separate left-out set), we chose a relatively large 
prediction step $n$. For STF, the predictabiliy time for the EEG data was directly taken from \citet{Mikhaeil22}. However, reliably determining the predictability time from data is exactly one of the issues with STF as proposed, and 
while we did not perform systematic grid search on the forcing interval for STF, we observed that other intervals could improve performance for shPLRNN+STF. A more systematic comparison between GTF and STF therefore still warrants further research. 

\subsection{Details on Evaluation Measures}\label{appx:eval_measures}
\paragraph{Geometrical measure ($D_{\textrm{stsp}}$)}\label{appx:dstsp}
Given $p(\vx)$, generated by trajectories of the true system, and $q(\vx)$, generated by the model, the state space divergence
is defined as 
\begin{align}\label{eq:D_stsp}
    D_{\textrm{stsp}} := D_{\textrm{KL}}\left(p(\vx) \ || \ q(\vx)\right) = \int_{\vx \in \mathbb{R}^N} p(\vx) \log\frac{p(\vx)}{q(\vx)} d\vx
\end{align} 
For low dimensional observation spaces, $p(\vx)$ and $q(\vx)$ can be estimated directly from a binning of space \citep{koppe2019identifying, brenner22}, with the minimum-to-maximum range for binning determined by the extent of the observed (ground truth) attractor. Eq. \eqref{eq:D_stsp} is then approximately given by 
\begin{align}
    D_{\textrm{stsp}} = D_{\textrm{KL}}\left(\hat{p}(\vx) \ || \ \hat{q}(\vx)\right) \approx \sum_{k=1}^{K} \hat{p}_k(\vx) \log\frac{\hat{p_k}(\vx)}{\hat{q_k}(\vx)}
\end{align}
where $K=m^N$ is the total number of bins, with $m$ the number of bins per dimension, $\hat{p}_k(\vx)$ is the relative
number of 
ground truth orbit data points in bin $k$ and, likewise, $\hat{q}_k(\vx)$ that for generated orbits. For high-dimensional systems, a binning approach is no longer sensible. Instead, Gaussian Mixture Models (GMMs) were placed along orbits (see \cite{brenner22}), i.e. $\hat{p}(\vx) = 1/T\sum_{t=1}^T \mathcal{N}(\vx;\ \vx_t, \mSigma)$ and $\hat{q}(\vx) = 1/T\sum_{t=1}^T \mathcal{N}(\vx; \ \hat{\vx}_t, \mSigma)$, where $\vx_t$ and $\hat{\vx}_t$ are observed and generated states, respectively, $\mathcal{N}(\vx; \ \vx_t, \mSigma)$ is a multivariate Gaussian with mean vector $\vx_t$ and covariance matrix $\mSigma = \sigma^2 \mathbb{1}_{N \times N}$, and $T$ is the orbit length. \citet{hershey2007approximating} provide a Monte Carlo approximation to the KL divergence between two GMMs, which is given by
\begin{align}
D_{\textrm{stsp}} = D_{\textrm{KL}}\left(\hat{p}(\vx) \ || \ \hat{q}(\vx)\right) \approx \frac{1}{n} \sum_{i=1}^n \log \frac{\hat{p}(\vx^{(i)})}{\hat{q}(\vx^{(i)})},
\end{align}
with $n$ Monte Carlo samples $\vx^{(i)}$ randomly drawn from the GMM based on observed orbits. We follow \citet{brenner22} in using $m = 30$ for the binning approach and $\sigma^2 = 1.0$ for the GMM approach. 

\paragraph{Temporal Measure ($D_H$)}\label{appx:hellinger_distance} Given power spectra $f_i(\omega)$ and $g_i(\omega)$ of the $i$-th dynamical variable of the observed and reconstructed orbits, respectively, with $\int_{-\infty}^\infty f_i(\omega) d\omega = 1$ and $\int_{-\infty}^\infty g_i(\omega) d\omega = 1$, the Hellinger distance is given by
\begin{align}\label{eq:hellinger_distance}
    H(f_i(\omega), g_i(\omega))= \sqrt{1-\int_{-\infty}^{\infty}\sqrt{f_i(\omega)g_i(\omega)}\ d\omega} \
\end{align} 
In practice, we approximate power spectra by performing a Fast Fourier Transform (FFT; \citet{cooley_algorithm_1965}) yielding $\hat{\vf}_i = \rvert\mathcal{F}x_{i, 1:T}\lvert^2$ and $\hat{\vg}_i = \lvert\mathcal{F}\hat{x}_{i, 1:T}\rvert^2$, with vectors $\hat{\vf}_i$ and $\hat{\vg}_i$ 
discrete 
power spectra of ground truth traces $x_{i, 1:T}$ and model generated traces $\hat{x}_{i, 1:T}$. Power spectra are then smoothed using a Gaussian filter with standard deviation $\tilde{\sigma} = 20$ and window length $l=8\sigma + 1$.
Before computing the Hellinger distance, the power spectra are normalized to fulfill $\sum_\omega \hat{f}_{i, \omega} = 1$ and $\sum_\omega \hat{g}_{i, \omega} = 1$. Finally, $H$ \eqref{eq:hellinger_distance} is computed as 
\begin{align}
    H(\hat{\vf}_i, \hat{\vg}_i) = \frac{1}{\sqrt{2}} \norm{\sqrt{\hat{\vf}_i}-\sqrt{\hat{\vg}_i}}_2,
\end{align}
where the square root is applied elementwise. The final measure $D_H$ is obtained by averaging $H$ across all dimensions:
\begin{align}
    D_H = \frac{1}{N} \sum_{i=1}^N H(\hat{\vf}_i, \hat{\vg}_i) . 
\end{align}

\paragraph{Prediction Error}\label{appx:prediction_error}
We compute an $n$-step prediction error as the MSE between ground truth data and $n$-step ahead predictions of the model:
\begin{align}
    \textrm{PE}(n) = \frac{1}{N(T-n)} \sum_{t=1}^{T-n} \left\lVert\vx_{t+n} - \hat{\vx}_{t+n}\right\rVert_2^2 .
\end{align}
Note that for chaotic systems, due to the exponential divergence of trajectories, the invariant statistics ($D_{\textrm{stsp}}$, $D_{H}$) and the $n$-step prediction error may dissociate, as illustrated in \citet{koppe2019identifying} and \citet{schmidt2021identifying}.

\paragraph{Evaluation Setup} For computing $D_{\textrm{stsp}}$ and $D_{H}$, we first draw a single long orbit of the generative model at hand (i.e. either the shPLRNN or any of the comparison methods employed, cf. Table \ref{tab:SOTA_comparison}) of length $1.25 \cdot T$, where $T$ is the total length of the available (test) data. We then discard the first $0.25\cdot T$ time steps of the model-generated orbit to make sure the measures are evaluated on the limit sets and not on transients of the dynamics.\footnote{This is important, since otherwise erroneously a good reconstruction may be indicated, while truly the reconstructed system may converge to a limit set topologically different from that of the true system, e.g. an equilibrium rather than a chaotic attractor.}
We then use the remaining $T$ time steps to compute both measures. The initial condition for the model is determined from
the first time step of the observed ground truth orbit. For comparison methods requiring a dynamical warm-up phase, such as RC, multiple time steps of the ground truth orbit are provided. The prediction error ($\textrm{PE}(n)$) is computed across the entire test set.

\subsection{GTF and RNN architecture}\label{appx:ablation}
In theory, GTF is independent of the specific RNN architecture or, more generally, map $\mF_{\theta}$ employed. In practice, we observed training the dendPLRNN \cite{brenner22} with GTF does not lead to similarly strong performance boosts over STF-based training as observed for the shPLRNN (Tab. \ref{tab:ablation}), implying that certain RNN designs like the shPLRNN might be more amenable to GTF. 
This could be because for the dendPLRNN we do not have a 1:1 relation between observations and latent states as for the shPLRNN (rather, we need to go to higher dimensions, cf. Tab. \ref{tab:SOTA_comparison}). This makes implementation of GTF for the dendPLRNN less straightforward, since - unlike for the shPLRNN - the mapping onto latent states is underdetermined. For the experiments in Tab. \ref{tab:ablation}, we used an 
implementation of GTF similar to id-TF \cite{brenner22}, only forcing the dendPLRNN's first $N$ latent states and leaving the remaining $M-N$ states unforced. Alternatively, one may use inversion of a trainable linear observation model for projecting the teacher signal into the model's latent space \cite{Mikhaeil22}. 
However, this does not resolve the underdeterminacy (on the contrary, it seemed to even exacerbate the problem, possibly because the additional degrees of freedom introduced by the trainable projection operator might make the GTF forcing even less tight). 
Hence, it needs to be concluded that the shPLRNN also appears to bear a specific architectural advantage, while for other models 
the best way for implementing GTF needs more consideration. 
We also observed that replacing the ReLU activation of the shPLRNN by $\tanh$ diminished performance (Tab. \ref{tab:ablation}). 

\begin{table*}[!h]
\centering
\definecolor{Gray}{gray}{0.925}
\caption{Ablation study. Reported values are median $\pm$ median absolute deviation (MAD) over $20$ independent training runs. `dim' refers to the model's state space dimensionality (number of dynamical variables). $\lvert \bm{\theta} \rvert$ denotes the total number of \textit{trainable} parameters. 
Values for shPLRNN+GTF are copied 
from Tab. \ref{tab:SOTA_comparison}.}
\scalebox{0.99}{
\begin{tabular}{l l c c c c c}
        \toprule
        Dataset	&	Method	&	$D_{\textrm{stsp}}$ $\downarrow$ & $D_H$ $\downarrow$  & $\textrm{PE}(20)$ $\downarrow$	&  dim & $\lvert \bm{\theta} \rvert$ \\
        \midrule
        \multirow{4}{4em}{ECG (5d)}
        &    shPLRNN + GTF  & $\textbf{4.3}\pm\textbf{0.6}$ & $\textbf{0.34}\pm\textbf{0.02}$ & $(\textbf{2.4}\pm\textbf{0.1}) \cdot \textbf{10}^{\textbf{-3}}$ & $5$ & $2785$ \\
        &    shPLRNN + GTF ($tanh$)      & $7.8 \pm 2.1$ & $0.37 \pm 0.03$ & $(8.6 \pm 0.3) \cdot 10^{-3}$ & $5$ & $2785$ \\
        &    dendPLRNN + GTF      & $5.2\pm1.2$ & $0.34\pm0.02$ & $(5.5\pm1.3) \cdot 10^{-3}$ & $35$ &  $3245$ \\
        &    dendPLRNN + STF      & $5.8\pm0.6$ & $0.37\pm0.06$ & $(4.0\pm0.4) \cdot 10^{-3}$ & $35$ &  $3245$ \\

        \midrule
        \multirow{2}{4em}{EEG (64d)}
        &    shPLRNN + GTF        & $\textbf{2.1}\pm\textbf{0.2}$ & $\textbf{0.11}\pm\textbf{0.01}$ & $(\textbf{5.5}\pm\textbf{0.1}) \cdot \textbf{10}^{\textbf{-1}}$ & $16$ & $17952$  \\
        &    shPLRNN + GTF ($tanh$) & $17 \pm 3$ & $0.37 \pm 0.07$ & $({6.6}\pm{0.1}) \cdot {10}^{{-1}}$  & $16$ & $17952$  \\
        \bottomrule
\end{tabular}
}
\label{tab:ablation}
\end{table*}

\subsection{More Details on Comparison Methods}\label{appx:sota_settings}
\begin{table*}[!ht]
\definecolor{Gray}{gray}{0.925}
\caption{SOTA comparisons on the Lorenz-63 and Lorenz-96 benchmark systems. Reported values are median $\pm$ median absolute deviation 
over $20$ independent training runs. `dim' refers to the model's state space dimensionality (number of dynamical variables). $\lvert \bm{\theta} \rvert$ denotes the total number of \textit{trainable} parameters. Note that SINDy has an inbuilt advantage for these two benchmarks over all other methods: Both the Lorenz-63 and Lorenz-96 ODEs are second-order polynomials, and SINDy's polynomial function library included terms up to second order as well. Hence, all that SINDy needs to do in these cases is to determine the appropriate parameters 
(rather than approximating the underlying system). While the library method may be an advantage for systems with known functional form (as in these cases), it may become a severe disadvantage if no detailed structural knowledge is available (as commonly the case in complex empirical scenarios like those evaluated in Table \ref{tab:SOTA_comparison}).}
\centering
\scalebox{0.99}{
\begin{tabular}{l l c c c c c}
        \toprule
        Dataset	&	Method	&	$D_{\textrm{stsp}}$ $\downarrow$ & $D_H$ $\downarrow$  & $\textrm{PE}(20)$ $\downarrow$	&  dim & $\lvert \bm{\theta} \rvert$ \\
        \midrule
        \multirow{7}{4em}{Lorenz-63 (3d)}
        &    shPLRNN + GTF & $\textcolor{blue}{\textbf{0.26}}\pm\textcolor{blue}{\textbf{0.03}}$  & $\textcolor{blue}{\textbf{0.090}}\pm\textcolor{blue}{\textbf{0.007}}$ & $(\textcolor{blue}{\textbf{6.0}}\pm\textcolor{blue}{\textbf{0.5}}) \cdot \textcolor{blue}{\textbf{10}}^{\textcolor{blue}{\textbf{-4}}}$ & $3$ & $365$  \\
        &    dendPLRNN + id-TF     & $0.9\pm0.2$  & $0.15\pm0.03$ & $(2.2\pm0.7) \cdot 10^{-3}$ & $10$ & $361$ \\
        &	 {RC}	            & $0.52\pm0.12$ &	$0.19\pm0.04$ & $(5\pm2) \cdot 10^{-3}$ & $201$ & $603$ \\
        &	 {LSTM-TBPTT}	            & $0.46\pm0.22$   &	$0.11\pm0.03$ & $(1.1\pm0.3) \cdot 10^{-3}$  & $30$ & $1188$ \\
        &	 {SINDy}	            & $\textbf{0.24}\pm\textbf{0.00}$    & $\textbf{0.091}\pm\textbf{0.000}$ & $(\textbf{6.1}\pm\textbf{0.0}) \cdot \textbf{10}^{\textbf{-4}}$ & $3$ & $30$  \\
        &	 {N-ODE}	        & $0.63\pm0.2$  & $0.15\pm0.05$ & $(2.3\pm0.3) \cdot 10^{-3}$ & $3$ & $353$  \\
        &	 {LEM}	        & $0.39\pm0.24$  & $0.12\pm0.05$ & $(6.0\pm0.9) \cdot 10^{-3}$ & $14$ & $360$  \\

        \midrule
        \multirow{7}{4em}{Lorenz-96 (20d)}
        &    shPLRNN + GTF      & $1.68\pm0.06$  & $\textcolor{blue}{\textbf{0.072}}\pm\textcolor{blue}{\textbf{0.001}}$ & $(1.21\pm0.02) \cdot 10^{-1}$ & $20$ & $4540$  \\
        &    dendPLRNN + id-TF     & $\textcolor{blue}{\textbf{1.65}}\pm\textcolor{blue}{\textbf{0.05}}$  & $0.083\pm0.005$ & $(\textcolor{blue}{\textbf{1.1}}\pm\textcolor{blue}{\textbf{0.1}}) \cdot \textcolor{blue}{\textbf{10}}^{\textcolor{blue}{\textbf{-1}}}$ & $60$ & $5740$ \\
        &	 {RC}	            & $2.40\pm0.15$ & $0.14\pm0.02$ &  $(4.9\pm0.4) \cdot 10^{-1}$ & $600$ & $12000$ \\
        &	 {LSTM-TBPTT}	            	 & $5\pm1$ & $0.31\pm0.04$ & $(1.14\pm0.04) \cdot 10^{0}$ & $80$ & $10580$ \\
        &	 {SINDy}	            	 & $\textbf{1.59} \pm \textbf{0.00}$ & $\textbf{0.06}\pm\textbf{0.00}$ & $(\textbf{4.6}\pm\textbf{0.0}) \cdot \textbf{10}^{\textbf{-3}}$ & $20$ & $4620$ \\
        &	 {N-ODE}	        &  $1.77\pm0.07$   &  $0.076\pm0.01$   &  $(2.5\pm0.02) \cdot 10^{-1}$ & $20$ & $4530$  \\
        &	 {LEM}	        &  $7.2\pm2.3$   &  $0.54\pm0.13$   &  $(1.3\pm0.06) \cdot 10^{0}$ & $46$ & $4620$ \\
        \bottomrule
\end{tabular}
}
\label{tab:SOTA_comparison_benchmark}
\end{table*}
\paragraph{SINDy} For SINDy we used the \href{https://github.com/dynamicslab/pysindy}{PySINDy} package \citep{desilva2020, Kaptanoglu2022} with the sequentially thresholded least squares (STLSQ) optimizer. We scanned the threshold hyperparameter, which determines the sparsity of the final solution (the higher the threshold, the sparser the solution), in the range $\left[0, \ 1\right)$. Furthermore, since all employed datasets are noisy, we used PySINDy's \texttt{SmoothedFiniteDifference} for empirical vector field estimates, leading to more robust estimates. For the Lorenz-63 and Lorenz-96 systems, we used a polynomial basis library (\texttt{PolynomialLibrary}) up to
order $2$. Note that this essentially means SINDy only needs to figure out the right values of the parameters for these problems, as its set of equations is almost the same as in the ground truth systems to begin with. For the empirical ECG and EEG data, we experimented with terms up to $5$-th order and also tried a Fourier basis (\texttt{FourierLibrary}), but did not manage to obtain any solution which would not quickly diverge during numerical integration. 
Orbits for the Lorenz-63 and Lorenz-96 systems were drawn using the \texttt{LSODA} integrator with absolute tolerance $10^{-4}$ and relative tolerance $10^{-6}$.

\paragraph{RC and LSTM} For RC and LSTM we used the \href{https://github.com/pvlachas/RNN-RC-Chaos}{official repository} provided by the authors \citep{pathak2018model, vlachas2018data, vlachas2020backpropagation}. For RC \citep{pathak2018model}, we searched for optimal hyperparameters for \{\texttt{dynamics\_length}, \ \texttt{regularization}, \ \texttt{noise\_level}\}. For LSTMs trained with truncated BPTT \citep{vlachas2018data}, we determined optimal hyperparameters for \{\texttt{hidden\_state\_propagation\_length}, \ \texttt{regularization}, \ \texttt{learning\_rate}, \  \texttt{noise\_level}, \ \texttt{sequence\_length}\}.

\paragraph{dendPLRNN + id-TF} For comparison to the method in \citet{brenner22}, we used the \href{https://github.com/DurstewitzLab/dendPLRNN}{corresponding repository}. We scanned across different values for the sparse TF interval $\tau$ (\texttt{teacher\_forcing\_interval}) and sequence length (\texttt{seq\_len}), biased by the values reported in \citet{brenner22}.

\paragraph{LEM} For comparison to the method proposed in \citet{rusch2022lem}, we used code provided by the authors on their public \href{https://github.com/tk-rusch/LEM}{respository}. The
size of the network was determined such that
a comparable number of trainable parameters was obtained as for the other methods, while optimal hyperparameters for the learning rate and time constant $\Delta t$ were selected via grid search.

\paragraph{Neural ODE}
For comparison to N-ODEs \cite{chen2018neural}, we used the implementation provided in the \texttt{torchdiffeq} package. As hyperparameters we considered the employed activation function \{$relu, tanh\}$ and the sequence length $\in \{1,5,10,25,50\}$ used per minibatch. We further tried several fixed-step numerical solvers (\texttt{rk4},  \texttt{euler}, \texttt{midpoint}), which had little influence on the results, while an adaptive-step solver (\texttt{adaptive\_heun}) led to unacceptably long training times. 
We further probed the N-ODE variants Latent-ODE and ODE-RNN proposed in \cite{rubanova_latent_2019}, using the implementation provided on the authors' \href{https://github.com/YuliaRubanova/latent_ode}{github page}. The results in Table \ref{tab:node} imply that neither of these accomplishes more faithful
reconstructions of EEG data than `vanilla' N-ODE. In fact, the values for $D_{\textrm{stsp}}$ and $D_H$ obtained for ODE-based methods essentially all reflect ``chance level'' in the sense that the resulting long-term dynamics bears no similarity
with the one observed.

We stress that most of these methods achieved fairly good reconstruction results on the simulated benchmarks (i.e., the various Lorenz systems introduced in Appx. \ref{appx:datasets}), see Tables \ref{tab:reduced-mslor96} and \ref{tab:SOTA_comparison_benchmark}. 
They may also be competitive in their short-term forecasts on the empirical data (EEG \& ECG, see Table \ref{tab:SOTA_comparison} and Fig. \ref{fig:eeg_forecast}).
However, as Figs. \ref{fig:eeg_reconstructions}, \ref{fig:eeg_reconstructions_heatmaps} and \ref{fig:ecg_reconstructions} demonstrate, unlike shPLRNN+GTF and dendPLRNN, they failed to reproduce the long-term behavior of the empirically observed systems, i.e. they failed to reconstruct underlying geometrical and invariant temporal properties as essential in this context. 

\begin{table*}[!h]
\centering
\definecolor{Gray}{gray}{0.925}
\caption{Results for Latent-ODE and ODE-RNN \cite{rubanova_latent_2019} on the EEG data. Reported values are median $\pm$ median absolute deviation (MAD) over $20$ independent training runs. `dim' refers to the model's state space dimensionality (number of dynamical variables). $\lvert \bm{\theta} \rvert$ denotes the total number of \textit{trainable} parameters. Values for shPLRNN+GTF and N-ODE are copied from Table \ref{tab:SOTA_comparison}.}
\scalebox{0.99}{
\begin{tabular}{l l c c c c c}
        \toprule
        Dataset	&	Method	&	$D_{\textrm{stsp}}$ $\downarrow$ & $D_H$ $\downarrow$  & $\textrm{PE}(20)$ $\downarrow$	&  dim & $\lvert \bm{\theta} \rvert$ \\
        \midrule
        \multirow{4}{4em}{EEG (64d)}
        &    shPLRNN + GTF        & $\textbf{2.1}\pm\textbf{0.2}$ & $\textbf{0.11}\pm\textbf{0.01}$ & $(\textbf{5.5}\pm\textbf{0.1}) \cdot \textbf{10}^{\textbf{-1}}$ & $16$ & $17952$  \\
        &    N-ODE    & $20 \pm 0.5$ & $0.47 \pm 0.01$ & $(5.5 \pm 0.2) \cdot 10^{-1}$ & $64$ & $17995$  \\
        &    Latent ODE      & $16.1 \pm 3$ & $0.47 \pm 0.02$ & $(5.6 \pm 0.2) \cdot 10^{-1}$ & 64 & 17915 \\
        &    ODE-RNN      & $13.9 \pm 2.1$ & $0.59 \pm 0.03$ & $(9.1 \pm 0.6) \cdot 10^{-1}$ & 64 & 17859  \\
        \bottomrule
\end{tabular}
}
\label{tab:node}
\end{table*}
\begin{figure}[!ht]
    \centering
    \begin{subfigure}[b]{.49\textwidth}      \includegraphics[width=\textwidth]{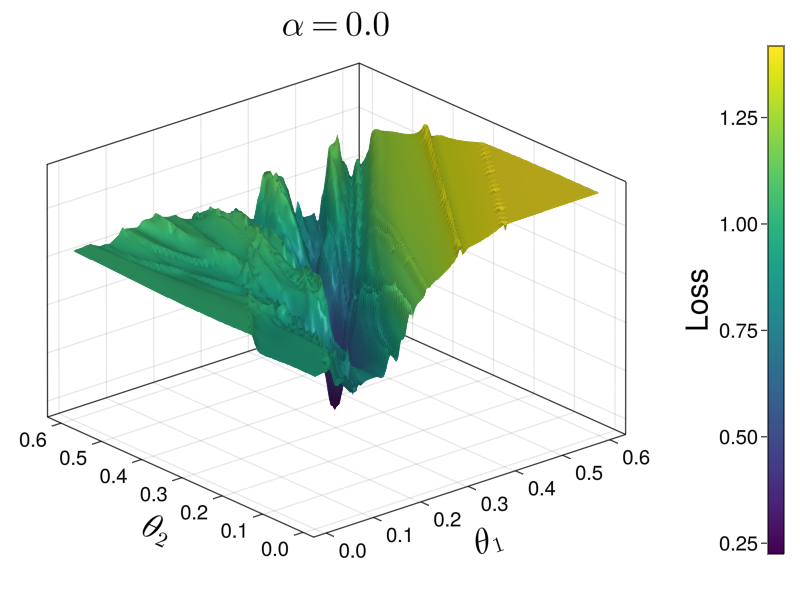}
    \end{subfigure}
    \begin{subfigure}[b]{.49\textwidth}    \includegraphics[width=\textwidth]{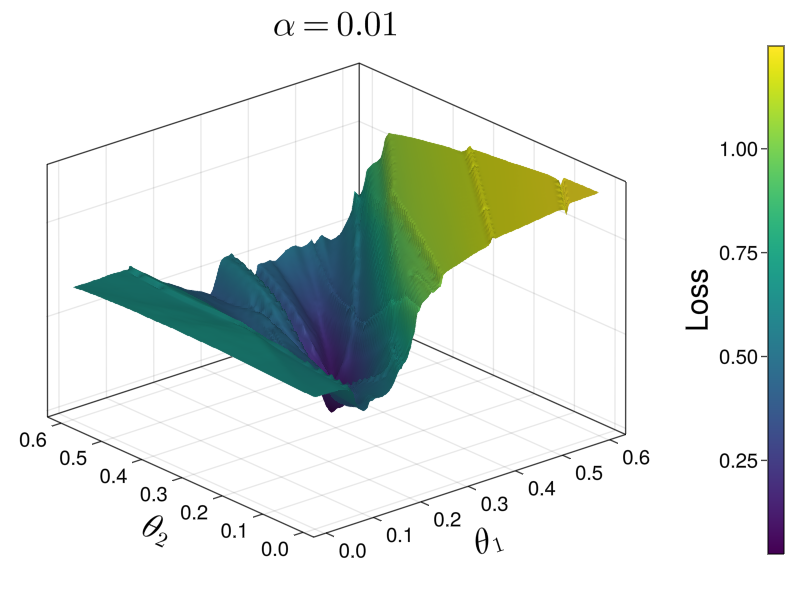}
    \end{subfigure}
        \begin{subfigure}[b]{.49\textwidth}     \includegraphics[width=\textwidth]{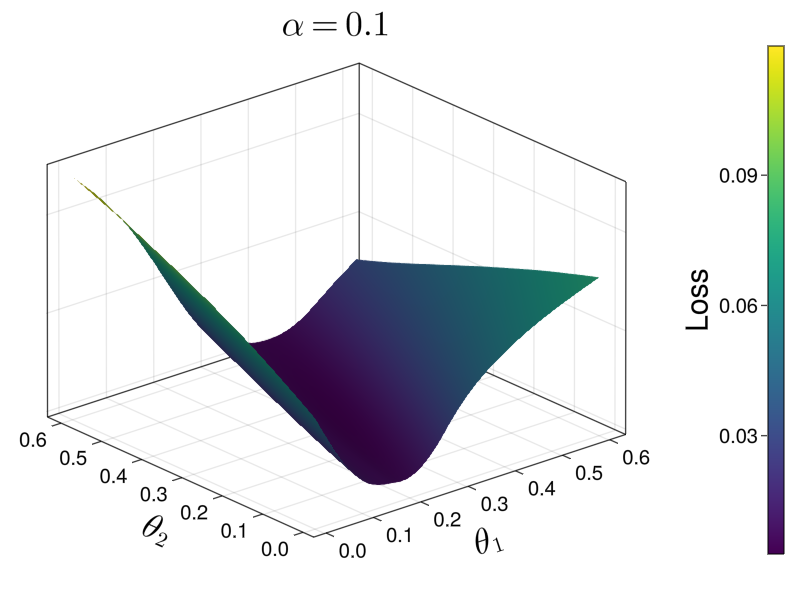}
    \end{subfigure}
    \caption{GTF smoothens loss landscapes. Loss as a function of two arbitrarily chosen ($\theta_1 = w^{(1)}_{1,2}$, $\theta_2 = w^{(1)}_{2,1}$) shPLRNN parameters. The shPLRNN \eqref{eq:bounded_shPLRNN} was first trained on the Lorenz-63 system for a couple of epochs ($\alpha=0.15$, $\Tilde{T}=200$, $S=16$), after which the loss was determined for various $\alpha$ values based on a random batch of the training data. For larger $\alpha$ the loss landscape smoothens out.}
    \label{fig:loss_landscape_smoothing}
\end{figure}

\begin{figure}[!htp]
    \centering
\includegraphics[width=0.99\linewidth]{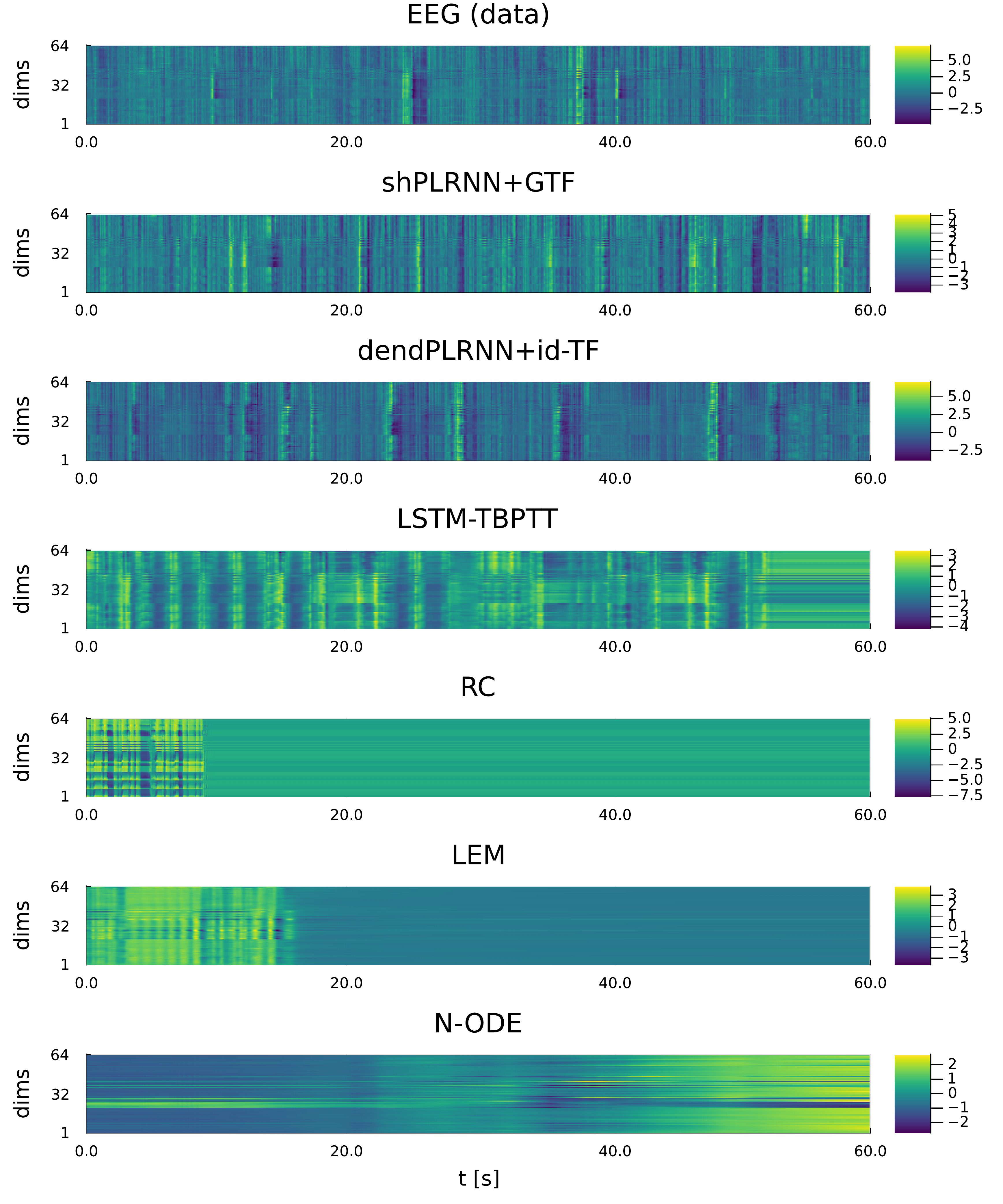}
    \caption{Example heatmaps of EEG reconstructions provided by 
    the methods employed in Table \ref{tab:SOTA_comparison}. We used the same models as for Fig. \ref{fig:eeg_reconstructions}. To make the heatmaps comparable, each channel was standardized for each method separately.}
    \label{fig:eeg_reconstructions_heatmaps}
\end{figure}
\begin{figure}[!htp]
    \centering
\includegraphics[width=0.75\linewidth]
{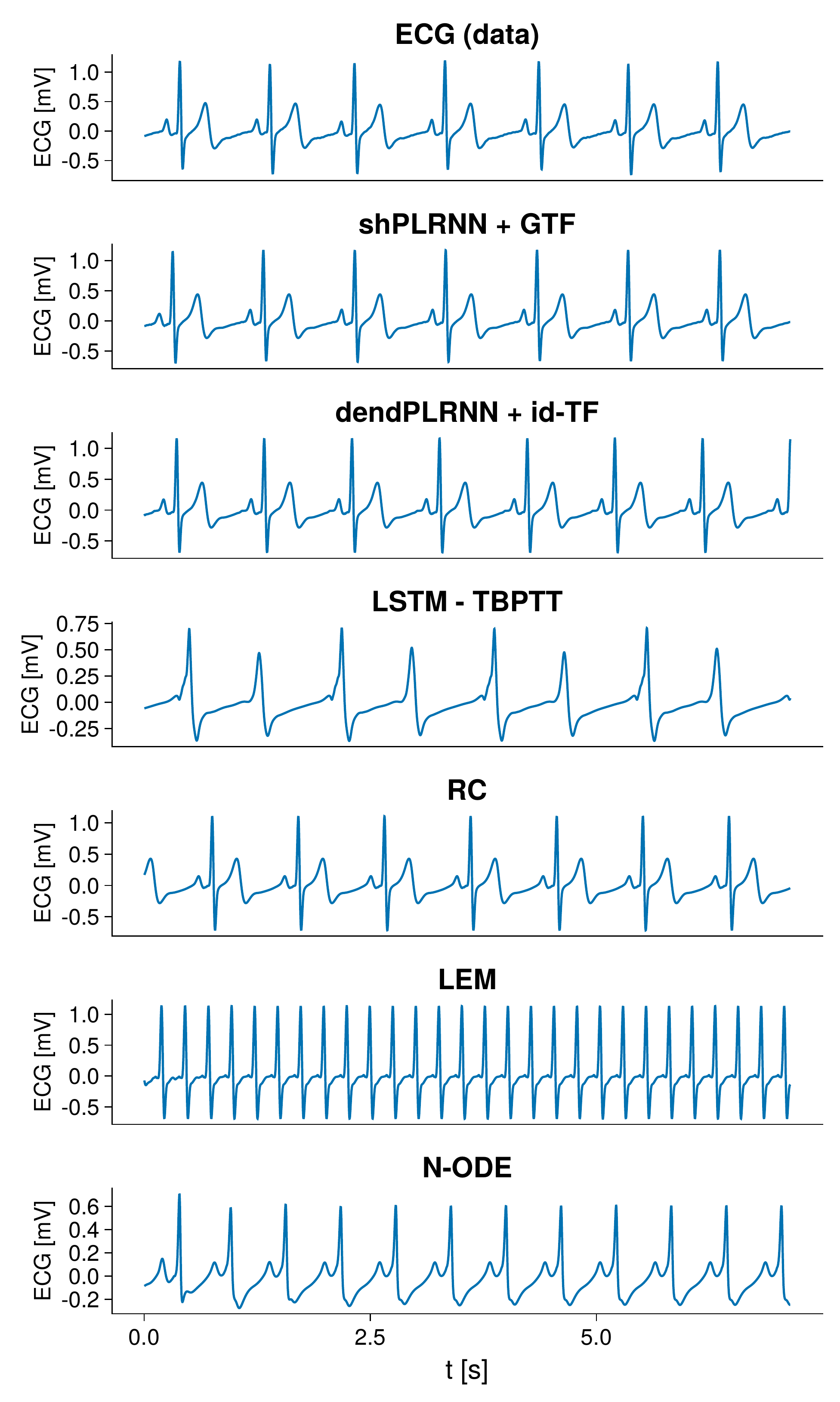}
    \caption{Example time traces of ECG reconstructions provided by 
    the methods employed in Table \ref{tab:SOTA_comparison}. For each method we picked the best reconstruction out of $20$.}
    \label{fig:ecg_reconstructions}
\end{figure}
\begin{figure}[!ht]
    \centering
\includegraphics[width=0.7\linewidth]{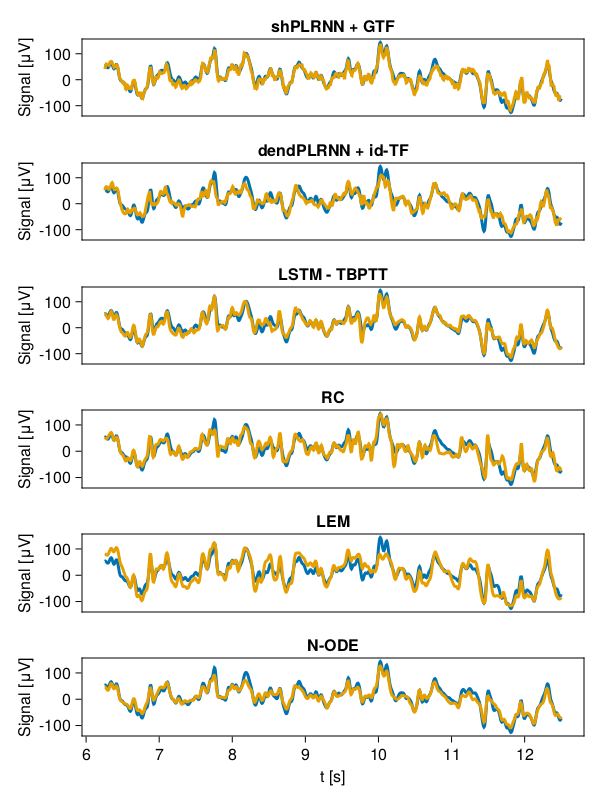}
    \caption{Model \textit{short-term} predictions: Excerpt of EEG time series (blue) vs. $5$-step-ahead predictions (yellow) for the different DS reconstruction models \& methods compared in this work.
    Note that essentially all methods provide reasonable short-term forecasts, yet most fail to produce non-trivial limiting dynamics (cf. Fig. \ref{fig:eeg_reconstructions}).}
    \label{fig:eeg_forecast}
\end{figure}

\end{document}